
\documentclass[table]{article}


\usepackage{amsmath,amsfonts,bm}









\def\eqref#1{(\ref{#1})}









\def\1{\bm{1}}








\def\vb{{\bm{b}}}

\def\vr{{\bm{r}}}

\def\vu{{\bm{u}}}
\def\vv{{\bm{v}}}

\def\vx{{\bm{x}}}
\def\vy{{\bm{y}}}


\def\mA{{\bm{A}}}

\def\mF{{\bm{F}}}

\def\mI{{\bm{I}}}

\def\mK{{\bm{K}}}
\def\mL{{\bm{L}}}

\def\mP{{\bm{P}}}
\def\mQ{{\bm{Q}}}

\def\mU{{\bm{U}}}

\def\mX{{\bm{X}}}

\DeclareMathAlphabet{\mathsfit}{\encodingdefault}{\sfdefault}{m}{sl}
\SetMathAlphabet{\mathsfit}{bold}{\encodingdefault}{\sfdefault}{bx}{n}













\usepackage{microtype}
\usepackage{graphicx}
\usepackage{booktabs} 
\usepackage{amsmath}
\usepackage{amssymb}
\usepackage{amsthm}
\usepackage{bm}
\usepackage{graphicx}
\usepackage[group-separator={,}]{siunitx}
\usepackage{caption}
\usepackage{subcaption}
\usepackage{scalerel}
\usepackage{stackengine,wasysym}
\usepackage{algorithm}
\usepackage{algorithmic}
\usepackage{enumitem}
\usepackage{siunitx}
\usepackage{multirow}
\usepackage{pifont}
\usepackage[flushleft]{threeparttable}
\usepackage{xcolor}

\usepackage{hyperref}


\definecolor{myred}{HTML}{BA4F27}
\definecolor{mylightred}{HTML}{dda793}
\definecolor{mygreen}{HTML}{00823D}
\definecolor{mylightgreen}{HTML}{80c19e}
\definecolor{myblue}{HTML}{2474B5}
\definecolor{tablelightblue}{HTML}{E1E8FF}
\definecolor{tableblue}{HTML}{AFC2FF}
\definecolor{mypurple}{HTML}{766BBB}
\definecolor{mylightpurple}{HTML}{bbb5dd}



\newcommand{\vth}{\bm{\theta}}
\newcommand{\rawxmark}{\ding{55}}
\newcommand{\xmark}{{\rawxmark}}
\newcommand{\rddagger}{{\color{myred}{\bm\star}}}
\newcommand{\rawcmark}{\ding{51}}
\newcommand{\cmark}{\textcolor{mygreen}{\rawcmark}}
\newcommand*\diff{\mathop{}\!\mathrm{d}}

\newcommand{\appropto}{\mathrel{\vcenter{
  \offinterlineskip\halign{\hfil$##$\cr
    \propto\cr\noalign{\kern2pt}\sim\cr\noalign{\kern-2pt}}}}}


\usepackage[accepted]{icml2024}

\usepackage{amsmath}
\usepackage{amssymb}
\usepackage{mathtools}
\usepackage{amsthm}

\usepackage[capitalize,noabbrev]{cleveref}

\theoremstyle{plain}
\newtheorem{theorem}{Theorem}[section]
\newtheorem{proposition}[theorem]{Proposition}
\newtheorem{lemma}[theorem]{Lemma}
\newtheorem{corollary}[theorem]{Corollary}
\theoremstyle{definition}
\newtheorem{definition}[theorem]{Definition}
\newtheorem{assumption}[theorem]{Assumption}
\theoremstyle{remark}
\newtheorem{remark}[theorem]{Remark}

\usepackage[textsize=tiny]{todonotes}

\icmltitlerunning{Preconditioning for Physics-Informed Neural Networks}

\begin{document}

\twocolumn[
\icmltitle{Preconditioning for Physics-Informed Neural Networks}



\icmlsetsymbol{equal}{*}

\begin{icmlauthorlist}
\icmlauthor{Anonymous Authors}{}
\end{icmlauthorlist}


\icmlcorrespondingauthor{Anonymous Author}{anon.email@domain.com}

\icmlkeywords{Machine Learning, ICML}

\vskip 0.3in
]



\printAffiliationsAndNotice{}  

\begin{abstract}
Physics-informed neural networks (PINNs) have shown promise in solving various partial differential equations (PDEs). However, training pathologies have negatively affected the convergence and prediction accuracy of PINNs, which further limits their practical applications. In this paper, we propose to use condition number as a metric to diagnose and mitigate the pathologies in PINNs. Inspired by classical numerical analysis, where the condition number measures sensitivity and stability, we highlight its pivotal role in the training dynamics of PINNs. We prove theorems to reveal how condition number is related to both the error control and convergence of PINNs. Subsequently, we present an algorithm that leverages preconditioning to improve the condition number. Evaluations of 18 PDE problems showcase the superior performance of our method. Significantly, in 7 of these problems, our method reduces errors by an order of magnitude. These empirical findings verify the critical role of the condition number in PINNs' training. The codes are included in the supplementary material.
\end{abstract}


\section{Introduction}
Numerical methods, such as finite difference and finite element methods, discretize partial differential equations (PDEs) into linear equations to obtain approximate solutions. Such discretizations can be computationally expensive, especially for PDE-constrained problems that require frequently solving PDEs. Recently, physics-informed neural network (PINN) \citep{raissi2019physics} and its extensions \citep{pang2019fpinns,yang2021b,liu2022unified} have emerged as powerful tools for tackling these challenges. By integrating PDE residuals into the loss function, PINNs not only ensure that the neural network adheres to the physical constraints but also maintain its adaptability to specific optimization objectives (e.g., minimum dissipation) in applications such as inverse problems \citep{chen2020physics,jagtap2022physics} and physics-informed reinforcement learning (PIRL) \citep{liu2021physics,martin2022reinforcement}. While PINNs have achieved success over various domains \citep{zhu2021machine,cai2021physics,huang2022applications}, their full potential and capabilities remain under-explored.

\begin{figure}[bt]
     \centering
     \begin{subfigure}[b]{0.45\textwidth}
         \centering
         \includegraphics[height=0.45\textwidth]{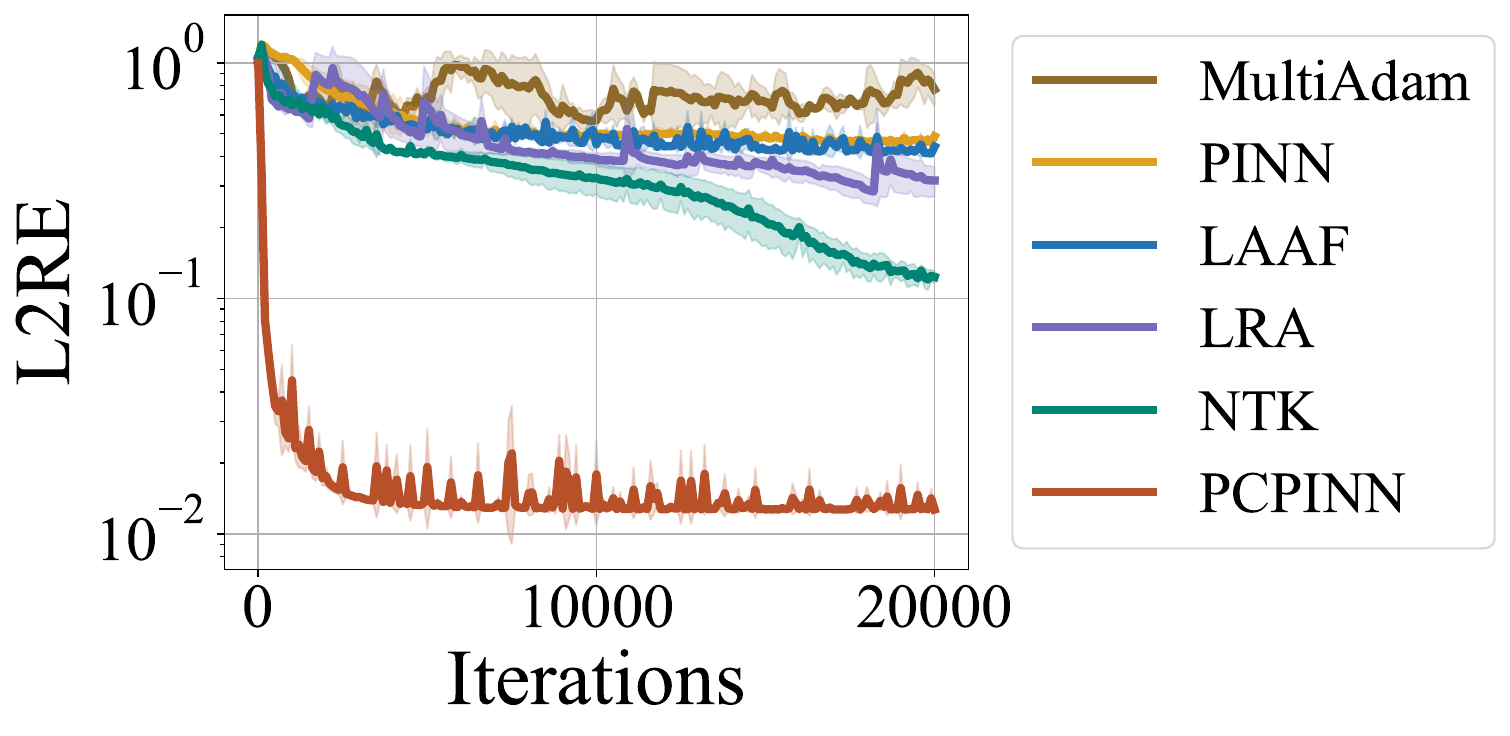}
         \caption{Convergence dynamics: mean $\pm$ std}
         \label{fig:intro:1}
     \end{subfigure}
     \hfill
     \begin{subfigure}[b]{0.5\textwidth}
         \centering
         \includegraphics[height=0.49\textwidth]{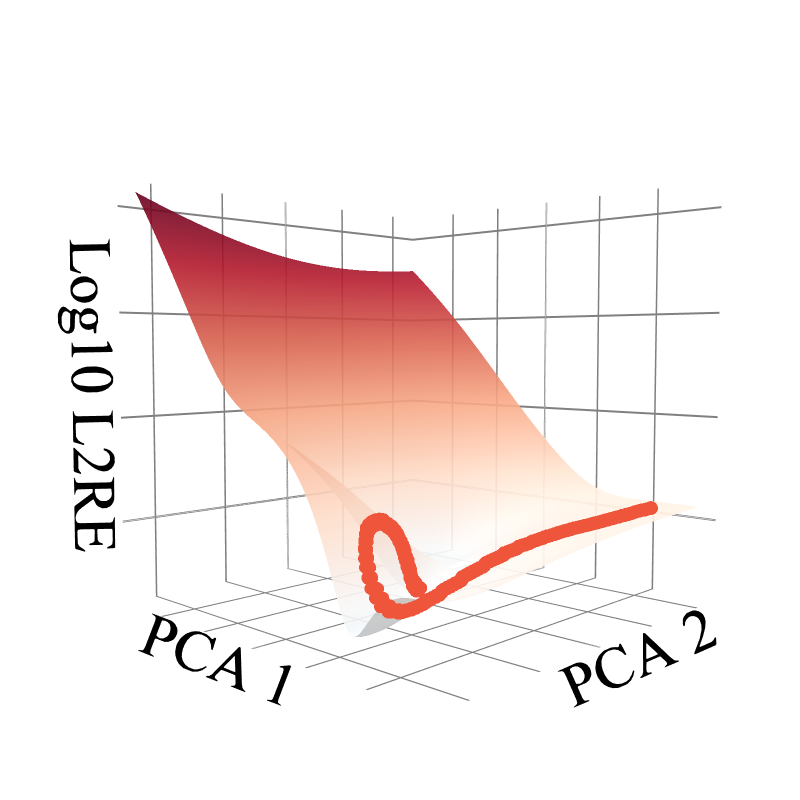}
         \includegraphics[height=0.49\textwidth]{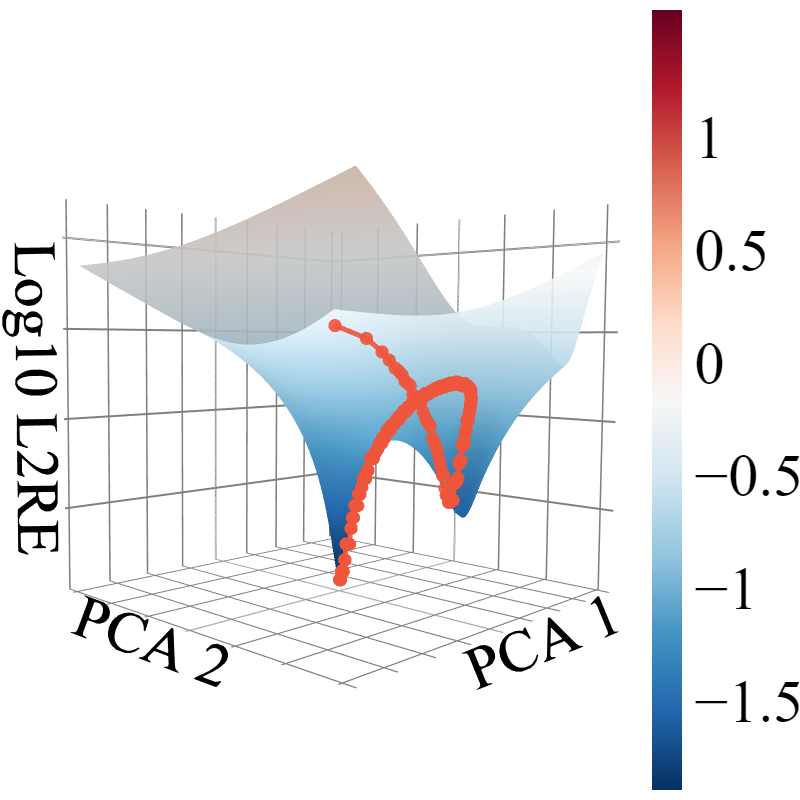}
         \caption{Error landscape: PINN (left) vs. Ours (right)}
         \label{fig:intro:2}
     \end{subfigure}
\caption{An illustrative example of learning 1D wave equation. \textbf{(a)} PINN baselines (only a subset are shown) struggle with long plateaus and severe oscillations during training. In contrast, our preconditioned PINN (PCPINN) can converge quickly and achieve much lower $L^2$ relative error (L2RE). \textbf{(b)} PINN wanders in the high-error zone (red), while ours dives deep and eventually converges. Red scatters mark the model parameters in each iteration. Details are elaborated in Section~\ref{sec:exp:forward}.}
\label{fig:intro}
\end{figure}

Several studies \citep{mishra2022estimates,de2022error,de2022error2,guo2022energy} have theoretically demonstrated the feasibility of PINNs in addressing a vast majority of well-posed PDE problems. Yet, \citet{krishnapriyan2021characterizing} spotlights \emph{training pathologies} inherent to PINNs and shows their failures in even moderately complex problems\footnote{The term ``complex problems'' is employed here to describe PDEs characterized by nonlinearity, irregular geometries, multi-scale phenomena, or chaotic behaviors. For an in-depth discussion, we refer to \citet{hao2022physics}.} encountered in real-world scenarios. As illustrated in Figure~\ref{fig:intro}, such pathologies can substantially hinder convergence and decrease prediction accuracy. Some researchers attribute the pathologies to the unbalanced competition between PDE and boundary condition (BC) loss terms \citep{wang2021understanding,wang2022and}. Based on this analysis, others have proposed methods to enforce the BCs on the PINN, eliminating BC loss terms \citep{berg2018unified,sheng2021pfnn,lu2021physics,sheng2022pfnn,liu2022unified}. However, the challenge persists as the unbalanced competition only partially explains pathologies, especially when dealing with complex PDEs like the Navier-Stokes equations \citep{liu2022unified}. Thus, how to understand and effectively mitigate these pathologies remains open.

In this work, we introduce the condition number as a novel metric, motivated by its pivotal role in understanding computational stability and sensitivity, to measure training pathologies in PINNs. Further, we present an algorithm to optimize this metric, enhancing both accuracy and convergence. In traditional numerical analysis, the condition number characterizes the sensitivity of a problem's output relative to its input. A large condition number typically indicates a high sensitivity to noises and errors, resulting in a slow and unstable convergence. This insight is particularly relevant in deep learning's complex optimization landscape. In this context, the condition number becomes a vital tool to identify potential convergence issues. Based on this background, we suggest resorting to condition numbers to analyze the training pathologies of PINNs.


Specifically, we theoretically demonstrate that a lower condition number correlates with improved error control. Through the lens of the neural tangent kernel (NTK), we further show that the condition number plays a decisive role in the convergence speed of PINN. Based on these findings, we propose an algorithm that mitigates the condition number by incorporating a preconditioner into the loss function. To validate our theoretical framework, we evaluate our approach on a comprehensive PINN benchmark \citep{hao2023pinnacle}, which encompasses $20$ distinct forward PDEs and $2$ inverse scenarios. Our results consistently show state-of-the-art performance across most test cases. Notably, our method makes several previously unsolvable problems with PINNs (e.g., a 3D Poisson equation with intricate geometry) solvable by reducing relative errors from nearly $100\%$ to below $25\%$.


\section{Preliminaries}

We start by presenting the problem formulation and reviewing physics-informed neural networks (PINNs). We consider low-dimensional boundary value problems (BVPs) \footnote{Although not discussed, our method readily extends to problems involving vector-valued functions and more general boundary conditions. Relevant experimental details can be found in Appendix~\ref{app:exp:forward}.} that expect a solution $u$ satisfying that:
\begin{equation}
\mathcal{F}[u] = f \quad\text{in } \Omega,
\label{eq:bvp}
\end{equation}
with a boundary condition (BC) of $u|_{\partial\Omega}=g$, where $\Omega$ is an open, bounded subset of $\mathbb{R}^d$ with dimension $d\le 4$. Here, $f\colon \Omega \rightarrow \mathbb{R}$ and $g\colon \partial\Omega \rightarrow \mathbb{R}$ are known functions; $\mathcal{F}\colon V \rightarrow W$ is a partial differential operator including at most $k$-order partial derivatives, where $k\in \mathbb{N}^+$ and $V,W$ are normed subspaces of $L^2(\Omega)$.

Assuming the well-posedness of our BVP, a fundamental property of formulations for physical problems, as indicated by \citet{hilditch2013introduction}, we can find a subspace $S \subset \mathcal{F}(V)$. For every $w\in S$, there exists a unique $v\in V$ such that $\mathcal{F}[v] = w$ and that $v|_{\partial\Omega}=g$ (that is, the BC). This allows us to define $\mathcal{F}^{-1}\colon S \rightarrow V$ as $\mathcal{F}^{-1}[w] = v$. Again, owing to the well-posedness, $\mathcal{F}^{-1}$ is continuous within $S$. Conclusively, our solution can be expressed as $u=\mathcal{F}^{-1}[f]$.

PINNs use a neural network $u_{\vth}$ with parameters $\vth\in \Theta$ to approximate the solution $u$, where $\Theta=\mathbb{R}^n$ represents the parameter space and $n\in \mathbb{N}^+$ is the number of parameters. The optimization problem of PINNs can be formalized as a constrained optimization problem:
\begin{equation}\label{eq:hcopt}
    \min_{\vth\in\Theta} \left\| \mathcal{F}[u_{\vth}] - f \right\|, \quad \text{subject to } u_{\vth}|_{\partial\Omega}=g.
\end{equation}
Two primary strategies to enforce the BC constraint are:
\begin{equation}
\begin{aligned}
    \mathcal{L}_{\text{soft}}(\vth)&= \left\| \mathcal{F}[u_{\vth}] - f \right\|^2 + \alpha \| u_{\vth}-g \|_{\partial\Omega}^2 \\
    \mathcal{L}_{\text{hard}}(\vth)&=  \left\| \mathcal{F}[\hat{u}_{\vth}] - f \right\|^2,
\end{aligned}
\end{equation}
where $\alpha \in \mathbb{R}^+$, $\| \cdot \|_{\partial\Omega}$ denotes the $L^2$ norm evaluated at $\partial\Omega$, and all the norms are estimated via Monte Carlo integration. The first approach adds a penalty term for BC enforcement. However, as highlighted by \cite{wang2021understanding}, this can induce loss imbalances, leading to training instability. In contrast, the second approach, as advocated by \citep{berg2018unified, lu2021physics, liu2022unified}, employs a specialized ansatz: $\hat{u}_{\vth}(\vx) = l^{\partial\Omega}(\vx) u_{\vth}(\vx) + g(\vx)$, with $l^{\partial\Omega}$ being a smoothed distance function to $\partial\Omega$. Such ansatz naturally adheres to the BC, eliminating loss imbalances. We favor this strategy and, for clarity, will subsequently omit the hat notation, assuming $u_{\vth}$ fulfills the BC.



\paragraph{Training Pathologies.}
Despite hard-constraint methods, training pathologies still occur in moderately complex PDEs \citep{liu2022unified}. As noted by \cite{krishnapriyan2021characterizing}, minor imperfectness during optimization can lead to an unexpectedly large error, substantially destabilizing training. Our subsequent analysis will delve further into such pathologies.






\section{Analyzing PINNs' Training Pathologies via Condition Number}\label{sec:analyze}

\subsection{Introducing Condition Number}\label{sec:model}

In the field of numerical analysis, condition number has long been a touchstone for understanding the problem's pathological nature \citep{suli2003introduction}. For instance, in linear algebra, the condition number of a matrix provides insights into the error amplification from input to output, thus indicating potential stability issues. Furthermore, in deep learning, the condition number can be used to characterize the sensitivity of the network prediction. A ``sensitive'' model could be vulnerable to some adversarial noise \citep{beerens2023adversarial}.


Drawing inspiration from this knowledge, we propose to use condition numbers to analyze PINNs' training pathologies, offering a fresh perspective on their behavior.

\begin{definition}[\textbf{Condition Number}]\label{def:cond}
For the boundary value problem (BVP) in Eq.~\eqref{eq:bvp}, denoted by $\mathcal{P}$, by assuming the neural network has sufficient approximation capability (see Assumption~\ref{ass:7}), 
the \emph{relative} condition number for solving $\mathcal{P}$ with a PINN is defined as:
\begin{equation}\label{eq:cond}
    \mathrm{cond}(\mathcal{P}) = \lim_{\epsilon\to 0^+} \sup_{\substack{0<\| \delta f \| \le \epsilon\\ \vth\in \Theta}} \frac{\| \delta u \| \big/ \| u \|}{\| \delta f \| \big/ \| f \|},
\end{equation}
provided $\| u \| \neq 0$, $\| f \| \neq 0$\footnote{If $\| u \| = 0$ or $\| f \| = 0$, we can similarly define the \emph{absolute} condition number by removing the two terms.}, where $\delta u = u_{\vth} - u$ and $\delta f =  \mathcal{F}[u_{\vth}] - f$. 
\end{definition}

\begin{remark}
The condition number signifies the asymptotic worst-case relative error in prediction for a relative error in optimization (noticing that $\mathcal{L}(\vth)=\| \delta f \|^2$). The problem is said to be \emph{ill-conditioned} if the condition number is large, indicating that a small optimization imperfectness can result in a large prediction error. Since gradient descent has certain inherent errors, it will be difficult for the neural network to approximate the exact solution.

\end{remark}

Aligning with the observation that most real-world physical phenomena exhibit smooth behavior with respect to their sources, we assume that $\mathcal{F}^{-1}$ is locally Lipschitz continuous and present the subsequent theorem.




\begin{theorem}\label{theo:cond:bound}
If $\mathcal{F}^{-1}$ is $K$-Lipschitz continuous with $K\ge 0$ in some neighbourhood of $f$, we have: 
\begin{equation}
    \mathrm{cond}(\mathcal{P}) \le \frac{\| f \|}{\| u \|} K.
\end{equation}
\end{theorem}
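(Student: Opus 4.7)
The plan is to exploit the fact that under the hard-constraint ansatz adopted in the paper, the network $u_{\vth}$ satisfies the boundary condition exactly, so $u_{\vth}$ is itself the unique preimage of $\mathcal{F}[u_{\vth}]$ under $\mathcal{F}^{-1}$. This turns the bound on $\|\delta u\|$ into a bare application of the Lipschitz hypothesis on $\mathcal{F}^{-1}$; the $\|f\|/\|u\|$ factor then simply comes from unfolding the definition of the relative condition number.

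Concretely, I would proceed in four short steps. First, I would argue that since $u_{\vth}|_{\partial\Omega}=g$ and $u=\mathcal{F}^{-1}[f]$ satisfies the same BC, the well-posedness assumption (together with Assumption~\ref{ass:7} guaranteeing that $\mathcal{F}[u_{\vth}]$ stays in the domain $S$ of $\mathcal{F}^{-1}$ for $\|\delta f\|$ small) lets me write
\begin{equation*}
u_{\vth} \;=\; \mathcal{F}^{-1}\bigl[\mathcal{F}[u_{\vth}]\bigr] \;=\; \mathcal{F}^{-1}[f+\delta f].
\end{equation*}
Second, subtracting $u=\mathcal{F}^{-1}[f]$ gives the clean identity $\delta u = \mathcal{F}^{-1}[f+\delta f]-\mathcal{F}^{-1}[f]$. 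Third, I would choose $\epsilon>0$ small enough that the perturbed source $f+\delta f$ lies in the neighbourhood of $f$ on which $\mathcal{F}^{-1}$ is $K$-Lipschitz; the Lipschitz bound then yields $\|\delta u\|\le K\,\|\delta f\|$ uniformly in $\vth$. Fourth, dividing by $\|u\|$ and multiplying by $\|f\|/\|\delta f\|$ gives
\begin{equation*}
\frac{\|\delta u\|/\|u\|}{\|\delta f\|/\|f\|} \;\le\; K\cdot\frac{\|f\|}{\|u\|},
\end{equation*}
and this bound is independent of both $\vth$ and the specific perturbation, so taking the supremum and then the $\epsilon\to 0^+$ limit preserves it, which is exactly the claim.

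The only genuinely delicate step, I expect, will be the very first one: justifying that the identity $u_{\vth}=\mathcal{F}^{-1}[\mathcal{F}[u_{\vth}]]$ is well-defined and correct. This relies on (i) $\mathcal{F}[u_{\vth}]\in S$ so that $\mathcal{F}^{-1}$ can be applied, and (ii) uniqueness of the preimage among functions with the correct BC, so that the preimage is actually $u_{\vth}$ and not some other function in $V$. Both points follow from well-posedness plus the hard-constraint construction, but they should be stated explicitly; once done, the remainder is a one-line Lipschitz estimate and trivial bookkeeping with the supremum and limit in the definition of $\mathrm{cond}(\mathcal{P})$.
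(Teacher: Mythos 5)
Your proposal is correct and follows essentially the same route as the paper's proof: reparametrize the supremum in terms of $h=\delta f$, apply the local Lipschitz estimate to $\mathcal{F}^{-1}[f+h]-\mathcal{F}^{-1}[f]$ once $\epsilon$ is small enough to stay in the Lipschitz neighbourhood, and then observe the resulting bound $\tfrac{\|f\|}{\|u\|}K$ is uniform in $\vth$ and $h$ so it survives the supremum and the $\epsilon\to 0^+$ limit. The one thing you flag as delicate---justifying $u_{\vth}=\mathcal{F}^{-1}[\mathcal{F}[u_{\vth}]]$ via the hard-constraint BC and uniqueness---is indeed the content of the paper's implicit substitution step, and you make it more explicit than the paper does, which is a small but genuine improvement in rigor.
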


\begin{proof}
We defer the proof to Appendix~\ref{app:cond:bound}.
\end{proof}



\begin{remark}
It is worth emphasizing that $K$ fundamentally depends on the intrinsic nature of the problem and it is independent of the specific algorithm. Consequently, algorithmic enhancements, whether in network architecture or training strategy, may not substantially mitigate the pathology unless the problem is reformulated.
\end{remark}



For specific cases such as linear PDEs, we could have weaker theorems to guarantee the condition number's existence (refer to Appendix~\ref{app:prop}).



To give readers a more specific understanding of condition numbers, we consider a simple model problem of the 1D Poisson equation:
\begin{equation}\label{eq:model:bvp}
\begin{aligned}
    \Delta u(x) &= f(x), &&x\in \Omega=(0, 2\pi/P ),\\
    u(x) &= 0, &&x\in \partial\Omega=\{0, 2\pi/P\},
\end{aligned}
\end{equation}
where $P$ is a system parameter. In this simple scenario, we can derive an analytical expression for the condition number. Firstly, we present an analytical expression for the norm of $\mathcal{F}^{-1}$.
\begin{theorem}\label{theo:poisson_condnumber}
    Consider the function spaces $V = H^2(\Omega)$ and $W = L^2(\Omega)$. Let $\mathcal{F}$ denote the Laplacian operator mapping from $V$ to $W$, i.e., $\mathcal F=\Delta: V\to W$. Define the inverse operator $\mathcal{F}^{-1}\colon \mathcal{F}(V) \rightarrow V$ such that for every $w \in \mathcal{F}(V)$,  $\mathcal{F}^{-1}[w] = v$, where $v\in V$ is the unique function satisfying $\mathcal{F}[v] = w$ with boundary condition $ v(0)=v(2\pi / P)=0$. Then, the norm of $\mathcal{F}^{-1}$ is:
    \begin{equation}\label{eq:poisson_condnumber}
            \|\mathcal F^{-1}\| = \frac4{P^2}.
    \end{equation}
\end{theorem}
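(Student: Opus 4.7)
The plan is to compute the operator norm by diagonalizing the Laplacian in its eigenbasis. Since $\mathcal F=\Delta$ on $\Omega=(0,2\pi/P)$ with zero Dirichlet boundary data is self-adjoint on $L^2(\Omega)$, I would first identify the complete orthogonal basis of eigenfunctions
$\phi_n(x)=\sin(nPx/2)$, $n=1,2,\dots$, satisfying $\Delta\phi_n=-(nP/2)^2\phi_n$ and $\phi_n(0)=\phi_n(2\pi/P)=0$. This handles the boundary condition automatically and expresses $\mathcal F$ as a diagonal operator in this basis.

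Next, for any $w\in\mathcal F(V)\subset L^2(\Omega)$, I would write $w=\sum_{n\ge 1}c_n\phi_n$ so that the unique $v\in V$ with $\mathcal F[v]=w$ and $v|_{\partial\Omega}=0$ is given by
\[
v=\mathcal F^{-1}[w]=-\sum_{n\ge 1}\frac{4c_n}{(nP)^2}\phi_n.
\]
Using orthogonality of $\{\phi_n\}$ in $L^2(\Omega)$ (with $\|\phi_n\|_{L^2}^2=\pi/P$), I would then apply Parseval to get
\[
\|w\|_{L^2}^2=\frac{\pi}{P}\sum_{n\ge 1}c_n^2,\qquad
\|v\|_{L^2}^2=\frac{\pi}{P}\sum_{n\ge 1}\frac{16\,c_n^2}{(nP)^4}.
\]

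The conclusion is then a straightforward weighted-sum argument: the ratio $\|v\|_{L^2}^2/\|w\|_{L^2}^2$ is a convex combination of the numbers $16/(nP)^4$, maximized at $n=1$ with value $(4/P^2)^2$, and attained by taking $w=\phi_1$. Hence $\|\mathcal F^{-1}\|=\sup_{w\neq 0}\|\mathcal F^{-1}[w]\|_{L^2}/\|w\|_{L^2}=4/P^2$.

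The main subtlety (not really an obstacle) is clarifying the norm convention: the clean value $4/P^2$ is the $L^2\!\to\!L^2$ operator norm of $\mathcal F^{-1}$, i.e., the reciprocal of the smallest eigenvalue $P^2/4$ of $-\Delta$ with zero Dirichlet data on $(0,2\pi/P)$; I would state this explicitly up front so the reader does not interpret $\|\cdot\|$ as the $H^2$ norm on $V$. A secondary point worth a sentence is verifying that the supremum is actually attained on the first eigenfunction (which lies in $\mathcal F(V)$), so no limiting argument is needed.
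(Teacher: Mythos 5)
Your proof is correct, and it takes a genuinely different (and cleaner) route than the paper's. You expand in the Dirichlet eigenbasis $\phi_n(x)=\sin(nPx/2)$ of $\Delta$ on $(0,2\pi/P)$, which simultaneously enforces the boundary conditions and diagonalizes the operator, so $\|\mathcal F^{-1}\|_{L^2\to L^2}$ is immediately the reciprocal of the smallest Dirichlet eigenvalue $(P/2)^2$; the maximizer $w=\phi_1$ lies in $\mathcal F(V)$ (it equals $\Delta$ of $-\tfrac{4}{P^2}\phi_1\in H^2$), so the supremum is attained. The paper instead expands $f$ in the \emph{periodic} Fourier basis $\{1,\sin(kPx),\cos(kPx)\}_{k\ge1}$, which does not diagonalize $\Delta$ with Dirichlet boundary data (the constant and cosine modes violate the boundary condition), so they must integrate explicitly to recover $u=\mathcal F^{-1}[f]$, run a constrained maximization of $\|u\|^2$ subject to $\|f\|^2=1$ via stationarity in each coefficient, prove a separate lemma that the sine coefficients $a_k$ vanish at the optimum, and evaluate several series in closed form (e.g.\ $d_k=-4/(4k^2-1)$, $\pi\coth\pi$, $\pi^4/45$, \ldots). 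Your approach buys a one-line reduction to standard spectral theory and avoids all of that bookkeeping; the paper's approach buys nothing here beyond being self-contained within a particular (ill-adapted) basis, and indeed the optimal $f$ the paper finds is just the Fourier-cosine expansion of your $\phi_1=\sin(Px/2)$ on $(0,2\pi/P)$. One small thing worth a sentence in a final write-up: note that the series $v=-\sum_n 4c_n(nP)^{-2}\phi_n$ converges in $H^2$, so $v\in V$ and the pointwise boundary conditions make sense via the Sobolev embedding $H^2(0,L)\hookrightarrow C^1([0,L])$.
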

\begin{proof}
    For a detailed derivation, refer to Appendix \ref{app:poisson_condnumber}.
\end{proof}

        


Secondly, according to Proposition~\ref{theo:prop}, the condition number is given by $\mathrm{cond}(\mathcal{P}) = \frac{\| f \|}{\| u \|} \| \mathcal{F}^{-1} \| = \frac{4\|f\|}{P^2\|u\|}$. Although this example is foundational, it sheds light on the relationship between the condition number and the intrinsic problem property. What is more, in Section~\ref{sec:exp:cond}, we will delve deeper, exploring three more practical problems and study how to numerically estimate the condition number when the analytical expression is not available.

\subsection{How Condition Number Affects Error \& Convergence}

Next, we will discuss the relationship between the condition number and the error control as well as the convergence rate of PINNs.

\begin{corollary}[\textbf{Error Control}]\label{theo:error_control}
Assuming that $\mathrm{cond}(\mathcal{P}) < \infty$, there exists a function $\alpha\colon (0, \xi) \rightarrow \mathbb{R}, \xi > 0$ with $\lim_{x \to 0^+} \alpha(x) = 0$, such that for any $\epsilon \in (0, \xi), \vth \in \Theta \land \sqrt{\mathcal{L}(\vth)} \le \epsilon$, it holds that:
\begin{equation}
    \frac{\| u_{\vth} - u \|}{\| u \|} \le 
    \left(\mathrm{cond}(\mathcal{P}) + \alpha(\epsilon) \right) \frac{\sqrt{\mathcal{L}(\vth)}}{\| f \|}.
\end{equation}
\end{corollary}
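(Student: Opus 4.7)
The plan is to work directly from the definition of $\mathrm{cond}(\mathcal{P})$ in Eq.~\eqref{eq:cond}, which is already a statement about the supremum of the ratio $(\|\delta u\|/\|u\|)/(\|\delta f\|/\|f\|)$ over $\vth\in\Theta$ with $\|\delta f\|\le\epsilon$. Since $\mathcal{L}(\vth)=\|\delta f\|^2$, the desired bound is essentially a rewriting of that supremum together with a quantitative version of the $\epsilon\to 0^+$ limit.

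First, I would introduce the auxiliary function
\begin{equation*}
h(\epsilon) \;=\; \sup_{\substack{0<\|\delta f\|\le \epsilon\\ \vth\in\Theta}} \frac{\|\delta u\|/\|u\|}{\|\delta f\|/\|f\|},
\end{equation*}
which is non-decreasing in $\epsilon$, and observe that by definition $\lim_{\epsilon\to 0^+}h(\epsilon)=\mathrm{cond}(\mathcal{P})<\infty$. Because $h$ is monotone, the limit equals $\inf_{\epsilon>0}h(\epsilon)$, hence $h(\epsilon)\ge\mathrm{cond}(\mathcal{P})$ for all sufficiently small $\epsilon>0$. I would then set
\begin{equation*}
\alpha(\epsilon)\;=\;h(\epsilon)-\mathrm{cond}(\mathcal{P}),
\end{equation*}
which is non-negative, well-defined on some interval $(0,\xi)$, and satisfies $\lim_{\epsilon\to 0^+}\alpha(\epsilon)=0$ by construction.

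Next, I would fix $\epsilon\in(0,\xi)$ and any $\vth\in\Theta$ with $\sqrt{\mathcal{L}(\vth)}\le\epsilon$, i.e.\ $\|\delta f\|\le\epsilon$. In the non-degenerate case $\|\delta f\|>0$, the pair $(\vth,\delta f)$ is feasible in the supremum defining $h(\epsilon)$, so
\begin{equation*}
\frac{\|\delta u\|/\|u\|}{\|\delta f\|/\|f\|}\;\le\; h(\epsilon)\;=\;\mathrm{cond}(\mathcal{P})+\alpha(\epsilon),
\end{equation*}
which, after multiplying both sides by $\|\delta f\|/\|f\|=\sqrt{\mathcal{L}(\vth)}/\|f\|$, yields exactly the claimed inequality. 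The degenerate case $\|\delta f\|=0$ needs a brief separate argument: under the well-posedness hypothesis stated before Theorem~\ref{theo:cond:bound}, $\mathcal{F}[u_\vth]=f$ together with $u_\vth|_{\partial\Omega}=g$ forces $u_\vth=\mathcal{F}^{-1}[f]=u$, so $\|\delta u\|=0$ and the inequality holds trivially ($0\le 0$).

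The only delicate point, and the one I would check most carefully, is the passage from the limit definition of $\mathrm{cond}(\mathcal{P})$ to a pointwise bound on $h(\epsilon)$ with an explicit remainder $\alpha(\epsilon)$; monotonicity of $h$ is what makes this passage clean, and it is also what guarantees that $\alpha(\epsilon)\ge 0$ (so the corollary statement holds with a genuine upper bound rather than a two-sided inequality). The hard-constraint convention $u_\vth|_{\partial\Omega}=g$ assumed throughout the paper is essential in the $\|\delta f\|=0$ case; without it one would need to control a boundary residual as well.
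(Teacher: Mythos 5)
Your proof is correct and follows essentially the same route as the paper's: you define $\alpha(\epsilon)$ as the difference between the supremum at threshold $\epsilon$ and $\mathrm{cond}(\mathcal{P})$, use the definition of the limit to get $\alpha(\epsilon)\to 0$, and handle the $\|\delta f\|=0$ case via well-posedness. The added remark that monotonicity of $h$ guarantees $\alpha\ge 0$ is a small refinement not spelled out in the paper but consistent with it.
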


\begin{proof}
This theorem can be derived directly from Definition~\ref{def:cond} (see Appendix~\ref{app:error_control} for details).
\end{proof}

\begin{remark}
For well-posed BVPs, it is known that there is no error when the loss $\mathcal{L}(\vth)$ is precisely zero. However, the magnitude of the error is uncontrolled when $\mathcal{L}(\vth)$ is a small (but non-zero) value due to optimization errors. This theorem bridges the gap between the error and the loss value by establishing an asymptotic relationship, where the condition number serves as a scaling factor. Consequently, improving the condition number becomes a critical step to ensuring greater accuracy, as empirically validated in our experiment (see Section~\ref{sec:exp:forward}, \textbf{effect of preconditioner precision}).
\end{remark}




Then, we will study how the condition number affects the convergence of PINNs through the lens of the neural tangent kernel (NTK) theory \citep{jacot2018ntk,wang2022ntk}. Firstly, we discretize the loss function $\mathcal{L}(\vth)$ on a set of collocation points $\{ \vx^{(i)} \}_{i=1}^N$:
\begin{equation}
    \mathcal{L}(\vth) \appropto \mathcal{\hat{L}}(\vth) = \frac{1}{2}\| \mathcal{F}[u_{\vth}](\mX) - f(\mX) \|^2,
\end{equation}
where $\mX\in \mathbb{R}^{N\times d} = [\vx^{(1)},\dots, \vx^{(N)}]^\top$. We consider optimizing the discretized loss function $\mathcal{\hat{L}}(\vth)$ with an infinitesimally small learning rate, which yields the following continuous-time gradient flow:
\begin{equation}
    \frac{\diff \vth}{\diff t} = -\nabla \mathcal{\hat{L}}(\vth), \quad t\in (0,+\infty),
\end{equation}
where $\vth = \vth(t), t\in [0, +\infty)$ and $\vth(0)$ is the randomly initialized parameters. 

Secondly, we define the NTK for PINNs $\mK(t) \in \mathbb{R}^{N\times N}$ in this context:
\begin{equation}
    \mK_{ij}(t) = \frac{\partial \mathcal{F}[u_{\vth(t)}](\vx^{(i)})}{\partial \vth} \cdot \frac{\partial \mathcal{F}[u_{\vth(t)}](\vx^{(j)})}{\partial \vth},
\end{equation}
where $1 \le i,j \le N, t\in [0,+\infty)$. According to the NTK theory \citep{jacot2018ntk, wang2022ntk}, the following evolution dynamics holds in the gradient flow:
\begin{equation}\label{eq:24}
    \frac{\partial \mathcal{F}[u_{\vth(t)}](\mX)}{\partial t} = -\mK(t) (\mathcal{F}[u_{\vth(t)}](\mX) - f(\mX)),
\end{equation}
where $t\in (0, +\infty)$. From \citet{jacot2018ntk, wang2022ntk}, $\mK(t)$ nearly stays invariant during the training process when the width of PINNs approaches infinity:
\begin{equation}
    \mK(t) \approx \mK^{\infty}, \quad  t\in[0,+\infty),
\end{equation}
where $\mK^{\infty}$ is a fixed kernel. Therefore, Eq.~\eqref{eq:24} can be further rewritten as:
\begin{equation}
     \mathcal{F}[u_{\vth(t)}](\mX) \approx \left( \mI - e^{-\mK(t)t} \right) f(\mX).
\end{equation}

Thirdly, since $\mK(t)$ is positive semi-definite \citep{wang2022ntk} and is nearly time-invariant, we can take its spectral decomposition and make the orthogonal part time-invariant: $\mK(t) \approx \mQ^\top \Lambda(t) \mQ$, where $\mQ$ is a time-invariant orthogonal matrix and $\Lambda(t)$ is a diagonal matrix with entries being the eigenvalues $\lambda_i(t) \ge 0$ of $\mK(t)$. Consequently, we can further derive that:
\begin{equation}
    \mathcal{F}[u_{\vth(t)}](\mX) - f(\mX) \approx - \mQ^\top e^{-\Lambda(t)t} \mQ f(\mX),
\end{equation}
which is equivalent to:
\begin{equation}
    \mQ\left(\mathcal{F}[u_{\vth(t)}](\mX) - f(\mX) \right) \approx -e^{-\Lambda(t)t} \mQ f(\mX).
\end{equation}
The equation suggests that the $i$-th element of the left-hand side will diminish approximately at the rate of $e^{-\lambda_i(t)t}$. Therefore, the eigenvalues of the kernel will serve as critical factors, characterizing the rate at which the training loss declines. As suggested by \citet{wang2022ntk}, this motivates us to adopt the following definition. 

\begin{definition}[\textbf{Average Convergence Rate}]\label{def:avgrate}
The average convergence rate $c(t)$ of a positive semi-definite kernel matrix $\mK(t)\in \mathbb{R}^{N\times N}$ is defined as taking the average of all its eigenvalues:
\begin{equation}
    c(t) = \frac{1}{N} \sum_{i=1}^N \lambda_i(t) = \frac{1}{N} \mathrm{tr}(\mK(t)).
\end{equation}
\end{definition}

Finally, we prove that a lower bound of the average convergence rate $c(t)$ is determined by the condition number.

\begin{theorem}[\textbf{Convergence Rate}]\label{theo:ntk}
Let $U$ be a set such that $\{ u_{\vth(t)} \mid t\in [0,+\infty)\} \subset U$. Suppose that $\mathcal{F}^{-1}$ is well-defined and Fréchet differentiable in $\mathcal{F}(U)$. Under the assumption that $\mathrm{cond}(\mathcal{P}) < \infty$ and other assumptions in the NTK \citep{jacot2018ntk,wang2022ntk}, the average convergence rate $c(t)$ at time $t$ satisfies that:
\begin{equation}
    c(t) \gtrapprox \underbrace{\frac{\|f\|^2/(\|u\|^2 | \Omega |)}{(\mathrm{cond}(\mathcal{P}))^2 + \alpha(\mathcal{L}(\vth(t))) }}_{\text{condition number and physics}}
    \quad
    \underbrace{\left\|   \frac{\partial u_{\vth(t)}}{\partial \vth} \right\|^2}_{\text{neural network}},
\end{equation}
where $\alpha\colon (0, \xi) \rightarrow \mathbb{R}, \xi > \sup_{t\in [0,+\infty)} \mathcal{L}(\vth(t))$ with $\lim_{x \to 0^+} \alpha(x) = 0$.
\end{theorem}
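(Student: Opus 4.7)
The plan is to work backward from the trace formula $c(t)=\frac{1}{N}\mathrm{tr}(\mK(t))=\frac{1}{N}\sum_i\bigl\|\partial \mathcal{F}[u_{\vth(t)}](\vx^{(i)})/\partial\vth\bigr\|^2$ and show that each parameter-sensitivity of the residual is, up to the operator norm of $D\mathcal{F}^{-1}$, as large as the corresponding parameter-sensitivity of $u_{\vth}$ itself. First, I would treat $\mathcal{F}[u_\vth]$ as the composition $\mathcal{F}\circ u_\vth$ of the PDE operator with the parametrization. By Fréchet differentiability of $\mathcal{F}^{-1}$ on $\mathcal{F}(U)$ and the identity $\mathcal{F}^{-1}\circ\mathcal{F}=\mathrm{id}_U$, one has for every coordinate $\vth_k$ the pointwise (in $L^2(\Omega)$) relation
\begin{equation}
\frac{\partial u_{\vth}}{\partial\vth_k}\;=\;D\mathcal{F}^{-1}\big|_{\mathcal{F}[u_{\vth}]}\Bigl[\frac{\partial\mathcal{F}[u_{\vth}]}{\partial\vth_k}\Bigr],
\end{equation}
so that $\bigl\|\partial u_\vth/\partial\vth_k\bigr\|_{L^2(\Omega)} \le \bigl\|D\mathcal{F}^{-1}\big|_{\mathcal{F}[u_\vth]}\bigr\|_{op}\,\bigl\|\partial\mathcal{F}[u_\vth]/\partial\vth_k\bigr\|_{L^2(\Omega)}$. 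Summing over $k$ and inverting yields
\begin{equation}
\Bigl\|\frac{\partial\mathcal{F}[u_\vth]}{\partial\vth}\Bigr\|_{L^2(\Omega)}^{2}\;\ge\;\frac{\|\partial u_\vth/\partial\vth\|^{2}}{\|D\mathcal{F}^{-1}|_{\mathcal{F}[u_\vth]}\|_{op}^{2}}.
\end{equation}

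Second, I would convert the discrete trace into this continuous $L^2$ norm via Monte Carlo integration on the collocation points $\{\vx^{(i)}\}_{i=1}^N$, which is the source of the ``$\gtrapprox$'' in the claim:
\begin{equation}
c(t)\;=\;\frac{1}{N}\sum_{i=1}^N\Bigl\|\frac{\partial\mathcal{F}[u_\vth]}{\partial\vth}(\vx^{(i)})\Bigr\|^{2}\;\approx\;\frac{1}{|\Omega|}\Bigl\|\frac{\partial\mathcal{F}[u_\vth]}{\partial\vth}\Bigr\|_{L^{2}(\Omega)}^{2}.
\end{equation}
Combining with the previous inequality gives
$c(t)\gtrapprox\frac{1}{|\Omega|}\,\|\partial u_\vth/\partial\vth\|^{2}\big/\|D\mathcal{F}^{-1}|_{\mathcal{F}[u_\vth]}\|_{op}^{2}$, so everything reduces to bounding the operator norm of $D\mathcal{F}^{-1}$ at $\mathcal{F}[u_\vth]$ by the condition number.

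Third, I would pass from $D\mathcal{F}^{-1}$ to $\mathrm{cond}(\mathcal{P})$. By Definition~\ref{def:cond}, taking $\delta f\to 0$ and using Fréchet differentiability of $\mathcal{F}^{-1}$ at $f$, one identifies $\mathrm{cond}(\mathcal{P})=\|D\mathcal{F}^{-1}|_{f}\|_{op}\cdot\|f\|/\|u\|$. Continuity of the Fréchet derivative on $\mathcal{F}(U)$ then yields some modulus $\gamma$ with $\gamma(s)\to 0$ as $s\to 0^+$ such that
\begin{equation}
\|D\mathcal{F}^{-1}|_{\mathcal{F}[u_\vth]}\|_{op}^{2}\;\le\;\|D\mathcal{F}^{-1}|_{f}\|_{op}^{2}+\gamma\bigl(\|\mathcal{F}[u_\vth]-f\|\bigr)\;=\;\tfrac{\|u\|^{2}}{\|f\|^{2}}\bigl(\mathrm{cond}(\mathcal{P})^{2}+\alpha(\mathcal{L}(\vth))\bigr),
\end{equation}
after absorbing $\|f\|^{2}/\|u\|^{2}$ into the definition of $\alpha$ and using $\mathcal{L}(\vth)=\|\mathcal{F}[u_\vth]-f\|^{2}$. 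Substituting this bound back produces exactly the claimed inequality.

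The main obstacle I anticipate is the third step: carefully extracting a modulus $\alpha(\mathcal{L}(\vth(t)))$ with $\lim_{x\to 0^+}\alpha(x)=0$ from local continuity of $D\mathcal{F}^{-1}$, while keeping the dependence on $\vth$ uniform over the trajectory $\{u_{\vth(t)}\}\subset U$ so that the bound holds for every $t\in[0,\infty)$. The Monte Carlo approximation step is routine under standard regularity of the integrand (accounting for the $\gtrapprox$ symbol), and the chain-rule step is immediate once Fréchet differentiability is granted by the hypothesis; everything technically delicate is packed into controlling $\|D\mathcal{F}^{-1}|_{\mathcal{F}[u_\vth]}\|_{op}$ via the condition number at $f$.
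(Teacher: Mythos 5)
Your proposal follows essentially the same route as the paper's proof: trace of the NTK $\to$ Monte Carlo approximation by the $L^2(\Omega)$ norm (the source of $\gtrapprox$) $\to$ chain rule linking $\partial_{\vth}\mathcal{F}[u_{\vth}]$ to $\partial_{\vth}u_{\vth}$ through $D\mathcal{F}^{-1}$ at $\mathcal{F}[u_{\vth}]$ $\to$ operator-norm bound $\to$ identification $\mathrm{cond}(\mathcal{P})=\frac{\|f\|}{\|u\|}\|D\mathcal{F}^{-1}[f]\|$ with the deviation along the trajectory absorbed into $\alpha(\mathcal{L}(\vth(t)))$. Your only cosmetic deviation is applying $D\mathcal{F}^{-1}$ via $\mathcal{F}^{-1}\circ\mathcal{F}=\mathrm{id}$ coordinate-wise rather than writing $\partial\mathcal{F}/\partial u=(D\mathcal{F}^{-1}[f_{\vth}])^{-1}$ as the paper does, and the ``delicate'' third step you flag is handled no more rigorously in the paper itself, so the proposal is correct and matches the paper's argument.
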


\begin{proof}
The complete proof is given by Appendix~\ref{app:ntk}.
\end{proof}

\begin{remark}
According to the above theorem, a small condition number could greatly accelerate the convergence. We empirically validate this finding in Section~\ref{sec:exp:cond}.
\end{remark}



\section{Training PINNs with a Preconditioner}\label{sec:algorithm}


In this section, we present a preconditioning method to improve the condition number inherent to the PDE problem addressed by PINNs, thereby enhancing prediction accuracy and convergence.

\paragraph{Discretization of PDEs.}
We begin with well-posed linear BVPs defined on a rectangular domain $\Omega$, where the differential operator $\mathcal{F}$ is linear. We employ the finite difference method (FDM) to discretize the BVP on a $N$-point uniform mesh $\{ \vx^{(i)} \}_{i=1}^{N}$: $\mA \vu = \vb$. Here, $\mA\in \mathbb{R}^{N\times N}$ is an invertible sparse matrix, $\vu=( u(\vx^{(i)}) )_{i=1}^{N}$\footnote{To be precise, due to errors in the numerical format, $\vu$ is only approximately equal to the values of the true solution $u$ at corresponding points.}, and $\vb=( f(\vx^{(i)}) )_{i=1}^{N}$.

\paragraph{Preconditioning Algorithm.}
For slightly complex problems, the condition number may reach the level of $10^3$ (see Section~\ref{sec:exp:cond}). To improve it, a preconditioning algorithm is employed to compute a matrix $\mP$ to construct an equivalent linear system: $\mP^{-1}\mA \vu = \mP^{-1}\vb$. Prevalent preconditioning algorithms such as incomplete LU (ILU) factorization (i.e., $\mP=\widehat{\mL}\widehat{\mU}\approx \mA$, where $\widehat{\mL}, \widehat{\mU}$ are sparse invertible lower and upper triangular matrices, respectively) can reduce the condition number by several orders of magnitude while keeping the time cost much cheaper than solving $\mA \vu = \vb$ \citep{shabat2018randomized}. This can be formulated as:
\begin{equation}\label{eq:original_cond}
\begin{aligned}
\mathrm{cond}(\mathcal{P}) \approx \frac{\| \vb \|}{\| \vu \|} \| \mA^{-1} \|  &\longrightarrow \frac{\| \mP^{-1}\vb \|}{\| \vu \|} \| \mA^{-1}\mP \| \\
&\approx \frac{\| \mA^{-1}\vb \|}{\| \vu \|} \| \mA^{-1}\mA \| = 1,
\end{aligned}
\end{equation}
where $\| \cdot \|$ is the $L^2$ vector/matrix norm. A detailed derivation is provided in Appendix~\ref{app:original_cond}. Finally, we can train PINNs with precomputed preconditioners as displayed in Algorithm~\ref{alg}.

\begin{algorithm}[tb]
   \caption{Training PINNs with a preconditioner}
   \label{alg}
\begin{algorithmic}[1]
   \STATE {\bfseries Input:} number of iterations $K$, mesh size $N$, learning rate $\eta$, and initial parameters $\vth^{(0)}$
   \STATE {\bfseries Output:} optimized parameters $\vth^{(K)}$
   \STATE Generate a mesh $\{ \vx^{(i)} \}_{i=1}^{N}$ for the problem domain $\Omega$
   \STATE Assemble the linear system $\mA, \vb$, where $\mA$ is a sparse matrix
   \STATE Compute the preconditioner $\mP = \widehat{\mL}\widehat{\mU}$ via ILU, where $\widehat{\mL}, \widehat{\mU}$ are both sparse matrices
   \FOR{$k=1,\dots,K$}
        \STATE Evaluate the neural network $u_{\vth^{(k-1)}}$ on mesh points to obtain: $\vu_{\vth^{(k-1)}}=( u_{\vth^{(k-1)}}(\vx^{(i)}) )_{i=1}^{N}$
        \STATE Compute the loss function $\mathcal{L}^\dagger(\vth^{(k-1)})$ using:
            \begin{equation}
            \begin{aligned}
            \mathcal{L}^\dagger(\vth) &= \left\| \mP^{-1} (\mA \vu_{\vth} - \vb)  \right\|^2 \\
            &= \left\| \widehat{\mU}^{-1} \widehat{\mL}^{-1} (\mA \vu_{\vth} - \vb)  \right\|^2,
            \end{aligned}
        \end{equation}
        which incorporates the following steps:
        \begin{enumerate}[label=(\alph*)]
            \item Compute the residual $\vr\leftarrow \mA \vu_{\vth^{(k-1)}} - \vb$
            \item Solve $\widehat{\mL}\vy = \vr$ and let $\vr \leftarrow \vy$, which should be very fast since $\widehat{\mL}$ is sparse
            \item Solve $\widehat{\mU}\vy = \vr$ and let $\vr \leftarrow \vy$
            \item Compute $\mathcal{L}^\dagger(\vth^{(k-1)})= \| \vr \|^2$
        \end{enumerate}
        \STATE Update the parameters via gradient descent: $\vth^{(k)} \leftarrow \vth^{(k-1)} - \eta \nabla_{\vth} \mathcal{L}^\dagger(\vth^{(k-1)})$
   \ENDFOR
   
   {\bfseries Note:} In our implementation, there is no requirement to design a hard-constraint ansatz for $u_{\vth}$ to adhere to the boundary conditions (BC).  This is because our linear equation $\mA \vu = \vb$ inherently encompasses the BC. Further details can be found in Appendix~\ref{app:algo:bc}.
\end{algorithmic}
\end{algorithm}

\paragraph{Time-Dependent \& Nonlinear Problems.}
While our primary focus in this section is on linear and time-independent PDEs, our approach is readily extended to handle both time-dependent and nonlinear problems with moderate adaptations. For time-dependent cases, there are strategies like treating time as an additional spatial dimension or a time-stepping iterative approach. As for nonlinear problems, techniques involve moving nonlinear terms to the bias $\vb$ or utilizing iterative methods such as the Newton-Raphson method. We have elaborated on these adaptation strategies in Appendix~\ref{app:algo:tmnl} for further reading.

\paragraph{Non-Uniform Mesh \& Modern Numerical Schemes.} While we employed the FDM with a uniform mesh to simplify the formulation, it is essential to emphasize that this choice does not restrict our method's adaptability. In our implementation, we leverage more modern numerical schemes, such as the finite element method (FEM) paired with a non-uniform mesh. To align the theory with this implementation, some definitions, including norms, may need to be adjusted to a minor extent. For instance, a non-uniform mesh might demand a norm definition like $\| \cdot \| = ( \int_{\Omega} |  w(\vx) \cdot (\cdot)|^2 \diff{\vx} )^{\frac{1}{2}}$, where $w\colon \Omega \rightarrow \mathbb{R}$ represents a weight function.



\begin{figure*}[bt]
     \centering
     \begin{subfigure}[b]{0.35\textwidth}
         \centering
         \includegraphics[height=0.7\textwidth]{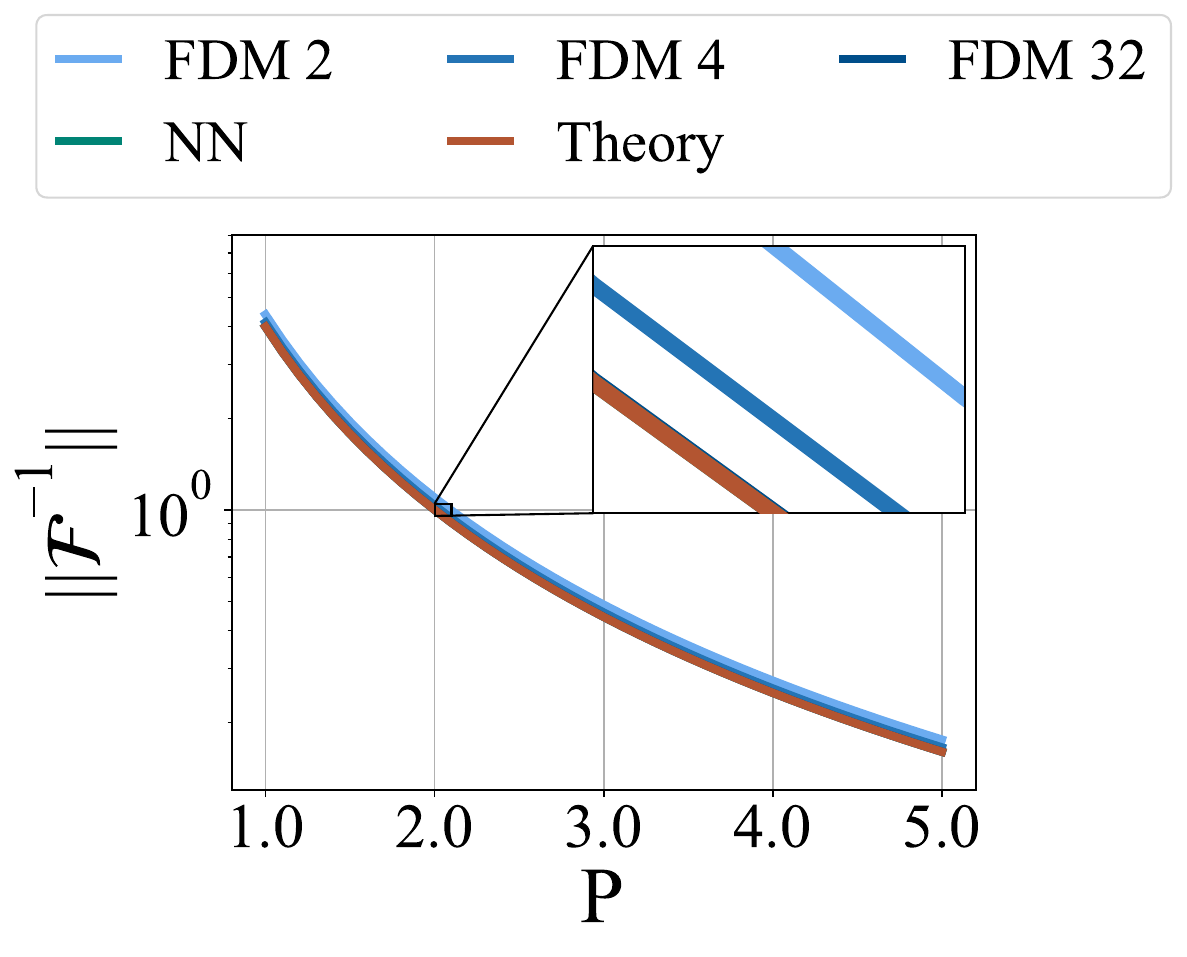}
         \caption{Estimation of $\| \mathcal{F}^{-1} \|$ vs. $P$}
         \label{fig:theory_numerical_cond}
     \end{subfigure}
     \begin{subfigure}[b]{0.35\textwidth}
         \centering
         \includegraphics[height=0.7\textwidth]{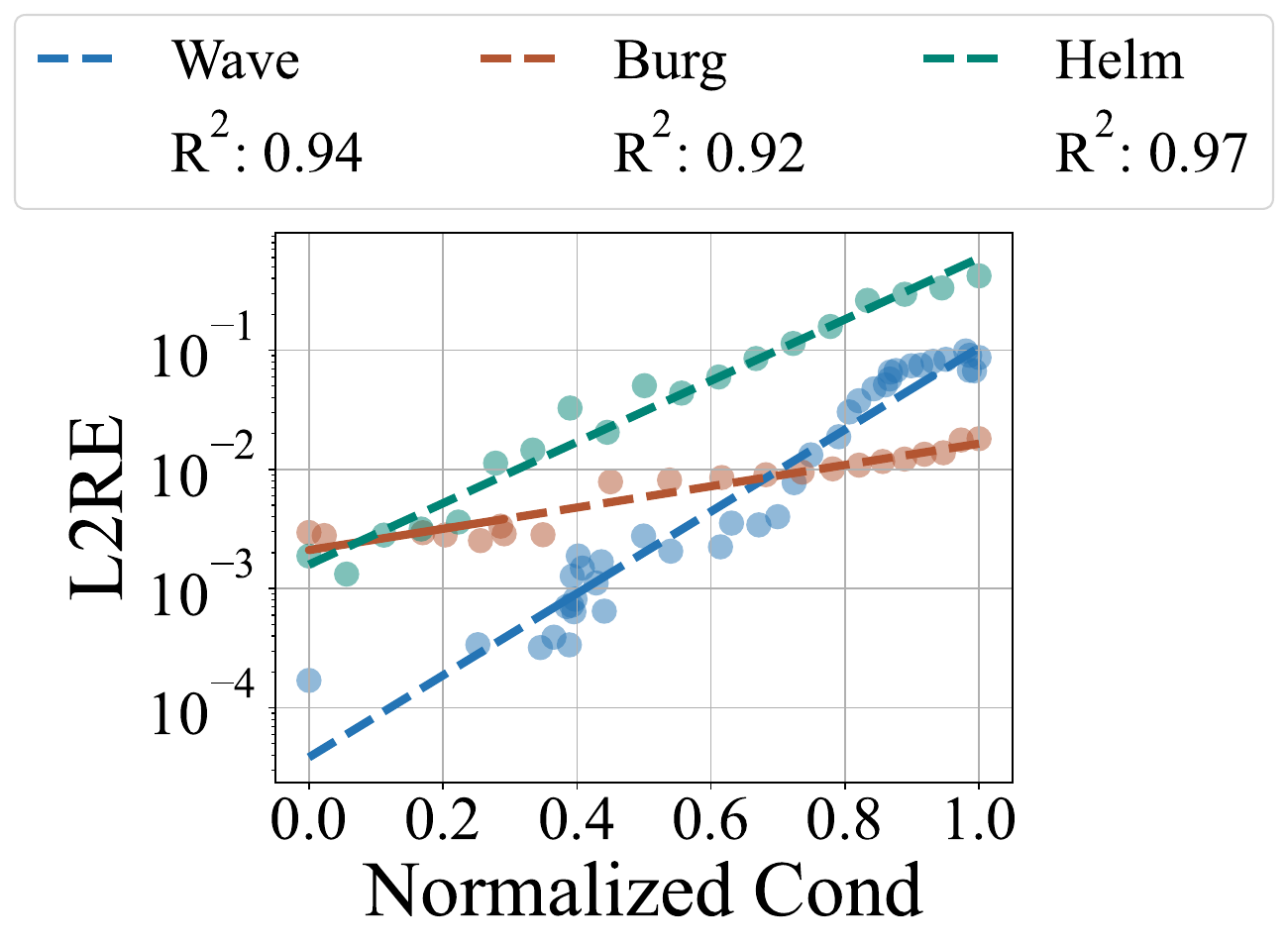}
         \caption{L2RE vs. $\mathrm{cond}(\mathcal{P})$}
         \label{fig:cond_convergence}
     \end{subfigure}
     \begin{subfigure}[b]{0.28\textwidth}
         \centering
         \includegraphics[height=0.76\textwidth]{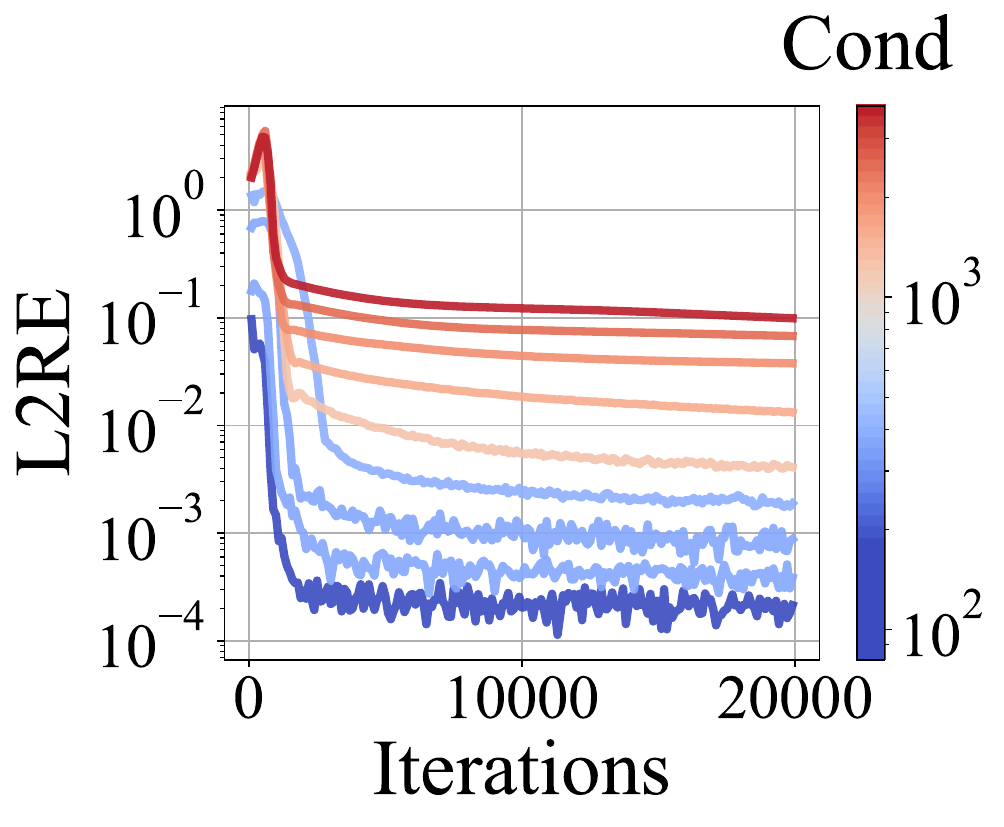}
         \caption{Convergence dynamics}
         \label{fig:cond_errorhistory}
     \end{subfigure}
\caption{\textbf{(a):} Estimations of $\| \mathcal{F}^{-1} \|$ across different $P$ values, with the number after ``FDM'' indicating the mesh size. \textbf{(b):} Strong linear correlation between normalized condition numbers and associated errors. \textbf{(c):} Convergence in the wave equation across different condition numbers.}
\label{fig:exp_relationship}
\end{figure*}

\section{Numerical Experiments}\label{sec:exp}

\subsection{Overview}
In this section, we design numerical experiments to address the following key questions:
\begin{itemize}
    \item \textbf{Q1:} How can we calculate the condition number, and can it characterize pathologies affecting PINNs' prediction accuracy and convergence?

    In Section~\ref{sec:exp:cond}, we propose two estimation methods, validated on a problem with a known analytic condition number. We then apply these methods to approximate the condition number for three practical problems and study its relationship to PINNs' performance. Our results underscore a strong correlation, indicating the correctness of our theory.

    \item \textbf{Q2:} Can the proposed preconditioning algorithm improve the pathology, thereby boosting the performance in solving PDE problems?

    In Section~\ref{sec:exp:forward}, we evaluate our preconditioned PINN (PCPINN) on a comprehensive PINN benchmark \citep{hao2023pinnacle} encompassing 18 PDEs from diverse fields. Employing the $L^2$ relative error (L2RE) as a primary metric (and MSE, L1RE as auxiliary ones), our approach sets a new benchmark: it reduces the error for 7 problems by a magnitude and makes 2 previously unsolvable (L2RE $\approx 100\%$) problems solvable. 

    \item \textbf{Q3:} Does our method require extensive computation time?
    
Figure~\ref{fig:time:1} demonstrates that our approach is comparable to PINNs in terms of computational efficiency and even outpaces it in some cases. Furthermore, although Figure~\ref{fig:time:2} shows that neural network-based methods may not yet be able to outperform traditional solvers in speed, they show promising advantages in the scaling law. This shows that neural networks have potentially significant speed advantages when solving larger problems.

    
\end{itemize}

Besides, in Appendix~\ref{app:exp:real:abla}, we perform extensive ablation studies on hyperparameters to demonstrate the robustness of our method. In Appendix~\ref{app:exp:inverse}, we study two inverse problems to showcase the effectiveness of our method over the traditional adjoint method and the SOTA PINN baseline. The supplementary experimental materials are deferred in Appendix~\ref{app:exp:cond}, \ref{app:exp:forward}, and Appendix~\ref{app:exp:forward:res}.






  \begin{table*}[t]
      \centering
      \renewcommand{\arraystretch}{1.1}
      \caption{Summary of the benchmark challenges. A ``\cmark (*)'' denotes that all problems in the category have the property. Otherwise, it is limited to the listed problems. The serial numbers correspond to the order of problems in Table 2.}
      \vspace{0.1in}
      \resizebox{\linewidth}{!}{
      \begin{tabular}{l|lllllll}
    \hline
    Problem                 & Time-Dependency & Nonlinearity & Complex Geometry & Multi-Scale & Discontinuity & High Frequency  \\ \hline
    Burgers$^{1\sim 2}$ & \cmark ($*$) & \cmark ($*$) & \xmark & \xmark  & \xmark & \cmark ($2$)      \\
    Poisson$^{3\sim 6}$ & \xmark & \xmark & \cmark ($3\sim 5$) & \cmark ($6$)  & \cmark ($5, 6$) & \xmark       \\
    Heat$^{7\sim10}$        & \cmark ($*$) & \cmark ($10$) & \cmark ($9$) & \cmark ($7,8,10$)  & \xmark & \cmark ($8$)       \\
    NS$^{11\sim13}$           & \cmark ($*$)  & \cmark ($*$) & \cmark ($12$) & \cmark ($13$)  & \xmark & \xmark        \\
    Wave$^{14\sim16}$         & \cmark ($*$) & \xmark & \xmark & \cmark ($16$)  & \xmark & \cmark ($15$)       \\
    Chaotic$^{17\sim18}$      & \cmark ($*$) & \cmark ($*$) & \xmark & \cmark ($*$)  & \xmark & \cmark ($*$)         \\
    \hline
    \end{tabular}}
      \label{tab:overview}
  \end{table*}

\begin{table*}[t]
    \centering
      \caption{Comparison of the average L2RE over 5 trials between our method and top PINN baselines. Best results are highlighted in {\setlength{\fboxsep}{3pt}\colorbox{tableblue}{\textbf{blue}}} and second-places in {\setlength{\fboxsep}{2pt}\colorbox{tablelightblue}{lightblue}}. ``NA'' denotes non-convergence or unsuitability for a given case. ``$\rddagger$'' signifies our method outperforming others by an order of magnitude or being the sole method to bring error under $100\%$ notably.}\label{tb:l2re}
      \vspace{0.1in}
      \begin{threeparttable}
      \resizebox{\linewidth}{!}{
\begin{tabular}{ll|cccccccccc}
\hline
\multicolumn{2}{c|}{}                               & \multicolumn{1}{c|}{}                                     & \multicolumn{2}{c|}{Vanilla}                   & \multicolumn{2}{c|}{Loss Reweighting}                                       & \multicolumn{1}{c|}{Optim}       & \multicolumn{1}{c|}{Loss Fn}  & \multicolumn{3}{c}{Architecture}                                                                                   \\ \cline{4-12} 
\multicolumn{2}{c|}{\multirow{-2}{*}{\textbf{L2RE} \resizebox{0.6em}{1.2em}{$\boldsymbol{\downarrow}$}}} & \multicolumn{1}{c|}{\multirow{-2}{*}{Ours}}               & PINN    & \multicolumn{1}{c|}{PINN-w}          & LRA                                  & \multicolumn{1}{c|}{NTK}             & \multicolumn{1}{c|}{MAdam}       & \multicolumn{1}{c|}{gPINN}           & LAAF                                 & GAAF                                 & \multicolumn{1}{l}{FBPINN}           \\ \hline
                                            & 1d-C  & \cellcolor{tableblue}\textbf{1.42e-2} & 1.45e-2 & 2.63e-2                              & 2.61e-2                              & 1.84e-2                              & 4.85e-2                              & 2.16e-1                              & \cellcolor{tablelightblue}{1.43e-2} & 5.20e-2                              & 2.32e-1                              \\
\multirow{-2}{*}{Burgers}                   & 2d-C  & 5.23e-1                           & 3.24e-1 & \cellcolor{tablelightblue}{2.70e-1} & \cellcolor{tableblue}{\textbf{2.60e-1}}    & 2.75e-1                              & 3.33e-1                              & 3.27e-1                              & 2.77e-1                              & 2.95e-1                              & NA                \\ \hline
                                            & $\text{2d-C}^\rddagger$  & 
                                            \cellcolor{tableblue}\textbf{3.98e-3} & 6.94e-1 & 3.49e-2                              & 1.17e-1                              & \cellcolor{tablelightblue}{1.23e-2} & 2.63e-2                              & 6.87e-1                              & 7.68e-1                              & 6.04e-1                              & 4.49e-2                              \\
                                            & $\text{2d-CG}^\rddagger$ & \cellcolor{tableblue}\textbf{5.07e-3} & 6.36e-1 & 6.08e-2                              & 4.34e-2                              & \cellcolor{tablelightblue}{1.43e-2} & 2.76e-1                              & 7.92e-1                              & 4.80e-1                              & 8.71e-1                              & 2.90e-2                              \\
                                            & $\text{3d-CG}^\rddagger$ & \cellcolor{tableblue}\textbf{4.16e-2} & 5.60e-1 & 3.74e-1                              & \cellcolor{tablelightblue}1.02e-1                              & 9.47e-1                              & 3.63e-1                              & 4.85e-1                              & 5.79e-1                              & 5.02e-1                              & 7.39e-1                              \\
\multirow{-4}{*}{Poisson}                   & $\text{2d-MS}^\rddagger$ & \cellcolor{tableblue}\textbf{6.40e-2} & 6.30e-1 & 7.60e-1                              & 7.94e-1                              & 7.48e-1                              & \cellcolor{tablelightblue}{5.90e-1} & 6.16e-1                              & 5.93e-1                              & 9.31e-1                              & 1.04e+0                              \\ \hline
                                            & $\text{2d-VC}^\rddagger$ & \cellcolor{tableblue}\textbf{3.11e-2} & 1.01e+0 & 2.35e-1                              & \cellcolor{tablelightblue}{2.12e-1} & 2.14e-1                              & 4.75e-1                              & 2.12e+0                              & 6.42e-1                              & 8.49e-1                              & 9.52e-1                              \\
                                            & 2d-MS & \cellcolor{tableblue}\textbf{2.84e-2} & 6.21e-2 & 2.42e-1                              & 8.79e-2                              & \cellcolor{tablelightblue}{4.40e-2} & 2.18e-1                              & 1.13e-1                              & 7.40e-2                              & 9.85e-1                              & 8.20e-2                              \\
                                            & 2d-CG & \cellcolor{tableblue}\textbf{1.50e-2} & 3.64e-2 & 1.45e-1                              & 1.25e-1                              & 1.16e-1                              & 7.12e-2                              & 9.38e-2                              & \cellcolor{tablelightblue}{2.39e-2} & 4.61e-1                              & 9.16e-2                              \\
\multirow{-4}{*}{Heat}                      & $\text{2d-LT}^\rddagger$ & \cellcolor{tableblue}\textbf{2.11e-1} & \cellcolor{tablelightblue}9.99e-1 & 9.99e-1                              & 9.99e-1                              & 1.00e+0                              & 1.00e+0                              & 1.00e+0                              & 9.99e-1                              & 9.99e-1                              & 1.01e+0                              \\ \hline
                                            & 2d-C  & \cellcolor{tableblue}\textbf{1.28e-2} & 4.70e-2 & 1.45e-1                              & NA                & 1.98e-1                              & 7.27e-1                              & 7.70e-2                              & \cellcolor{tablelightblue}{3.60e-2} & 3.79e-2                              & 8.45e-2                              \\
                                            & 2d-CG & \cellcolor{tableblue}\textbf{6.62e-2} & 1.19e-1 & 3.26e-1                              & 3.32e-1                              & 2.93e-1                              & 4.31e-1                              & 1.54e-1                              & \cellcolor{tablelightblue}{8.24e-2} & 1.74e-1                              & 8.27e+0                              \\
\multirow{-3}{*}{NS}                        & 2d-LT & \cellcolor{tableblue}\textbf{9.09e-1} & 9.96e-1 & 1.00e+0                              & 1.00e+0                              & 9.99e-1                              & 1.00e+0                              & \cellcolor{tablelightblue}{9.95e-1} & 9.98e-1                              & 9.99e-1                              & 1.00e+0                              \\ \hline
                                            & 1d-C  & \cellcolor{tableblue}\textbf{1.28e-2} & 5.88e-1 & 2.85e-1                              & 3.61e-1                              & \cellcolor{tablelightblue}{9.79e-2} & 1.21e-1                              & 5.56e-1                              & 4.54e-1                              & 6.77e-1                              & 5.91e-1                              \\
                                            & 2d-CG & \cellcolor{tableblue}\textbf{5.85e-1} & 1.84e+0 & 1.66e+0                              & 1.48e+0                              & 2.16e+0                              & 1.09e+0                              & 8.14e-1                              & 8.19e-1                              & \cellcolor{tablelightblue}{7.94e-1} & 1.06e+0                              \\
\multirow{-3}{*}{Wave}                      & $\text{2d-MS}^\rddagger$ & \cellcolor{tableblue}\textbf{5.71e-2} & 1.34e+0 & 1.02e+0                              & 1.02e+0                              & 1.04e+0                              & 1.01e+0 & 1.02e+0                              & 1.06e+0                              & 1.06e+0                              & 1.03e+0                              \\ \hline
                                            & GS    & \cellcolor{tableblue}\textbf{1.44e-2} & 3.19e-1 & 1.58e-1                              & 9.37e-2                              & 2.16e-1                              & 9.37e-2                              & 2.48e-1                              & 9.47e-2                              & 9.46e-2                              & \cellcolor{tablelightblue}{7.99e-2} \\
\multirow{-2}{*}{Chaotics}                  & KS    & \cellcolor{tableblue}\textbf{9.52e-1} & 1.01e+0 & 9.86e-1                              & \cellcolor{tablelightblue}{9.57e-1}                             & 9.64e-1                              & 9.61e-1                              & 9.94e-1                              & 1.01e+0                              & 1.00e+0                              & 1.02e+0                              \\ \hline
\end{tabular}}
    \begin{tablenotes}
      \small
      \item Abbreviations: ``Optim'' for optimizer, ``MAdam'' for MultiAdam, and ``Loss Fn'' for ``Loss Function''.
    \end{tablenotes}
    \end{threeparttable}
\end{table*}

\subsection{Relationship Between Condition Number and Error \& Convergence}\label{sec:exp:cond}
In this section, we empirically validate the theoretical findings in Section~\ref{sec:analyze}, especially the role of condition number in affecting the prediction accuracy and convergence of PINNs. Details of PDEs and implementation can be found in Appendix~\ref{app:exp:cond}. All experimental results are the average of 5 trials.

We begin by introducing two practical techniques to estimate the condition number when the ground-truth solution is provided:
\begin{enumerate}
    \item Training a neural network to find the suprema in Eq.~\eqref{eq:cond} with a small fixed $\epsilon$; 
    \item  Leveraging the finite difference method (FDM) to discretize the PDEs and subsequently approximating the condition number using the matrix norm as discussed in Eq.~\eqref{eq:original_cond}.
\end{enumerate}
To substantiate the reliability of these estimation techniques, we reconsider the 1D Poisson equation presented in Section~\ref{sec:model}. Since $\| u \|$ and $\| f \|$ can be computed straightforwardly, our focus pivots to approximating $\| \mathcal{F}^{-1} \|$. Figure~\ref{fig:theory_numerical_cond} captures our estimations across varied $P$ values, showcasing the close alignment with our theorem.

Transitioning to more intricate scenarios, we consider 3 practical problems: wave, Helmholtz, and Burgers' equation. System parameters within each problem are different: frequency $C$ in Wave, source term parameter $A$ in Helmholtz, and viscosity $\nu$ in Burgers. We vary the system parameter and monitor the subsequent influence on the condition number and error.

Figure~\ref{fig:cond_convergence} unveils that a \emph{strong}, but \emph{simple} linear correlation emerges between normalized condition numbers and their corresponding errors, suggesting that the condition number could be highly related to PINNs' performance. This relationship, however, varies across different equations depending on the specific normalization technique used. For instance, in the wave equation, $\log(\text{L2RE})$ exhibits a linear relationship with $\log(\mathrm{cond}(\mathcal{P}))$, while in Helmholtz, $\log(\text{L2RE})$ corresponds to $\sqrt{\mathrm{cond}(\mathcal{P})}$. A detailed interpretation of these patterns, through the lens of physics, is discussed in Appendix~\ref{app:exp:connection}. Lastly, Figure~\ref{fig:cond_errorhistory} underscores the condition number's profound impact on convergence dynamics, particularly evident in the wave equation, affirming the validity of our theoretical frameworks.


\begin{figure*}[tb]
     \centering
     \begin{subfigure}[b]{0.32\textwidth}
         \centering
         \includegraphics[height=0.72\textwidth]{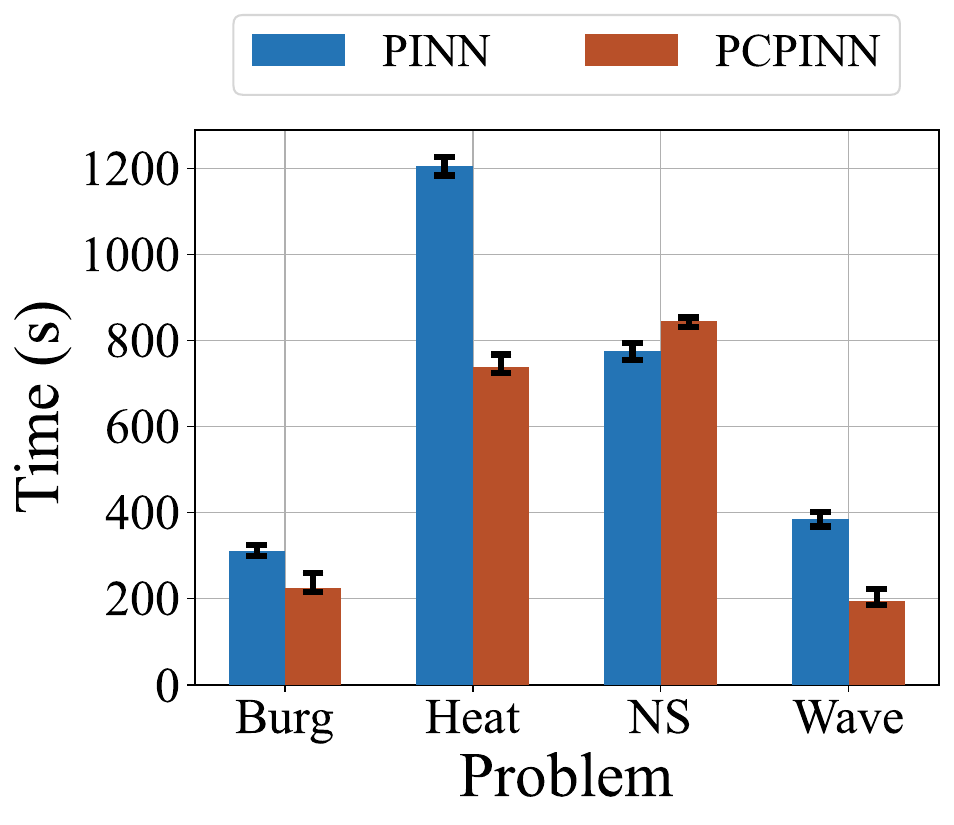}
         \caption{Time: PINN vs. Ours}
         \label{fig:time:1}
     \end{subfigure}
     \begin{subfigure}[b]{0.34\textwidth}
         \centering
         \includegraphics[height=0.68\textwidth]{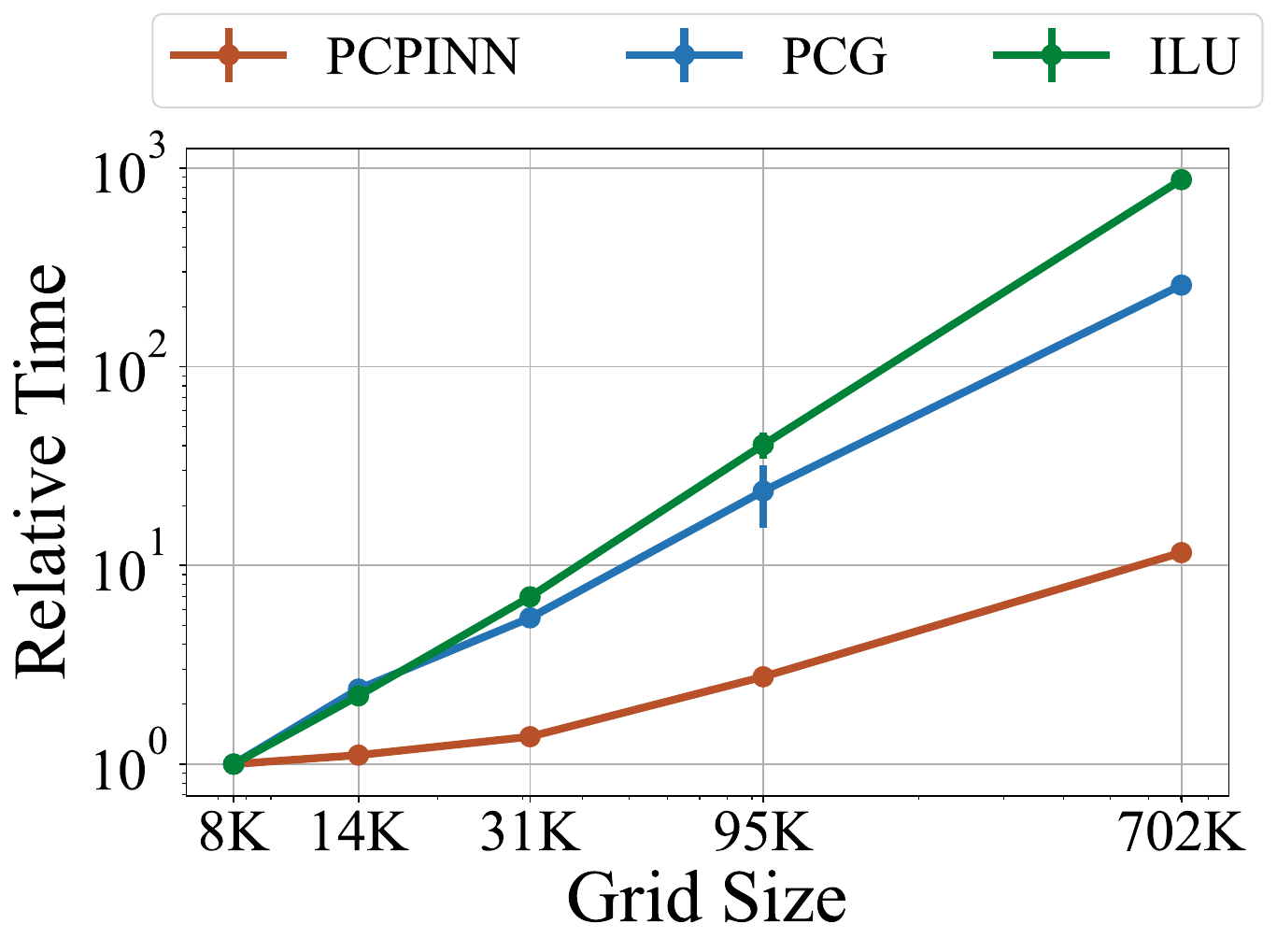}
         \caption{Scaling Law in Time: mean $\pm$ std}
         \label{fig:time:2}
     \end{subfigure}
     \begin{subfigure}[b]{0.31\textwidth}
         \centering
         \includegraphics[height=0.72\textwidth]{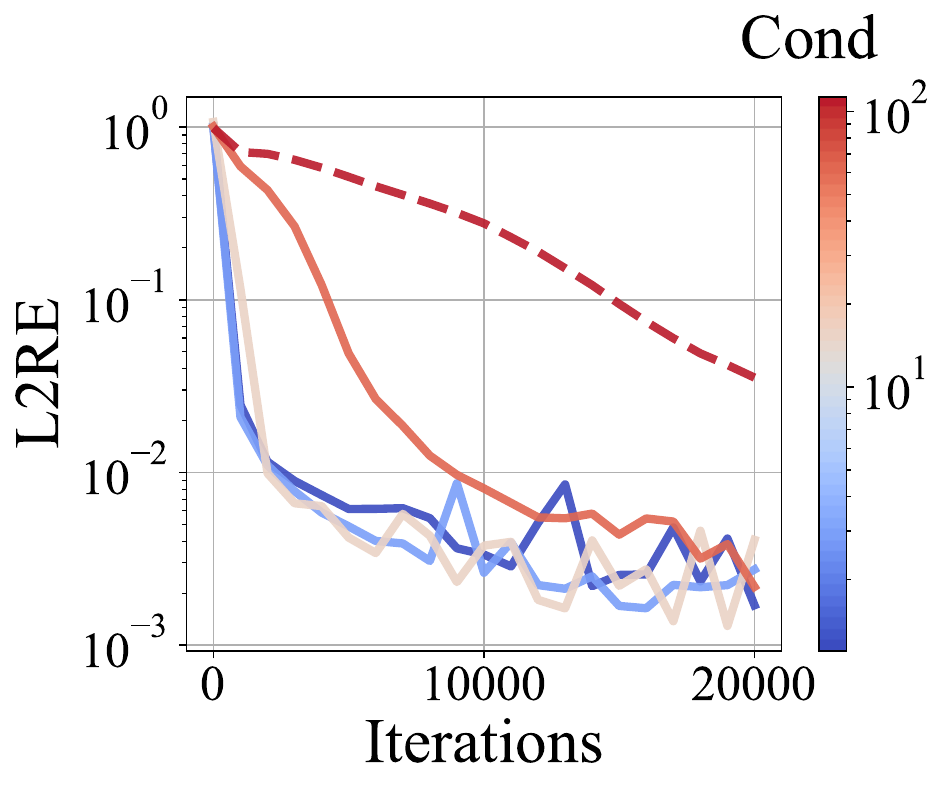}
         \caption{Effect of preconditioner precision}
         \label{fig:time:3}
     \end{subfigure}
\caption{\textbf{(a):} Computation time of PCPINN (ours) and vanilla PINN in selected problems, with error bars showing the $[\mathrm{min}, \mathrm{max}]$ in 5 trials. \textbf{(b):} Scaling law of computational time relative to an 8K grid size, contrasting our PCPINN with the preconditioned conjugate gradient method (PCG) and the preconditioning (ILU). \textbf{(c):} Convergence dynamics under varying preconditioner precision, with the dashed line for no preconditioner and the color bar for condition numbers: $\frac{\| \mP^{-1}\vb \|}{\| \vu \|} \| \mA^{-1}\mP \|$ under different preconditioner precisions.}
\label{fig:time}
\end{figure*}

\subsection{Benchmark of Forward Problems}\label{sec:exp:forward}
We consider the comprehensive PINN benchmark, PINNacle \citep{hao2023pinnacle}, encompassing 20 forward PDE problems and 10+ state-of-the-art PINN baselines. These problems, highlighted in Table~\ref{tab:overview}, include challenges from multi-scale properties to intricate geometries and diverse domains from fluids to chaos, underscoring the benchmark's difficulty and diversity. Further details on the benchmark can be found in \citep{hao2023pinnacle}.


\paragraph{Results and Performance.}
From the set of 20 problems, we have tested our method on 18, excluding 2 high-dimensional PDEs due to our method's mesh-based inherency. The experimental results are derived from 5 trials, with baseline results sourced directly from the PINNacle paper. In most cases, as detailed in Table~\ref{tb:l2re}, our method has achieved superior performance, showcasing a remarkable error drop (by an order of magnitude) for \textbf{7} problems. In \textbf{2} of these, ours uniquely achieved acceptable approximation, with competitors yielding errors at nearly $100\%$. Our success is attributed to the employed preconditioner, mitigating intrinsic pathologies and enhancing PINN performance. For the supplementary results and experimental details, including PDEs, baselines, and implementation specifics, please refer to Appendix~\ref{app:exp:forward:res} and Appendix~\ref{app:exp:forward}.



\paragraph{Convergence Analysis.}
Using the 1D wave equation for illustration, our method's convergence dynamics surpass those of traditional baselines. As depicted in Figure~\ref{fig:intro:1}, we achieve \emph{superexponential} convergence, while baselines show a slower, oscillating trajectory. Notably, their oscillations look smaller than real because of the logarithm-scale vertical axis. This clear difference is further emphasized in Figure~\ref{fig:intro:2}, where our method swiftly identifies the correct minimum, attributed to our preconditioner's ability to reshape the optimization landscape, facilitating rapid convergence with minimal oscillations.


\paragraph{Computation Time Analysis.}
We compare the computation time of our method to that of the vanilla PINN across diverse problems including Wave1d-C, Burgers1d-C, Heat2d-VC, and NS2d-C. As shown in Figure~\ref{fig:time:1}, our method is efficient, sometimes even outpacing the baseline. This efficiency is probably due to our rapid preconditioner calculation (basically less than 3s) and avoidance of time-intensive automatic derivation. Furthermore, we assessed the scalability of our method, the conjugate gradient method (used by the FEM solver), and the ILU for large-scale problems like Poisson3d-CG. While the neural network currently lags behind traditional methods in speed, its growth rate is remarkably slower by nearly two orders of magnitude. As Figure~\ref{fig:time:2} suggests, we anticipate superior scaling in even larger problems, thanks to the neural network's capacity to operate on low-dimensional manifolds, effectively mitigating the curse of dimensionality.



\paragraph{Effect of Preconditioner Precision.}
In our approach, a critical factor is the precision of the preconditioner (i.e., the deviation between $\mP$ and $\mA$), which is controlled by the drop tolerance in ILU. We have conducted ablation studies on this parameter across four Poisson equation problems. Figure~\ref{fig:time:3} depicts the convergence trajectories of our approach under condition numbers after preconditioning with varying precision in Poisson2d-C. The outcomes indicate a gradual performance decline of our method with decreasing precision of the preconditioner. Absent a preconditioner, our method reverts to a PINN with a discrete loss function, consequently failing to converge. This underscores the indispensable role of the preconditioner in enhancing the performance of PINNs. Comprehensive experimental details are available in Appendix~\ref{app:exp:forward:abla}.







\section{Conclusion and Limitation}
In this work, we have spotlighted the central role of the condition number in characterizing the training pathologies inherent to PINNs. By weaving together insights from traditional numerical analysis with modern machine learning techniques, we have theoretically demonstrated a direct correlation between a reduced condition number and improved PINNs' prediction accuracy and convergence. Our proposed algorithm, tested on a comprehensive benchmark, achieves significant improvements and overcomes challenges previously considered intractable. However, our preconditioning method relies on meshing, which is not feasible for high-dimensional problems. In future work, we will attempt to use neural networks to learn a preconditioner to overcome the curse of dimensionality.



\section*{Broder Impact}
This paper presents work whose goal is to advance the field of Physics-Informed Machine Learning. There are many potential societal consequences of our work, none of which we feel must be specifically highlighted here.

\bibliography{main}

\begin{thebibliography}{41}
\providecommand{\natexlab}[1]{#1}
\providecommand{\url}[1]{\texttt{#1}}
\expandafter\ifx\csname urlstyle\endcsname\relax
  \providecommand{\doi}[1]{doi: #1}\else
  \providecommand{\doi}{doi: \begingroup \urlstyle{rm}\Url}\fi

\bibitem[Aln{\ae}s et~al.(2015)Aln{\ae}s, Blechta, Hake, Johansson, Kehlet, Logg, Richardson, Ring, Rognes, and Wells]{alnaes2015fenics}
Aln{\ae}s, M., Blechta, J., Hake, J., Johansson, A., Kehlet, B., Logg, A., Richardson, C., Ring, J., Rognes, M.~E., and Wells, G.~N.
\newblock The fenics project version 1.5.
\newblock \emph{Archive of numerical software}, 3\penalty0 (100), 2015.

\bibitem[Beerens \& Higham(2023)Beerens and Higham]{beerens2023adversarial}
Beerens, L. and Higham, D.~J.
\newblock Adversarial ink: Componentwise backward error attacks on deep learning.
\newblock \emph{arXiv preprint arXiv:2306.02918}, 2023.

\bibitem[Berg \& Nystr{\"o}m(2018)Berg and Nystr{\"o}m]{berg2018unified}
Berg, J. and Nystr{\"o}m, K.
\newblock A unified deep artificial neural network approach to partial differential equations in complex geometries.
\newblock \emph{Neurocomputing}, 317:\penalty0 28--41, 2018.

\bibitem[Cai et~al.(2021)Cai, Mao, Wang, Yin, and Karniadakis]{cai2021physics}
Cai, S., Mao, Z., Wang, Z., Yin, M., and Karniadakis, G.~E.
\newblock Physics-informed neural networks (pinns) for fluid mechanics: A review.
\newblock \emph{Acta Mechanica Sinica}, 37\penalty0 (12):\penalty0 1727--1738, 2021.

\bibitem[Chen et~al.(2020)Chen, Lu, Karniadakis, and Dal~Negro]{chen2020physics}
Chen, Y., Lu, L., Karniadakis, G.~E., and Dal~Negro, L.
\newblock Physics-informed neural networks for inverse problems in nano-optics and metamaterials.
\newblock \emph{Optics express}, 28\penalty0 (8):\penalty0 11618--11633, 2020.

\bibitem[{COMSOL AB}(2022)]{comsol}
{COMSOL AB}.
\newblock Comsol multiphysics® v. 6.1, 2022.
\newblock URL \url{https://www.comsol.com}.

\bibitem[De~Ryck \& Mishra(2022)De~Ryck and Mishra]{de2022error}
De~Ryck, T. and Mishra, S.
\newblock Error analysis for physics-informed neural networks (pinns) approximating kolmogorov pdes.
\newblock \emph{Advances in Computational Mathematics}, 48\penalty0 (6):\penalty0 1--40, 2022.

\bibitem[De~Ryck et~al.(2022)De~Ryck, Jagtap, and Mishra]{de2022error2}
De~Ryck, T., Jagtap, A.~D., and Mishra, S.
\newblock Error estimates for physics informed neural networks approximating the navier-stokes equations.
\newblock \emph{arXiv preprint arXiv:2203.09346}, 2022.

\bibitem[Geuzaine \& Remacle(2009)Geuzaine and Remacle]{geuzaine2009gmsh}
Geuzaine, C. and Remacle, J.-F.
\newblock Gmsh: A 3-d finite element mesh generator with built-in pre-and post-processing facilities.
\newblock \emph{International journal for numerical methods in engineering}, 79\penalty0 (11):\penalty0 1309--1331, 2009.

\bibitem[Glorot \& Bengio(2010)Glorot and Bengio]{glorot2010understanding}
Glorot, X. and Bengio, Y.
\newblock Understanding the difficulty of training deep feedforward neural networks.
\newblock In \emph{Proceedings of the thirteenth international conference on artificial intelligence and statistics}, pp.\  249--256. JMLR Workshop and Conference Proceedings, 2010.

\bibitem[Guo \& Haghighat(2022)Guo and Haghighat]{guo2022energy}
Guo, M. and Haghighat, E.
\newblock Energy-based error bound of physics-informed neural network solutions in elasticity.
\newblock \emph{Journal of Engineering Mechanics}, 148\penalty0 (8):\penalty0 04022038, 2022.

\bibitem[Hao et~al.(2022)Hao, Liu, Zhang, Ying, Feng, Su, and Zhu]{hao2022physics}
Hao, Z., Liu, S., Zhang, Y., Ying, C., Feng, Y., Su, H., and Zhu, J.
\newblock Physics-informed machine learning: A survey on problems, methods and applications.
\newblock \emph{arXiv preprint arXiv:2211.08064}, 2022.

\bibitem[Hao et~al.(2023)Hao, Yao, Su, Su, Wang, Lu, Xia, Zhang, Liu, Lu, et~al.]{hao2023pinnacle}
Hao, Z., Yao, J., Su, C., Su, H., Wang, Z., Lu, F., Xia, Z., Zhang, Y., Liu, S., Lu, L., et~al.
\newblock Pinnacle: A comprehensive benchmark of physics-informed neural networks for solving pdes.
\newblock \emph{arXiv preprint arXiv:2306.08827}, 2023.

\bibitem[Hilditch(2013)]{hilditch2013introduction}
Hilditch, D.
\newblock An introduction to well-posedness and free-evolution.
\newblock \emph{International Journal of Modern Physics A}, 28\penalty0 (22n23):\penalty0 1340015, 2013.

\bibitem[Huang \& Wang(2022)Huang and Wang]{huang2022applications}
Huang, B. and Wang, J.
\newblock Applications of physics-informed neural networks in power systems-a review.
\newblock \emph{IEEE Transactions on Power Systems}, 2022.

\bibitem[Jacot et~al.(2018)Jacot, Gabriel, and Hongler]{jacot2018ntk}
Jacot, A., Gabriel, F., and Hongler, C.
\newblock Neural tangent kernel: Convergence and generalization in neural networks.
\newblock \emph{Advances in neural information processing systems}, 31, 2018.

\bibitem[Jagtap et~al.(2022)Jagtap, Mao, Adams, and Karniadakis]{jagtap2022physics}
Jagtap, A.~D., Mao, Z., Adams, N., and Karniadakis, G.~E.
\newblock Physics-informed neural networks for inverse problems in supersonic flows.
\newblock \emph{Journal of Computational Physics}, 466:\penalty0 111402, 2022.

\bibitem[Kingma \& Ba(2014)Kingma and Ba]{kingma2014adam}
Kingma, D.~P. and Ba, J.
\newblock Adam: A method for stochastic optimization.
\newblock \emph{arXiv preprint arXiv:1412.6980}, 2014.

\bibitem[Krishnapriyan et~al.(2021)Krishnapriyan, Gholami, Zhe, Kirby, and Mahoney]{krishnapriyan2021characterizing}
Krishnapriyan, A., Gholami, A., Zhe, S., Kirby, R., and Mahoney, M.~W.
\newblock Characterizing possible failure modes in physics-informed neural networks.
\newblock \emph{Advances in Neural Information Processing Systems}, 34:\penalty0 26548--26560, 2021.

\bibitem[Liu et~al.(2022)Liu, Zhongkai, Ying, Su, Zhu, and Cheng]{liu2022unified}
Liu, S., Zhongkai, H., Ying, C., Su, H., Zhu, J., and Cheng, Z.
\newblock A unified hard-constraint framework for solving geometrically complex pdes.
\newblock \emph{Advances in Neural Information Processing Systems}, 35:\penalty0 20287--20299, 2022.

\bibitem[Liu \& Wang(2021)Liu and Wang]{liu2021physics}
Liu, X.-Y. and Wang, J.-X.
\newblock Physics-informed dyna-style model-based deep reinforcement learning for dynamic control.
\newblock \emph{Proceedings of the Royal Society A}, 477\penalty0 (2255):\penalty0 20210618, 2021.

\bibitem[Lu et~al.(2021{\natexlab{a}})Lu, Meng, Mao, and Karniadakis]{lu2021deepxde}
Lu, L., Meng, X., Mao, Z., and Karniadakis, G.~E.
\newblock Deepxde: A deep learning library for solving differential equations.
\newblock \emph{SIAM review}, 63\penalty0 (1):\penalty0 208--228, 2021{\natexlab{a}}.

\bibitem[Lu et~al.(2021{\natexlab{b}})Lu, Pestourie, Yao, Wang, Verdugo, and Johnson]{lu2021physics}
Lu, L., Pestourie, R., Yao, W., Wang, Z., Verdugo, F., and Johnson, S.~G.
\newblock Physics-informed neural networks with hard constraints for inverse design.
\newblock \emph{SIAM Journal on Scientific Computing}, 43\penalty0 (6):\penalty0 B1105--B1132, 2021{\natexlab{b}}.

\bibitem[Martin \& Schaub(2022)Martin and Schaub]{martin2022reinforcement}
Martin, J. and Schaub, H.
\newblock Reinforcement learning and orbit-discovery enhanced by small-body physics-informed neural network gravity models.
\newblock In \emph{AIAA SCITECH 2022 Forum}, pp.\  2272, 2022.

\bibitem[Mishra \& Molinaro(2022)Mishra and Molinaro]{mishra2022estimates}
Mishra, S. and Molinaro, R.
\newblock Estimates on the generalization error of physics-informed neural networks for approximating a class of inverse problems for pdes.
\newblock \emph{IMA Journal of Numerical Analysis}, 42\penalty0 (2):\penalty0 981--1022, 2022.

\bibitem[Pang et~al.(2019)Pang, Lu, and Karniadakis]{pang2019fpinns}
Pang, G., Lu, L., and Karniadakis, G.~E.
\newblock fpinns: Fractional physics-informed neural networks.
\newblock \emph{SIAM Journal on Scientific Computing}, 41\penalty0 (4):\penalty0 A2603--A2626, 2019.

\bibitem[Paszke et~al.(2019)Paszke, Gross, Massa, Lerer, Bradbury, Chanan, Killeen, Lin, Gimelshein, Antiga, et~al.]{paszke2019pytorch}
Paszke, A., Gross, S., Massa, F., Lerer, A., Bradbury, J., Chanan, G., Killeen, T., Lin, Z., Gimelshein, N., Antiga, L., et~al.
\newblock Pytorch: An imperative style, high-performance deep learning library.
\newblock \emph{Advances in neural information processing systems}, 32, 2019.

\bibitem[Rahaman et~al.(2019)Rahaman, Baratin, Arpit, Draxler, Lin, Hamprecht, Bengio, and Courville]{pmlr-v97-rahaman19a}
Rahaman, N., Baratin, A., Arpit, D., Draxler, F., Lin, M., Hamprecht, F., Bengio, Y., and Courville, A.
\newblock On the spectral bias of neural networks.
\newblock In Chaudhuri, K. and Salakhutdinov, R. (eds.), \emph{Proceedings of the 36th International Conference on Machine Learning}, volume~97 of \emph{Proceedings of Machine Learning Research}, pp.\  5301--5310. PMLR, 09--15 Jun 2019.
\newblock URL \url{https://proceedings.mlr.press/v97/rahaman19a.html}.

\bibitem[Raissi et~al.(2019)Raissi, Perdikaris, and Karniadakis]{raissi2019physics}
Raissi, M., Perdikaris, P., and Karniadakis, G.~E.
\newblock Physics-informed neural networks: A deep learning framework for solving forward and inverse problems involving nonlinear partial differential equations.
\newblock \emph{Journal of Computational physics}, 378:\penalty0 686--707, 2019.

\bibitem[Shabat et~al.(2018)Shabat, Shmueli, Aizenbud, and Averbuch]{shabat2018randomized}
Shabat, G., Shmueli, Y., Aizenbud, Y., and Averbuch, A.
\newblock Randomized lu decomposition.
\newblock \emph{Applied and Computational Harmonic Analysis}, 44\penalty0 (2):\penalty0 246--272, 2018.

\bibitem[Sheng \& Yang(2021)Sheng and Yang]{sheng2021pfnn}
Sheng, H. and Yang, C.
\newblock Pfnn: A penalty-free neural network method for solving a class of second-order boundary-value problems on complex geometries.
\newblock \emph{Journal of Computational Physics}, 428:\penalty0 110085, 2021.

\bibitem[Sheng \& Yang(2022)Sheng and Yang]{sheng2022pfnn}
Sheng, H. and Yang, C.
\newblock Pfnn-2: A domain decomposed penalty-free neural network method for solving partial differential equations.
\newblock \emph{arXiv preprint arXiv:2205.00593}, 2022.

\bibitem[S{\"u}li \& Mayers(2003)S{\"u}li and Mayers]{suli2003introduction}
S{\"u}li, E. and Mayers, D.~F.
\newblock \emph{An introduction to numerical analysis}.
\newblock Cambridge university press, 2003.

\bibitem[Tancik et~al.(2020)Tancik, Srinivasan, Mildenhall, Fridovich-Keil, Raghavan, Singhal, Ramamoorthi, Barron, and Ng]{tancik2020fourier}
Tancik, M., Srinivasan, P., Mildenhall, B., Fridovich-Keil, S., Raghavan, N., Singhal, U., Ramamoorthi, R., Barron, J., and Ng, R.
\newblock Fourier features let networks learn high frequency functions in low dimensional domains.
\newblock \emph{Advances in Neural Information Processing Systems}, 33:\penalty0 7537--7547, 2020.

\bibitem[Wang et~al.(2021)Wang, Teng, and Perdikaris]{wang2021understanding}
Wang, S., Teng, Y., and Perdikaris, P.
\newblock Understanding and mitigating gradient flow pathologies in physics-informed neural networks.
\newblock \emph{SIAM Journal on Scientific Computing}, 43\penalty0 (5):\penalty0 A3055--A3081, 2021.

\bibitem[Wang et~al.(2022{\natexlab{a}})Wang, Sankaran, and Perdikaris]{wang2022respecting}
Wang, S., Sankaran, S., and Perdikaris, P.
\newblock Respecting causality is all you need for training physics-informed neural networks.
\newblock \emph{arXiv preprint arXiv:2203.07404}, 2022{\natexlab{a}}.

\bibitem[Wang et~al.(2022{\natexlab{b}})Wang, Yu, and Perdikaris]{wang2022and}
Wang, S., Yu, X., and Perdikaris, P.
\newblock When and why pinns fail to train: A neural tangent kernel perspective.
\newblock \emph{Journal of Computational Physics}, 449:\penalty0 110768, 2022{\natexlab{b}}.

\bibitem[Wang et~al.(2022{\natexlab{c}})Wang, Yu, and Perdikaris]{wang2022ntk}
Wang, S., Yu, X., and Perdikaris, P.
\newblock When and why pinns fail to train: A neural tangent kernel perspective.
\newblock \emph{Journal of Computational Physics}, 449:\penalty0 110768, 2022{\natexlab{c}}.

\bibitem[Xu et~al.(2019)Xu, Zhang, Luo, Xiao, and Ma]{xu2019frequency}
Xu, Z.-Q.~J., Zhang, Y., Luo, T., Xiao, Y., and Ma, Z.
\newblock Frequency principle: Fourier analysis sheds light on deep neural networks.
\newblock \emph{arXiv preprint arXiv:1901.06523}, 2019.

\bibitem[Yang et~al.(2021)Yang, Meng, and Karniadakis]{yang2021b}
Yang, L., Meng, X., and Karniadakis, G.~E.
\newblock B-pinns: Bayesian physics-informed neural networks for forward and inverse pde problems with noisy data.
\newblock \emph{Journal of Computational Physics}, 425:\penalty0 109913, 2021.

\bibitem[Zhu et~al.(2021)Zhu, Liu, and Yan]{zhu2021machine}
Zhu, Q., Liu, Z., and Yan, J.
\newblock Machine learning for metal additive manufacturing: predicting temperature and melt pool fluid dynamics using physics-informed neural networks.
\newblock \emph{Computational Mechanics}, 67:\penalty0 619--635, 2021.

\end{thebibliography}
\bibliographystyle{icml2024}

\newpage
\appendix
\onecolumn

\section{Supplements for Section~\ref{sec:analyze}}
The following are general assumptions across our theories:

\begin{assumption}\label{ass:1}
The problem domain $\Omega$ is an open, bounded, and nonempty subset of $\mathbb{R}^d$, where $d\in \mathbb{N}^+$ is the spatial(-temporal) dimensionality. And 
\end{assumption}

\begin{assumption}\label{ass:2}
The boundary value problem (BVP) considered in Eq.~\eqref{eq:bvp} is well-posed, which means the solution exists and is unique, and $\mathcal{F}^{-1}$ is well-defined.
\end{assumption}


\begin{assumption}\label{ass:4}
$\| u \| \neq 0$ and $\| f \| \neq 0$.
\end{assumption}

\begin{remark}
This assumption assures that the \emph{relative} conditional number is well-defined. If it is not satisfied, we could define the \emph{absolute} conditional number by removing the zero terms. 
\end{remark}



\begin{assumption}\label{ass:7}
For any continuous function $v$ defined on $\Omega$ (i.e., $v\in C(\Omega)$), it holds that $\inf_{\vth\in\Theta} \| u_{\vth} - v \| = 0$.
\end{assumption}

\begin{remark}
We assume that the neural network has sufficient approximation capability and ignore the corresponding error.
\end{remark}

\subsection{Proof for Theorem~\ref{theo:cond:bound}}\label{app:cond:bound}



Under Assumption~\ref{ass:1} -- \ref{ass:7}, the proof of Theorem~\ref{theo:cond:bound} is given as follows.

\begin{proof}
According to the local Lipschitz continuity of $\mathcal{F}^{-1}$, there exists $r>0$ such that:
\begin{equation}
    \left\| \mathcal{F}^{-1}[w_1] - \mathcal{F}^{-1}[w_2] \right\| \le K \| w_1 - w_2 \|,
\end{equation}
holds for any $w_1, w_2 \in W$ which satisfy that $\|w_1 - f\| < r$ and $\|w_2 - f\| < r$.

Taking an $\epsilon < r$, we can derive that:
\begin{equation}
\begin{aligned}
    &\sup_{0<\| \delta f \| \le \epsilon} \frac{\| \delta u \| \big/ \| u \|}{\| \delta f \| \big/ \| f \|}\\
    &=\frac{\| f \|}{\| u \|}\sup_{0<\| \mathcal{F}[u_{\vth}] - f \| \le \epsilon} \frac{\| u_{\vth} - u \|}{\| \mathcal{F}[u_{\vth}] - f\|}\\
    &= \frac{\| f \|}{\| u \|}\sup_{0<\| h \| \le \epsilon} \frac{\| \mathcal{F}^{-1}[f+h] - \mathcal{F}^{-1}[f] \|}{\| h\|} &&\text{(let $\mathcal{F}[u_{\vth}] - f = h$)}\\
    &\le \frac{\| f \|}{\| u \|}\sup_{0<\| h \| \le \epsilon} \frac{K\left\| h \right\| }{\| h \|} \\
    &= \frac{\| f \|}{\| u \|}K.
\end{aligned}
\end{equation}

Finally, let $\epsilon \rightarrow 0^+$, we can prove the theorem:
\begin{equation}
    \mathrm{cond}(\mathcal{P}) = \lim_{\epsilon\to 0^+} \sup_{0<\| \delta f \| \le \epsilon} \frac{\| \delta u \| \big/ \| u \|}{\| \delta f \| \big/ \| f \|} \le
    \frac{\| f \|}{\| u \|} K.
\end{equation}
    
\end{proof}

\subsection{The Existence of Condition Number in Special Cases}\label{app:prop}

\begin{proposition}\label{theo:prop}
Considering a well-posed $\mathcal{P}: \{ \mathcal{F}[u] = f \text{ in } \Omega,  u = g \text{ in } \partial\Omega\}$, we assert that:
\begin{enumerate}
    \item If $\mathcal{F}$ is linear (i.e., a linear PDE) and $g = 0$ (homogeneous BC), then $\mathcal{F}^{-1}$ is a bounded linear operator and $\mathrm{cond}(\mathcal{P}) = \frac{\| f \|}{\| u \|} \| \mathcal{F}^{-1} \| < \infty$.
    \item Define $\mathcal{P}_1: \{ \mathcal{F}[u] = 0 \text{ in } \Omega,  u = g \text{ in } \partial\Omega\}$. If $\mathcal{F}$ is linear and $\mathcal{P}_1$ is well-posed, then $\mathrm{cond}(\mathcal{P}) < \infty$.
    \item If $\mathcal{F}^{-1}$ is Fréchet differentiable at $f$, then $\mathrm{cond}(\mathcal{P}) = \frac{\| f \|}{\| u \|} \| D\mathcal{F}^{-1}[f] \| < \infty$, where $D\mathcal{F}^{-1}[f]\colon W\rightarrow V$ is a bounded linear operator, the Fréchet derivative of $\mathcal{F}^{-1}$ at $f$.
\end{enumerate}
\end{proposition}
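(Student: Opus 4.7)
The plan is to handle the three parts in order, using Definition~\ref{def:cond} directly together with linearity or differentiability to replace the abstract Lipschitz bound of Theorem~\ref{theo:cond:bound} by an exact computation of the supremum. A preliminary observation I would use throughout is that, by Assumption~\ref{ass:7}, ranging $\vth$ over $\Theta$ allows $\mathcal{F}[u_\vth]$ to approximate any admissible target arbitrarily well; hence the supremum over $\vth$ with $0<\|\delta f\|\le\eps$ coincides, in the $\eps\to 0^+$ limit, with the supremum over all $h\in W$ with $0<\|h\|\le\eps$ such that $f+h\in S$. I would justify this as a short lemma before entering the three cases.

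For part (1), with $\mathcal{F}$ linear and $g=0$ the admissible subspace $V_0=\{v\in V:v|_{\partial\Omega}=0\}$ is linear and $\mathcal{F}|_{V_0}:V_0\to S$ is a linear bijection by well-posedness; continuous dependence on data (the third Hadamard condition) gives boundedness of $\mathcal{F}^{-1}$. Linearity then yields $\delta u=\mathcal{F}^{-1}[f+\delta f]-\mathcal{F}^{-1}[f]=\mathcal{F}^{-1}[\delta f]$, so
\begin{equation}
\sup_{0<\|\delta f\|\le\eps}\frac{\|\delta u\|}{\|\delta f\|}=\sup_{0<\|h\|\le\eps}\frac{\|\mathcal{F}^{-1}[h]\|}{\|h\|}=\|\mathcal{F}^{-1}\|,
\end{equation}
independent of $\eps$, and multiplying by $\|f\|/\|u\|$ gives the claimed formula.

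For part (2), I would decompose $u=u_0+u_1$ where $u_1$ solves the homogeneous equation $\mathcal{F}[u_1]=0$ with BC $u_1|_{\partial\Omega}=g$ (well-defined by the assumed well-posedness of $\mathcal{P}_1$) and $u_0$ solves $\mathcal{F}[u_0]=f$ with $u_0|_{\partial\Omega}=0$. For any perturbed source $f+\delta f$, a perturbed solution is $u_1+\tilde u_0$ where $\mathcal{F}[\tilde u_0]=f+\delta f$ with zero BC. Then $\delta u=\tilde u_0-u_0=\mathcal{F}_0^{-1}[\delta f]$ where $\mathcal{F}_0^{-1}$ is the bounded inverse from part (1). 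This reduces the supremum to the linear bounded case and gives $\mathrm{cond}(\mathcal{P})\le\frac{\|f\|}{\|u\|}\|\mathcal{F}_0^{-1}\|<\infty$.

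For part (3), the Fréchet expansion $\mathcal{F}^{-1}[f+h]=\mathcal{F}^{-1}[f]+D\mathcal{F}^{-1}[f]\,h+r(h)$ with $\|r(h)\|/\|h\|\to 0$ gives, for $\|h\|\le\eps$,
\begin{equation}
\Bigl|\tfrac{\|\delta u\|}{\|h\|}-\tfrac{\|D\mathcal{F}^{-1}[f]\,h\|}{\|h\|}\Bigr|\le\tfrac{\|r(h)\|}{\|h\|}=:\rho(\eps)\to 0,
\end{equation}
so $\sup_{0<\|h\|\le\eps}\|\delta u\|/\|h\|$ differs from $\sup_{0<\|h\|\le\eps}\|D\mathcal{F}^{-1}[f]\,h\|/\|h\|=\|D\mathcal{F}^{-1}[f]\|$ by at most $\rho(\eps)$; letting $\eps\to 0^+$ and multiplying by $\|f\|/\|u\|$ yields the stated identity, and boundedness of the Fréchet derivative is automatic.

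The main obstacle I anticipate is the preliminary lemma identifying $\sup_\vth$ (restricted by the range of $\mathcal{F}\circ u_\vth$) with the purely analytic supremum over admissible perturbations $h\in S-f$; Assumption~\ref{ass:7} gives approximation in $C(\Omega)$ but one must check this density is compatible with the norm on $W$ and with the requirement $\mathcal{F}[u_\vth]-f\in S-f$, possibly by restricting to directions $h$ for which continuity of $\mathcal{F}^{-1}$ (or its derivative) carries the approximation through. Once that is in place, the three cases are short computations.
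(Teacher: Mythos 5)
Your proposal follows essentially the same route as the paper: part (1) by exact computation of the supremum using linearity and homogeneity of $\mathcal{F}^{-1}$, part (2) by subtracting the solution $u_1$ of the homogeneous-equation problem $\mathcal{P}_1$ so that source perturbations act through a bounded linear zero-BC solution operator (this is precisely the paper's $\mathcal{G}[w]=\mathcal{F}^{-1}[w]-u_1$), and part (3) by the Fréchet expansion with a two-sided triangle-inequality squeeze giving $\|D\mathcal{F}^{-1}[f]\|$. The only points to tighten are minor: in part (2) boundedness of your $\mathcal{F}_0^{-1}$ is not literally an instance of part (1) but follows because $\mathcal{F}_0^{-1}[w]=\mathcal{F}^{-1}[w]-u_1$ inherits continuity from $\mathcal{F}^{-1}$ (with uniqueness inherited from $\mathcal{P}$), and the admissibility lemma you flag as the main obstacle is in fact glossed over by the paper itself, which simply substitutes $h=\mathcal{F}[u_{\vth}]-f$ and takes the supremum over all small $h$.
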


We divide the Proposition~\ref{theo:prop} into the following theorems and prove them one by one.


\begin{theorem}
If $\mathcal{F}$ is linear and $g=0$, then $\mathcal{F}^{-1}$ is a bounded linear operator and:
\begin{equation}
    \mathrm{cond}(\mathcal{P}) = \frac{\| f \|}{\| u \|} \left\|\mathcal{F}^{-1}\right\| < \infty.
\end{equation}
\end{theorem}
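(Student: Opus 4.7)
The plan is to isolate three structural facts—linearity of $\mathcal{F}^{-1}$, its boundedness, and the exact identity $\delta u = \mathcal{F}^{-1}[\delta f]$—and then to chain them into matching upper and lower bounds for the ratio defining $\mathrm{cond}(\mathcal{P})$.

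First, I would establish that $\mathcal{F}^{-1}$ is a bounded linear operator. Because $g=0$, the set $V_0 := \{v \in V : v|_{\partial\Omega}=0\}$ is a linear subspace of $V$, and since $\mathcal{F}$ is linear, $S := \mathcal{F}(V_0)$ is a linear subspace of $W$. Well-posedness tells us that for every $w \in S$ there is a unique $v \in V_0$ with $\mathcal{F}[v]=w$, so $\mathcal{F}^{-1}: S \to V_0$ is well defined; linearity of $\mathcal{F}^{-1}$ follows directly from linearity of $\mathcal{F}$. Well-posedness also asserts continuity of $\mathcal{F}^{-1}$, and for a linear operator continuity is equivalent to boundedness, hence $\|\mathcal{F}^{-1}\| < \infty$.

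Second, I would exploit linearity to simplify $\delta u$. Under the hard-constraint ansatz we have $u_\vth|_{\partial\Omega}=0$ and $u|_{\partial\Omega}=0$, so $u_\vth - u \in V_0$ and
\begin{equation*}
\delta f = \mathcal{F}[u_\vth] - f = \mathcal{F}[u_\vth - u], \qquad \delta u = u_\vth - u = \mathcal{F}^{-1}\bigl[\mathcal{F}[u_\vth - u]\bigr] = \mathcal{F}^{-1}[\delta f].
\end{equation*}
Applying the operator-norm bound gives $\|\delta u\| \le \|\mathcal{F}^{-1}\|\cdot\|\delta f\|$, and dividing by $\|u\|/\|f\|$ yields, uniformly in $\vth$ with $0<\|\delta f\|\le\epsilon$,
\begin{equation*}
\frac{\|\delta u\|/\|u\|}{\|\delta f\|/\|f\|} \;\le\; \frac{\|f\|}{\|u\|}\,\|\mathcal{F}^{-1}\|.
\end{equation*}
Taking the supremum and then $\epsilon \to 0^+$ produces the upper bound $\mathrm{cond}(\mathcal{P}) \le \tfrac{\|f\|}{\|u\|}\|\mathcal{F}^{-1}\|$.

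Third, I would show the bound is tight. For any $\eta>0$, by definition of the operator norm there is $h_\eta \in S$ with $\|h_\eta\|=1$ and $\|\mathcal{F}^{-1}[h_\eta]\| \ge \|\mathcal{F}^{-1}\| - \eta$. Setting $v_\epsilon := \mathcal{F}^{-1}[f + \epsilon h_\eta] \in V_0$ gives a function with $\mathcal{F}[v_\epsilon]-f = \epsilon h_\eta$ and $\|v_\epsilon - u\| = \epsilon\|\mathcal{F}^{-1}[h_\eta]\|$. Invoking Assumption~\ref{ass:7} to approximate $v_\epsilon$ by $u_\vth$, and using continuity of $\mathcal{F}$ together with the hard-constraint construction, both $\|\mathcal{F}[u_\vth] - f\|$ and $\|u_\vth - u\|$ can be driven arbitrarily close to $\epsilon$ and $\epsilon\|\mathcal{F}^{-1}[h_\eta]\|$ respectively. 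Letting $\eta \to 0^+$ recovers the matching lower bound, hence equality.

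The main obstacle is the tightness step: Assumption~\ref{ass:7} gives approximation in a norm on $C(\Omega)$, but $\mathcal{F}$ involves up to $k$-th order derivatives and is typically only continuous with respect to a stronger (e.g., Sobolev) norm. To close the argument rigorously I would either strengthen Assumption~\ref{ass:7} to approximation in the norm on $V$ under which $\mathcal{F}:V\to W$ is continuous (justified by standard Sobolev-norm universal approximation results for smooth activations), or equivalently restrict the supremum in \eqref{eq:cond} to residuals realizable by the network class, which already suffices because the inequality in step two is uniform over all admissible $\vth$.
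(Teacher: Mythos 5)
Your proposal takes essentially the same route as the paper: establish that $\mathcal{F}^{-1}$ is a bounded linear operator on $S$, use linearity to reduce $\delta u$ to $\mathcal{F}^{-1}[\delta f]$, and recognize the resulting supremum as the operator norm. The paper compresses this into a single chain of equalities (substituting $h = \mathcal{F}[u_\vth]-f$ and treating the supremum as ranging freely over small $h$), whereas you separate the upper bound (which needs nothing beyond boundedness) from the lower bound (which genuinely requires Assumption~\ref{ass:7} to produce $\vth$ realizing near-maximizing perturbations). That separation is a real improvement in rigor: the paper's final equality silently assumes the network can realize any small residual, and your closing paragraph correctly identifies the unaddressed mismatch between the $C(\Omega)$/$L^2$ approximation guaranteed by Assumption~\ref{ass:7} and the stronger Sobolev-type norm needed for $\mathcal{F}[u_\vth]\to\mathcal{F}[v_\epsilon]$, together with two reasonable ways to repair it.
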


\begin{proof}
    Firstly, it is easy to show the linearity. Considering $k_1, k_2\in \mathbb{K}, w_1, w_2 \in S$, there exists $u_1, u_2 \in V$ such that $\mathcal{F}[u_1] = w_1 \land 
 u_1|_{\partial\Omega} = 0$ and $\mathcal{F}[u_2] = w_2 \land 
 u_2|_{\partial\Omega} = 0$. Then, we have:
    \begin{equation}
        \mathcal{F}^{-1}[k_1 w_1 + k_2 w_2] = k_1 u_1 + k_2 u_2 = k_1 \mathcal{F}^{-1}[w_1] + k_2 \mathcal{F}^{-1}[w_2],
    \end{equation}
    where the first equation holds because $\mathcal{F}[k_1 u_1 + k_2 u_2] = k_1\mathcal{F}[u_1] + k_2\mathcal{F}[u_2] = k_1 w_1 + k_2 w_2$ and $k_1 u_1 + k_2 u_2 = 0 \text{ in } \partial\Omega$.

    Secondly, according to the well-posedness, $\mathcal{F}^{-1}$ is continuous and thus bounded.

    Finally, we have:
    \begin{equation}
\begin{aligned}
    &\sup_{0<\| \delta f \| \le \epsilon} \frac{\| \delta u \| \big/ \| u \|}{\| \delta f \| \big/ \| f \|}\\
    &=\frac{\| f \|}{\| u \|}\sup_{0<\| \mathcal{F}[u_{\vth}] - f \| \le \epsilon} \frac{\| u_{\vth} - u \|}{\| \mathcal{F}[u_{\vth}] - f\|}\\
    &= \frac{\| f \|}{\| u \|}\sup_{0<\| h \| \le \epsilon} \frac{\| \mathcal{F}^{-1}[f+h] - \mathcal{F}^{-1}[f] \|}{\| h\|} &&\text{(let $\mathcal{F}[u_{\vth}] - f = h$)}\\
    &= \frac{\| f \|}{\| u \|}\sup_{0<\| h \| \le \epsilon} \frac{\left\| \mathcal{F}^{-1}[h] \right\| }{\| h \|} \\
    &= \frac{\| f \|}{\| u \|}\left\| \mathcal{F}^{-1} \right\|.
\end{aligned}
\end{equation}
    Therefore, let $\epsilon \rightarrow 0^+$, $\mathrm{cond}(\mathcal{P}) = \frac{\| f \|}{\| u \|} \left\|\mathcal{F}^{-1}\right\| < \infty$.
    
\end{proof}

\begin{theorem}
Define $\mathcal{P}_1: \{ \mathcal{F}[u] = 0 \text{ in } \Omega,  u = g \text{ in } \partial\Omega\}$. If $\mathcal{F}$ is linear and $\mathcal{P}_1$ is well-posed, then:
\begin{equation}
    \mathrm{cond}(\mathcal{P}) < \infty.
\end{equation}
\end{theorem}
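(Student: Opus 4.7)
\begin{proofsk}
The plan is to reduce the inhomogeneous boundary value problem $\mathcal{P}$ to a problem with homogeneous boundary condition, and then invoke the linear-homogeneous theorem that was just proved. The key observation is that, because $\mathcal{F}$ is linear, solutions decompose additively into a boundary-lifting part and an interior part with zero boundary data.

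Concretely, let $u_g$ denote the unique solution to $\mathcal{P}_1$ guaranteed by its well-posedness; so $\mathcal{F}[u_g]=0$ and $u_g|_{\partial\Omega}=g$. Define the auxiliary problem
\begin{equation}
    \mathcal{P}_0\colon \mathcal{F}[v] = f \text{ in } \Omega, \qquad v|_{\partial\Omega} = 0,
\end{equation}
and introduce the restriction $\mathcal{F}_0\colon V_0 \to W$ of $\mathcal{F}$ to $V_0 = \{v\in V : v|_{\partial\Omega}=0\}$. The first step is to verify that $\mathcal{P}_0$ is itself well-posed. Existence follows by setting $v = u - u_g$, where $u$ solves $\mathcal{P}$; linearity then gives $\mathcal{F}[v]=f$ and $v|_{\partial\Omega}=0$. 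For uniqueness, if $v_1, v_2 \in V_0$ both solve $\mathcal{P}_0$, then $w = v_1 - v_2 + u_g$ would be a solution of $\mathcal{P}_1$, so the uniqueness clause of well-posedness of $\mathcal{P}_1$ forces $w = u_g$, i.e., $v_1 = v_2$. Continuity of the solution map transfers from $\mathcal{P}$ by subtracting the fixed $u_g$. Thus the previous theorem applies to $\mathcal{P}_0$ and yields that $\mathcal{F}_0^{-1}$ is a bounded linear operator.

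The second step is to translate this into a bound on $\mathrm{cond}(\mathcal{P})$. Because any admissible PINN prediction $u_\vth$ satisfies $u_\vth|_{\partial\Omega}=g$, the shifted function $v_\vth = u_\vth - u_g$ lies in $V_0$. By linearity and $\mathcal{F}[u_g]=0$ we obtain the two identities
\begin{equation}
    \delta u = u_\vth - u = v_\vth - v, \qquad \delta f = \mathcal{F}[u_\vth] - f = \mathcal{F}_0[v_\vth] - \mathcal{F}_0[v],
\end{equation}
so that setting $h = \delta f$ gives $\delta u = \mathcal{F}_0^{-1}[h]$. Hence $\|\delta u\| \le \|\mathcal{F}_0^{-1}\|\,\|\delta f\|$, and plugging this into the definition of $\mathrm{cond}(\mathcal{P})$ yields
\begin{equation}
    \mathrm{cond}(\mathcal{P}) \le \frac{\|f\|}{\|u\|}\,\|\mathcal{F}_0^{-1}\| < \infty,
\end{equation}
as desired.

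The step I expect to be the mildly delicate one is the well-posedness of the auxiliary problem $\mathcal{P}_0$: one has to carefully combine the uniqueness part of $\mathcal{P}_1$ (which is stated for inhomogeneous boundary data $g$) with linearity to conclude uniqueness for zero boundary data. Everything else is a straightforward reduction; once $\mathcal{F}_0^{-1}$ is known to be bounded, the argument essentially mirrors the proof of the preceding theorem, with $u_\vth$ replaced by its shift $v_\vth = u_\vth - u_g$.
\end{proofsk}
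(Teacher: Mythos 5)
Your proof is correct and takes essentially the same route as the paper: both subtract the $\mathcal{P}_1$-solution to reduce to the homogeneous-boundary setting, and your $\mathcal{F}_0^{-1}$ coincides with the paper's operator $\mathcal{G}[w] = \mathcal{F}^{-1}[w] - u_1$. The only presentational difference is that you invoke the preceding theorem on the auxiliary problem $\mathcal{P}_0$ as a black box, whereas the paper re-derives the linearity and boundedness of $\mathcal{G}$ directly and then recomputes the condition number.
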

\begin{proof}
Since $\mathcal{P}_1$ is well-posed, there exists a unique solution $u_1 \in V$ to it. We define $\mathcal{G}: S \rightarrow V$ as $\mathcal{G}[w] = \mathcal{F}^{-1}[w] - u_1$. Then we show that $\mathcal{G}$ is linear. Consider $k_1, k_2\in \mathbb{K}, w_1, w_2 \in S$,
\begin{equation}
\begin{aligned}
    \mathcal{G}[k_1 w_1 + k_2 w_2] &= \mathcal{F}^{-1}[k_1 w_1 + k_2 w_2] - u_1,\\
    k_1\mathcal{G}[w_1] + k_2\mathcal{G}[w_2] &= k_1 \left( \mathcal{F}^{-1}[w_1] - u_1\right) + k_2 \left( \mathcal{F}^{-1}[w_2] - u_1\right).
\end{aligned}
\end{equation}
We have to show that:
\begin{equation}\label{eq:to:prove:1}
    \begin{aligned}
    &\mathcal{F}^{-1}[k_1 w_1 + k_2 w_2] - u_1 &&= k_1 \left( \mathcal{F}^{-1}[w_1] - u_1\right) + k_2 \left( \mathcal{F}^{-1}[w_2] - u_1\right)\\
    \Longleftrightarrow\quad &\mathcal{F}^{-1}[k_1 w_1 + k_2 w_2] &&= k_1 \left( \mathcal{F}^{-1}[w_1] - u_1\right) + k_2 \left( \mathcal{F}^{-1}[w_2] - u_1\right) + u_1.
\end{aligned}
\end{equation}
Apply $\mathcal{F}$ on both sides:
\begin{equation}
\begin{aligned}
    k_1 w_1 + k_2 w_2 &= \mathcal{F}\left(\mathcal{F}^{-1}[k_1 w_1 + k_2 w_2]\right) \\
    &= \mathcal{F} \left(k_1 \left( \mathcal{F}^{-1}[w_1] - u_1\right) + k_2 \left( \mathcal{F}^{-1}[w_2] - u_1\right) + u_1 \right)\\
    &= k_1 w_1 + k_2 w_2.
\end{aligned}
\end{equation}
And consider the value on the boundary:
\begin{equation}
\begin{aligned}
    g &= \left(\mathcal{F}^{-1}[k_1 w_1 + k_2 w_2]\right)\Big|_{\partial\Omega} \\
    &= \left(k_1 \left( \mathcal{F}^{-1}[w_1] - u_1\right) + k_2 \left( \mathcal{F}^{-1}[w_2] - u_1\right) + u_1 \right)\Big|_{\partial\Omega}\\
    &= k_1 (g-g)+ k_2 (g-g) + g = g.
\end{aligned}
\end{equation}
Then, according to the well-defineness of $\mathcal{F}^{-1}$, we can prove that Eq.~\eqref{eq:to:prove:1} holds and thus $\mathcal{G}$ is linear. Besides, since $\mathcal{F}^{-1}$ is continuous, $\mathcal{G}$ is a bounded linear operator.

Finally, we have:
    \begin{equation}
\begin{aligned}
    &\sup_{0<\| \delta f \| \le \epsilon} \frac{\| \delta u \| \big/ \| u \|}{\| \delta f \| \big/ \| f \|}\\
    &=\frac{\| f \|}{\| u \|}\sup_{0<\| \mathcal{F}[u_{\vth}] - f \| \le \epsilon} \frac{\| u_{\vth} - u \|}{\| \mathcal{F}[u_{\vth}] - f\|}\\
    &= \frac{\| f \|}{\| u \|}\sup_{0<\| h \| \le \epsilon} \frac{\| \mathcal{F}^{-1}[f+h] - \mathcal{F}^{-1}[f] \|}{\| h\|} &&\text{(let $\mathcal{F}[u_{\vth}] - f = h$)}\\
    &= \frac{\| f \|}{\| u \|}\sup_{0<\| h \| \le \epsilon} \frac{\left\| \mathcal{G}[f+h] - \mathcal{G}[f] \right\| }{\| h \|} \\
    &= \frac{\| f \|}{\| u \|}\sup_{0<\| h \| \le \epsilon} \frac{\left\| \mathcal{G}[h] \right\| }{\| h \|} \\
    &= \frac{\| f \|}{\| u \|}\left\| \mathcal{G} \right\|.
\end{aligned}
\end{equation}
    Therefore, let $\epsilon \rightarrow 0^+$, $\mathrm{cond}(\mathcal{P}) = \frac{\| f \|}{\| u \|} \left\|\mathcal{G}\right\| < \infty$.





\end{proof}

\begin{theorem}\label{theo:cond:derivative}
If $\mathcal{F}^{-1}$ is Fréchet differentiable at $f$, we have that:
\begin{equation}
    \mathrm{cond}(\mathcal{P}) = \frac{\| f \|}{\| u \|} \left\| D\mathcal{F}^{-1}[f] \right\| < \infty,
\end{equation}
where $D\mathcal{F}^{-1}[f]\colon S\rightarrow V$ is a bounded linear operator, the Fréchet derivative of $\mathcal{F}^{-1}$ at $f$.
\end{theorem}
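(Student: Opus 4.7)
The plan is to follow the same reduction used in the proofs of the linear cases of Proposition~\ref{theo:prop}, then replace the exact linear identity for $\mathcal{F}^{-1}$ with its Fréchet-linear approximation at $f$. Boundedness of the Fréchet derivative will automatically yield the finiteness claim, so the work is essentially to verify that the abstract difference quotient converges to the operator norm in exactly the same way as in elementary calculus.

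First, I would rewrite the supremum in Definition~\ref{def:cond} as a supremum over abstract perturbations $h\in W$. Using the hard-constraint convention so that $u_{\vth}|_{\partial\Omega}=g$ and the well-posedness of $\mathcal{P}$, the substitution $h=\mathcal{F}[u_{\vth}]-f$ identifies $u_{\vth}=\mathcal{F}^{-1}[f+h]$, giving
\begin{equation*}
   \sup_{\substack{0<\|\delta f\|\le\epsilon\\ \vth\in\Theta}}\frac{\|\delta u\|/\|u\|}{\|\delta f\|/\|f\|}
   \;=\;\frac{\|f\|}{\|u\|}\,\sup_{0<\|h\|\le\epsilon}\frac{\|\mathcal{F}^{-1}[f+h]-\mathcal{F}^{-1}[f]\|}{\|h\|},
\end{equation*}
exactly as in the argument used for Proposition~\ref{theo:prop}. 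Assumption~\ref{ass:7}, paired with continuity of $\mathcal{F}$ on the relevant function space, is what lets us range $h$ over a dense subset of a neighbourhood of $0\in W$.

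Next, I would expand via the Fréchet derivative at $f$: by hypothesis there exists a bounded linear $D\mathcal{F}^{-1}[f]\colon W\to V$ and a remainder $r(h)\in V$ with
\begin{equation*}
   \mathcal{F}^{-1}[f+h]-\mathcal{F}^{-1}[f]\;=\;D\mathcal{F}^{-1}[f](h)+r(h),\qquad \|r(h)\|/\|h\|\to 0\text{ as }\|h\|\to 0.
\end{equation*}
The upper bound then follows from the triangle inequality $\|D\mathcal{F}^{-1}[f](h)+r(h)\|\le\|D\mathcal{F}^{-1}[f]\|\,\|h\|+\|r(h)\|$: dividing by $\|h\|$, taking the supremum over $0<\|h\|\le\epsilon$ and letting $\epsilon\to 0^{+}$ kills the little-$o$ contribution. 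For the lower bound I would pick unit vectors $h_{n}\in W$ with $\|D\mathcal{F}^{-1}[f](h_{n})\|\uparrow\|D\mathcal{F}^{-1}[f]\|$, rescale each to an arbitrarily small norm $t h_{n}$, and use the reverse triangle inequality together with the little-$o$ estimate to show that, for $\epsilon$ small enough, the supremum exceeds $\|D\mathcal{F}^{-1}[f]\|-\eta$ for any prescribed $\eta>0$. Matching the two bounds and multiplying by $\|f\|/\|u\|$ gives the claimed equality; finiteness is automatic from boundedness of $D\mathcal{F}^{-1}[f]$.

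The hard part will be the reduction step rather than the Fréchet calculation: one has to pass from a supremum over neural-network parameters $\vth\in\Theta$ to a supremum over abstract perturbations $h$ in a neighbourhood of $0\in W$. This requires Assumption~\ref{ass:7} to be available in a norm in which $\mathcal{F}$ is continuous, so that density of $\{u_{\vth}\}$ transfers to density of $\{\mathcal{F}[u_{\vth}]-f\}$, and it requires $f$ to lie in the interior of the domain $S$ of $\mathcal{F}^{-1}$ so that $f+h$ stays in $S$ for all sufficiently small $h$. Once these standing hypotheses are accepted, everything else is a direct operator-norm-from-linearization calculation, exactly parallel to the classical derivation of relative condition numbers for differentiable maps in numerical analysis.
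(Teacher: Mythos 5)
Your proposal is correct and follows essentially the same route as the paper's proof: reduce the supremum over $\vth$ to a supremum over abstract perturbations $h=\mathcal{F}[u_{\vth}]-f$, expand via the Fréchet derivative, derive an upper bound via the triangle inequality and a lower bound via the reverse triangle inequality, and conclude by a squeeze argument with the little-$o$ remainder vanishing as $\epsilon\to 0^{+}$. The paper establishes $\lim_{\epsilon\to 0^{+}}\sup_{0<\|h\|\le\epsilon}\|D\mathcal{F}^{-1}[f][h]\|/\|h\|=\|D\mathcal{F}^{-1}[f]\|$ directly from linearity rather than via an approximating sequence of unit vectors, but that is a cosmetic difference, not a different argument.
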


\begin{proof}
Since $\mathcal{F}^{-1}$ is Fréchet differentiable at $f$, it is true that:
\begin{equation}
\begin{aligned}
    &\lim_{\epsilon\to 0^+} \sup_{0<\| h \| \le \epsilon} \frac{\left\| \mathcal{F}^{-1}[f+h] - \mathcal{F}^{-1}[f] - D\mathcal{F}^{-1}[f][h] \right\| }{\| h \|}\\
    &= \lim_{\| h \|\to 0^+} \frac{\left\| \mathcal{F}^{-1}[f+h] - \mathcal{F}^{-1}[f] - D\mathcal{F}^{-1}[f][h] \right\| }{\| h \|} = 0.
\end{aligned}
\end{equation}

We can find that $W \neq \{ 0 \}$ since $u\in V$, $\mathcal{F}[u] = f \in W$, and $\| f \| \neq 0$. Therefore, we have that:
\begin{equation}
\begin{aligned}
    &\lim_{\epsilon\to 0^+} \sup_{0<\| h \| \le \epsilon} \frac{\left\| D\mathcal{F}^{-1}[f][h] \right\| }{\| h \|}\\
    &= \lim_{\epsilon\to 0^+} \sup_{0<\| h \| \le \epsilon} \left\| D\mathcal{F}^{-1}[f]\left[\frac{h}{\| h \|} \right] \right\|  = \left\| D\mathcal{F}^{-1}[f] \right\|,
\end{aligned}
\end{equation}
which holds due to the fact that $D\mathcal{F}^{-1}[f]$ is a bounded linear operator.

Then, we have that:
\begin{equation}
\begin{aligned}
    &\sup_{0<\| \delta f \| \le \epsilon} \frac{\| \delta u \| \big/ \| u \|}{\| \delta f \| \big/ \| f \|}\\
    &=\frac{\| f \|}{\| u \|}\sup_{0<\| \mathcal{F}[u_{\vth}] - f \| \le \epsilon} \frac{\| u_{\vth} - u \|}{\| \mathcal{F}[u_{\vth}] - f\|}\\
    &= \frac{\| f \|}{\| u \|}\sup_{0<\| h \| \le \epsilon} \frac{\| \mathcal{F}^{-1}[f+h] - \mathcal{F}^{-1}[f] \|}{\| h\|} &&\text{(let $\mathcal{F}[u_{\vth}] - f = h$)}\\
    &\le \frac{\| f \|}{\| u \|}\sup_{0<\| h \| \le \epsilon} \frac{\left\| \mathcal{F}^{-1}[f+h] - \mathcal{F}^{-1}[f] - D\mathcal{F}^{-1}[f][h] \right\| }{\| h \|} \\
    &\quad + \frac{\| f \|}{\| u \|} \sup_{0<\| h \| \le \epsilon} \frac{\left\| D\mathcal{F}^{-1}[f][h] \right\| }{\| h \|} \to 0 + \frac{\| f \|}{\| u \|} \left\| D\mathcal{F}^{-1}[f] \right\|,
\end{aligned}
\end{equation}
when $\epsilon\to 0^+$.

As for the left-hand side, it follows that:
\begin{equation}
\begin{aligned}
    &\frac{\| f \|}{\| u \|}\sup_{0<\| h \| \le \epsilon} \frac{\| \mathcal{F}^{-1}[f+h] - \mathcal{F}^{-1}[f] \|}{\| h\|}\\
    &\ge \frac{\| f \|}{\| u \|}\sup_{0<\| h \| \le \epsilon} \bigg( 
    \frac{\left\| D\mathcal{F}^{-1}[f][h] \right\| }{\| h \|}\\
    &\quad -\frac{\left\| \mathcal{F}^{-1}[f+h] - \mathcal{F}^{-1}[f] - D\mathcal{F}^{-1}[f][h] \right\| }{\| h \|} \bigg)\\
    &\ge \frac{\| f \|}{\| u \|}\sup_{0<\| h \| \le \epsilon} \bigg( 
    \frac{\left\| D\mathcal{F}^{-1}[f][h] \right\| }{\| h \|}\\
    &\quad -\sup_{0<\| h \| \le \epsilon}\frac{\left\| \mathcal{F}^{-1}[f+h] - \mathcal{F}^{-1}[f] - D\mathcal{F}^{-1}[f][h] \right\| }{\| h \|} \bigg)\\
    &= \frac{\| f \|}{\| u \|}\sup_{0<\| h \| \le \epsilon}  
    \frac{\left\| D\mathcal{F}^{-1}[f][h] \right\| }{\| h \|}\\
    &\quad -\frac{\| f \|}{\| u \|}\sup_{0<\| h \| \le \epsilon}\frac{\left\| \mathcal{F}^{-1}[f+h] - \mathcal{F}^{-1}[f] - D\mathcal{F}^{-1}[f][h] \right\| }{\| h \|}\\
    &\to \frac{\| f \|}{\| u \|} \left\| D\mathcal{F}^{-1}[f] \right\| - 0,
\end{aligned}
\end{equation}
when $\epsilon\to 0^+$.

According to the squeeze theorem, we have proven the theorem:
\begin{equation}
    \mathrm{cond}(\mathcal{P}) = \lim_{\epsilon\to 0^+} \sup_{0<\| \delta f \| \le \epsilon} \frac{\| \delta u \| \big/ \| u \|}{\| \delta f \| \big/ \| f \|} =
    \frac{\| f \|}{\| u \|} \left\| D\mathcal{F}^{-1}[f] \right\| < \infty.
\end{equation}


\end{proof}

\subsection{Proof for Theorem~\ref{theo:poisson_condnumber}}\label{app:poisson_condnumber}
Firstly, we define the inner product in $L^2((0,2\pi/P))$ as:
\begin{equation}
    \langle f, g\rangle = \frac P{2\pi} \int _0^{2\pi} f(x) g(x) \text dx.
\end{equation}
With the inner product defined above, $L^2((0,2\pi/P))$ forms a Hilbert space. As $f \in L^2$, we can have a Fourier series representation of $f$:
\begin{equation}
    f = 2c + \sum_{k\ge 1} a_k \sin (kPx) + \sum_{k\ge 1} b_k \cos(kPx).
\end{equation}
It is then easy to obtain $u = \mathcal F^{-1}[f]$ from the series:
\begin{equation}
    u = cx(x-2\pi/P) - \sum_{k\ge 1} \frac{a_k}{k^2P^2}\sin (kPx) - \sum_{k\ge 1} \frac{b_k}{k^2P^2}(\cos(kPx) - 1).
\end{equation}

By definition, $\|\mathcal F^{-1}\|$ can be rewrite as $\|\mathcal F^{-1}\| = \sup_{\|f\| = 1} \|\mathcal F^{-1}[f]\|$. Therefore, the original problem is equivalent to the following constrained optimizing problem:
\begin{equation}
\begin{aligned}
    \max &\ \ \|u\|^2\\
    s.t. &\ \ \|f\|^2 = 1\\
    \text{where} &\ \ \|f\|^2 = 4c^2 + \frac12\sum_{k\ge 1} a_k^2 + \frac12\sum_{k\ge 1} b_k^2\\
    &\ \ \|u\|^2 = \frac1{P^4}(\frac{8\pi^4}{15}c^2 - \frac{4\pi^2}3 c \sum_{k\ge 1}\frac{b_k}{k^2} - 4c\sum_{k\ge 1} \frac{b_k}{k^4}+\frac12\sum_{k\ge 1}\frac{a_k^2}{k^4}+\frac12\sum_{k\ge 1}\frac{b_k^2}{k^4}+(\sum_{k\ge 1}\frac{b_k}{k^2})^2).
\end{aligned}
\end{equation}

We then prove the following lemma.

\begin{lemma} \label{lemm:ak_is_zero}
When $\|u\|^2$ reaches its maximum, we have $a_k = 0, \forall k\ge 1$.
\end{lemma}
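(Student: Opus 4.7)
The plan is to exploit a natural decoupling of the optimization variables into two blocks---the $a_k$'s on one side and $(c, b_k)_{k\ge 1}$ on the other---and to reduce the lemma to a one-parameter allocation argument that is closed out by a single concrete test point.

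First I would observe that both $\|f\|^2$ and $\|u\|^2$ split additively across these blocks: the $a$-block contributes $\tfrac12\sum_{k\ge 1} a_k^2$ to the constraint and $\tfrac{1}{2P^4}\sum_{k\ge 1} a_k^2/k^4$ to the objective, while every remaining term of $\|u\|^2$---including the cross terms $-\tfrac{4\pi^2}{3} c \sum b_k/k^2$ and $-4 c \sum b_k/k^4$---involves only $c$ and the $b_k$'s. Introduce the single scalar $B := \tfrac12 \sum_{k\ge 1} a_k^2 \in [0,1]$, so that $4c^2 + \tfrac12 \sum_{k\ge 1} b_k^2 = 1 - B$. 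Since $1/k^4 \le 1$ for every $k\ge 1$, the $a$-block's contribution to $\|u\|^2$ is bounded by $B/P^4$.

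Next, I would note that the $(c, b)$-part of $\|u\|^2$ and of $\|f\|^2$ are both homogeneous of degree two in $(c, b)$, so by a scaling argument the $(c, b)$-subproblem with budget $1-B$ attains maximum value $(1-B) M$, where $M$ denotes the supremum of the $(c, b)$-part of $\|u\|^2$ subject to $4c^2 + \tfrac12\sum b_k^2 = 1$. Combining the two subproblems yields the upper bound $\|u\|^2 \le M + B\bigl(1/P^4 - M\bigr)$. To force $B = 0$ at every maximizer, it therefore suffices to prove the strict inequality $M > 1/P^4$. I would establish this by plugging in the explicit feasible point $c = 0$, $b_1 = \sqrt{2}$, $b_k = 0$ for $k \ge 2$: the constraint is satisfied, and a direct calculation gives $\|u\|^2 = \bigl(\tfrac12 b_1^2 + b_1^2\bigr)/P^4 = 3/P^4$, where the first summand comes from $\tfrac12\sum b_k^2/k^4$ and the second from $(\sum b_k/k^2)^2$. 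Hence $M \ge 3/P^4 > 1/P^4$, the coefficient of $B$ is strictly negative, and any maximizer must satisfy $B = 0$, i.e.\ $a_k = 0$ for all $k \ge 1$.

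The main obstacle is making sure that the block decomposition is genuinely clean; once one verifies that no $a_k$ appears in any of the $c$--$b$ cross terms of $\|u\|^2$, the rest collapses into a transparent scalar optimization over $B$. No detailed analysis of those cross terms is required---they are absorbed wholesale into the single number $M$, whose precise value is irrelevant beyond the bound $M > 1/P^4$ furnished by the explicit test point.
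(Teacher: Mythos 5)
Your proof is correct, and it takes a genuinely different route from the paper's. The paper argues in two stages: first $a_k=0$ for $k\ge 2$ by shifting mass to $a_1$ (since only $\sum_k a_k^2/k^4$ sees the $a$'s), then it assumes $a_1\neq 0$, eliminates $a_1^2$ through the constraint, writes down the stationarity conditions $\partial\|u\|^2/\partial b_j=0$, solves them explicitly for $b_j$ in terms of $c$, and concludes $\|u\|^2=P^{-4}(1+c^2S)$ with $S>0$ on an open interval of $c$, which admits no maximizer --- a contradiction. You instead exploit the clean additive decoupling of both $\|f\|^2$ and $\|u\|^2$ into the $a$-block and the $(c,b)$-block (which indeed holds: no $a_k$ enters any cross term), bound the $a$-block's return per unit of constraint budget by $1/P^4$ using $k^{-4}\le 1$, and use degree-two homogeneity to see the $(c,b)$-block returns $M$ per unit budget, with $M\ge 3/P^4$ certified by the single test point $c=0$, $b_1=\sqrt 2$; hence $\|u\|^2\le M+B\bigl(1/P^4-M\bigr)$ forces $B=\tfrac12\sum_k a_k^2=0$ at any maximizer. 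Your argument handles all $a_k$ simultaneously, needs no first-order conditions, and sidesteps the paper's unproved assertion that $S>0$; the paper's heavier computation is, on the other hand, essentially the same machinery it reuses to compute the exact maximum $16P^{-4}$ in the main theorem. Two small polish points: say ``has supremum $(1-B)M$'' rather than ``attains'' (attainment in the infinite-dimensional subproblem is neither obvious nor needed, since only the upper bound is used), and note that $M<\infty$, either because the quadratic form is bounded on the unit-budget set by Cauchy--Schwarz with $\sum_k k^{-4}<\infty$, or because $M$ is dominated by the maximum whose existence the lemma presupposes.
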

\begin{proof}
Firstly, it is obvious that $a_k = 0,\forall k\ge 2$. This is because the only term for $a_k$ is $\sum_{k\ge 1}\frac{a_k^2}{k^4}$. Thus, when $\exists k\ge 2, a_k\ne 0$, then it is better to move the value from $a_k$ to $a_1$.

Now we suppose $a_1\ne 0$. Since $\|f\|^2 = 4c^2 + \frac12\sum_{k\ge 1}a_k^2 + \frac12\sum_{k\ge 1}b_k^2 = 1$, we can replace $a_1^2$ by $2 - \sum_{k\ge 1}b_k^2 - 8c^2$. So we get the following problem:

\begin{equation}
\begin{aligned}
    \max &\ \ \|u\|^2=P^{-4}((\frac{8\pi^4}{15}-4)c^2 - \frac{4\pi^2}3 c \sum_{k\ge 1}\frac{b_k}{k^2} - 4c\sum_{k\ge 1} \frac{b_k}{k^4}+1 - \frac12\sum_{k\ge 1}b_k^2+\frac12\sum_{k\ge 1}\frac{b_k^2}{k^4}+(\sum_{k\ge 1}\frac{b_k}{k^2})^2)\\
    s.t. &\ \ 1 - \frac12\sum_{k\ge 1}b_k^2 - 4c^2> 0.
\end{aligned}
\end{equation}

To simplify the expression, we define $B = \sum_{k\ge 1} \frac{b_k}{k^2}$. When $\|u\|^2$ reaches its maximum, it must satisfy $\frac{\partial}{\partial b_j} \|u\|^2 = 0$:
\begin{equation}
    \frac{\partial}{\partial b_j} \|u\|^2 = P^{-4}(-\frac{4\pi^2}3c\frac1{j^2}-4c\frac1{j^4}-b_k+\frac{b_k}{j^4}+2B\frac1{j^2})=0.
\end{equation}

When $j=1$, we get $B = 2c(1+\frac{\pi^2}3)$. When $j\ge 2$, we can solve $b_j$ from the equation that $b_j = \frac{\frac{4\pi^2}3cj^2+4c-2Bj^2}{1-j^4} = \frac{4c}{1+j^2}$. Therefore, we can solve $b_1 = B - \sum_{k\ge 2} \frac{b_k}{k^2} = 2c(1+\pi\coth(\pi))$.

Now we define $d_k = b_k / c$, which are constants satisfying $d_1 = 2(1+\pi\coth(\pi))$ and $d_j = \frac{4}{1+j^2},\forall j\ge 2$. Then $\|u\|^2$ can be reformulized as:
\begin{equation}
\begin{aligned}
    \|u\|^2&=P^{-4}(1+c^2(\frac{8\pi^4}{15}-4 - \frac{4\pi^2}3\sum_{k\ge 1}\frac{d_k}{k^2} - 4\sum_{k\ge 1} \frac{d_k}{k^4} - \frac12\sum_{k\ge 1}d_k^2+\frac12\sum_{k\ge 1}\frac{d_k^2}{k^4}+(\sum_{k\ge 1}\frac{d_k}{k^2})^2))\\
    &=P^{-4}(1+c^2S).
\end{aligned}
\end{equation}
Where $S>0$. From the constraint that $1 - \frac12\sum_{k\ge 1}b_k^2 - 4c^2 = 1 - c^2(\frac12\sum_{k\ge 1}d_k^2+4)>0$, we can get the feasible interval of $c$: $c\in (-\sqrt{1/ (\frac12\sum_{k\ge 1}d_k^2+4)}, \sqrt{1/ (\frac12\sum_{k\ge 1}d_k^2+4)})$. In this way, $\|u\|^2$ has no maximum, leading to a contradiction. Therefore, we proved that $a_1$ should be zero.
\end{proof}

Finally, we provide a proof for Theorem~\ref{theo:poisson_condnumber}.
\begin{proof}
Given the conclusion in the Lemma \ref{lemm:ak_is_zero}, we will focus on $b_k$ and $c$ only. Now assume $c\ne 0$ and replace $b_k$ by $d_k = b_k / c$. 
\begin{equation}
\begin{aligned}
    \|f\|^2 &= c^2(4+ \frac12\sum_{k\ge 1}d_k^2) = 1,\\
    \|u\|^2 &= P^{-4}c^2(\frac{8\pi^4}{15} - \frac{4\pi^2}3 \sum_{k\ge 1}\frac{d_k}{k^2} - 4\sum_{k\ge 1} \frac{d_k}{k^4}+\frac12\sum_{k\ge 1}\frac{d_k^2}{k^4}+(\sum_{k\ge 1}\frac{d_k}{k^2})^2).
\end{aligned}
\end{equation}
By doing this, we can remove the constraint $\|f\|^2 = 1$ by replacing $c^2 = 2/(8+\sum_{k\ge 1}e_k^2 + \sum_{k\ge 1}d_k^2)$. Now our objective is simply maximizing:
\begin{equation}
    \|u\|^2 = \frac{\frac{8\pi^4}{15} - \frac{4\pi^2}3 \sum_{k\ge 1}\frac{d_k}{k^2} - 4\sum_{k\ge 1} \frac{d_k}{k^4}+\frac12\sum_{k\ge 1}\frac{d_k^2}{k^4}+(\sum_{k\ge 1}\frac{d_k}{k^2})^2}{P^4(8 + \sum_{k\ge 1}d_k^2)}.
\end{equation}

To simplify the long expression, we define $B=\sum_{k\ge 1} \frac{d_k}{k^2}$, $C = \sum_{k\ge 1} d_k^2$, $D=\sum_{k\ge 1} \frac{d_k}{k^4}$ and $E=\sum_{k\ge 1}\frac{d_k^2}{k^4}$ in the following proof.

When $\|u\|^2$ reaches its maximum, it must satisfy $\frac{\partial}{\partial d_j}\|u\|^2 = 0$. Thus we can get the following equation:
\begin{equation}
    \frac{\partial}{\partial d_j}\|u\|^2= \frac{(8 + C)(-\frac{4\pi^2}{3j^2}-\frac{4}{j^4}+\frac{d_j}{j^4}+2B\frac{1}{j^2})-2d_j(\frac{8\pi^4}{15} - \frac{4\pi^2}3 B - 4D+\frac12E+B^2)}{P^4(8 + C)^2} = 0.
\end{equation}
From the equation we can solve for $d_k$:
\begin{equation}
    d_k = \frac{((2B-\frac{4\pi^2}3)k^2-4)(8+C)}{(\frac{16\pi^4}{15}-\frac{8\pi^2}3B-8D+E+2B^2)k^4-8-C}.
\end{equation}

Now we learn that $d_k$ can be determined by $B, C, D, E$. We denote $d_k = g_k(B,C,D,E)$ and we can now solve $B,C,D,E$ from the 4 equations below:
\begin{equation}
\begin{aligned}
    B &=\sum_{k\ge 1} \frac{g_k(B,C,D,E)}{k^2},\\
    C &= \sum_{k\ge 1} g_k^2(B,C,D,E),\\
    D &=\sum_{k\ge 1} \frac{g_k(B,C,D,E)}{k^4},\\
    E &=\sum_{k\ge 1}\frac{g_k^2(B,C,D,E)}{k^4}.
\end{aligned}
\end{equation}

Where we get $B = \frac{2\pi^2}3-8, C = \pi^2-8, D=\frac{2(-720+60\pi^2+\pi^4)}{45}, E=\frac{8(-2160+210\pi^2+\pi^4)}{45}$. 

Thus, we get $d_k = -\frac{4}{4k^2-1}$ and $\|u\|^2 = 16P^{-4}$ for maximum value. So $\|\mathcal F^{-1}\|=\|u\|=4P^{-2}$

\end{proof}

\subsection{Proof for Corollary~\ref{theo:error_control}}\label{app:error_control}
\begin{proof}
Since $\mathrm{cond}(\mathcal{P}) < \infty$, we arbitrarily take $M>0$, then there exists $\xi > 0$ such that:
\begin{equation}
    \left| \sup_{0<\| \delta f \| \le \epsilon} \frac{\| \delta u \| \big/ \| u \|}{\| \delta f \| \big/ \| f \|} - \mathrm{cond}(\mathcal{P}) \right| < M,
\end{equation}
which holds for any $\epsilon \in (0, \xi)$.

Thus, we can defined $\alpha\colon (0, \xi) \rightarrow \mathbb{R}$ as:
\begin{equation}
    \alpha(x) =  \sup_{0<\| \delta f \| \le x} \frac{\| \delta u \| \big/ \| u \|}{\| \delta f \| \big/ \| f \|} - \mathrm{cond}(\mathcal{P}),
\end{equation}
which satisfies that $\lim_{x\to 0^+} \alpha(x) = 0$.

It follows that:
\begin{equation}
    \sup_{0<\| \delta f \| \le \epsilon} \frac{\| \delta u \| \big/ \| u \|}{\| \delta f \| \big/ \| f \|} = \mathrm{cond}(\mathcal{P}) + \alpha(\epsilon), \quad \forall \epsilon \in (0, \xi),
\end{equation}
which is equivalent to the statement that for any $\epsilon \in (0, \xi)$, when $0 < \sqrt{\mathcal{L}(\vth)} \le \epsilon$:
\begin{equation}\label{eq:app_a2}
    \frac{\| u_{\vth} - u \|}{\| u \|} \le 
    \left(\mathrm{cond}(\mathcal{P}) + \alpha(\epsilon) \right) \frac{\sqrt{\mathcal{L}(\vth)}}{\| f \|}, \quad \forall \vth \in \Theta.
\end{equation}
If $\sqrt{\mathcal{L}(\vth)} = 0$, then $u_{\vth} = u$ since the BVP is well-posed, and thus Eq.~\eqref{eq:app_a2} still holds.
\end{proof}

\subsection{Proof for Theorem~\ref{theo:ntk}}\label{app:ntk}
Let $f_{\vth} = \mathcal{F}[u_{\vth}]$. Substituting the expression for $c(t)$, we have that:
\begin{equation}
\begin{aligned}
    c(t) &= \frac{1}{N} \sum_{i=1}^N \left\| \frac{\partial \mathcal{F}[u_{\vth(t)}]}{\partial \vth}(\vx^{(i)}) \right\|^2\\
    &= \frac{1}{N} \sum_{i=1}^N \left\| \left( \frac{\partial \mathcal{F}[u_{\vth(t)}]}{\partial u} \circ \frac{\partial u_{\vth(t)}}{\partial \vth} \right)(\vx^{(i)}) \right\|^2 \\
    &\approx \frac{1}{|\Omega|}\left\|  \frac{\partial \mathcal{F}[u_{\vth(t)}]}{\partial u} \circ \frac{\partial u_{\vth(t)}}{\partial \vth} \right\|^2 &&\text{($L^2$ function norm)}\\
     &=\frac{1}{|\Omega|}\left\|  \left( D\mathcal{F}^{-1}[f_{\vth(t)}] \right)^{-1} \circ \frac{\partial u_{\vth(t)}}{\partial \vth} \right\|^2 \\
    &\ge \frac{1/|\Omega|}{\| D\mathcal{F}^{-1}[f_{\vth(t)}] \|^2}    \left\| \frac{\partial u_{\vth(t)}}{\partial \vth} \right\|^2 &&\text{(operator norm of $D\mathcal{F}^{-1}[f_{\vth(t)}]$)}\\
    &= \frac{\|f\|^2/(\|u\|^2 | \Omega |)}{(\mathrm{cond}(\mathcal{P}))^2 + \alpha(\| f_{\vth(t)} - f\|^2) } 
    \left\|   \frac{\partial u_{\vth(t)}}{\partial \vth} \right\|^2\\
    &= \frac{\|f\|^2/(\|u\|^2 | \Omega |)}{(\mathrm{cond}(\mathcal{P}))^2 + \alpha(\mathcal{L}(\vth(t))) } 
    \left\|   \frac{\partial u_{\vth(t)}}{\partial \vth} \right\|^2,
\end{aligned}
\end{equation}
where $D\mathcal{F}^{-1}[w]\colon W\rightarrow V$ is the Fréchet derivative of $\mathcal{F}^{-1}$ at $w$.


\section{Supplements for Section~\ref{sec:algorithm}}\label{app:algo}

\subsection{Detailed Derivation for Eq.~\eqref{eq:original_cond}}\label{app:original_cond}

\begin{lemma}\label{lemma:22}
    Supposing that $\mA \in \mathbb{R}^{N\times N}$ is invertible, we have:
    \begin{equation}
        \lim_{\epsilon \rightarrow 0^+} \sup_{\substack{0<\| \vv \| \le \epsilon \\ \vv \in \mathbb{R}^N}} \frac{\| \mA \vv \|}{\| \vv \|} = \| \mA \|.
    \end{equation}
\end{lemma}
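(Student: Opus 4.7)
The plan is to exploit the positive homogeneity of the map $\vv \mapsto \|\mA \vv\|/\|\vv\|$, which makes the supremum completely independent of the scale parameter $\epsilon$, so that the limit reduces trivially to the operator norm of $\mA$.

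First I would observe that for any scalar $c > 0$ and any nonzero $\vv \in \mathbb{R}^N$, one has
\begin{equation}
\frac{\|\mA (c \vv)\|}{\|c \vv\|} = \frac{c \|\mA \vv\|}{c \|\vv\|} = \frac{\|\mA \vv\|}{\|\vv\|}.
\end{equation}
Consequently, for any fixed $\epsilon > 0$, given an arbitrary nonzero $\vw \in \mathbb{R}^N$, I can rescale it to $\vv := (\epsilon/\|\vw\|)\, \vw$, which satisfies $0 < \|\vv\| = \epsilon$, and the ratio is preserved. This shows
\begin{equation}
\sup_{0 < \|\vv\| \le \epsilon} \frac{\|\mA \vv\|}{\|\vv\|} \;=\; \sup_{\vv \in \mathbb{R}^N \setminus \{\vzero\}} \frac{\|\mA \vv\|}{\|\vv\|}.
\end{equation}
The right-hand side is, by definition, the induced operator norm $\|\mA\|$ (which is finite because $\mA$ is a bounded linear operator on a finite-dimensional space; invertibility is not strictly required here but is given by hypothesis).

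Since the supremum is independent of $\epsilon$, the limit as $\epsilon \to 0^+$ is simply $\|\mA\|$, concluding the proof. I do not anticipate any real obstacle: the lemma is essentially a statement that for linear operators the ``local'' operator norm (around the origin) coincides with the ``global'' one, which is a direct consequence of positive homogeneity. The only minor care needed is to justify writing the supremum as attained over all nonzero vectors rather than only those with norm bounded by $\epsilon$, which is handled by the rescaling argument above.
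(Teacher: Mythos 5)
Your proposal is correct and takes essentially the same approach as the paper: both exploit the positive homogeneity of $\vv \mapsto \|\mA\vv\|/\|\vv\|$ and a rescaling argument to show the supremum over $0 < \|\vv\| \le \epsilon$ is independent of $\epsilon$ and equals $\|\mA\|$. The paper phrases it as an equality of the two value sets before taking suprema, while you argue directly on the suprema, but this is a cosmetic difference.
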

\begin{proof}
For any $\epsilon > 0$, we firstly prove that:
\begin{equation}\label{eq:set_eq}
    \left\{ \frac{\| \mA \vv \|}{\| \vv \|} \colon 0<\| \vv \| \le \epsilon \land \vv \in \mathbb{R}^N \right\} = \left\{ \frac{\| \mA \vv \|}{\| \vv \|} \colon \| \vv \| \neq 0 \land \vv \in \mathbb{R}^N \right\}.
\end{equation}
We only need to prove that:
\begin{equation}
    \left\{ \frac{\| \mA \vv \|}{\| \vv \|} \colon 0<\| \vv \| \le \epsilon \land \vv \in \mathbb{R}^N \right\} \supseteq \left\{ \frac{\| \mA \vv \|}{\| \vv \|} \colon \| \vv \| \neq 0 \land \vv \in \mathbb{R}^N \right\},
\end{equation}
because the other direction is obvious. For any $a\in \left\{ \| \mA \vv \| / \| \vv \| \colon \| \vv \| \neq 0 \land \vv \in \mathbb{R}^N \right\}$, there exists $\vv$ with $\| \vv \| \neq 0$ such that $a = \| \mA \vv \| / \| \vv \| $. We consider $\vv' = \epsilon \vv / \|\vv \| $. It is clear that $\| \vv' \| = \epsilon$ and that:
\begin{equation}
    \frac{\| \mA \vv' \|}{\| \vv' \|} = \frac{\epsilon / \|\vv \| \| \mA \vv \|}{\epsilon / \|\vv \| \| \vv \|} = \frac{\| \mA \vv \|}{\| \vv \|} = a.
\end{equation}
Then, we have that $a\in \left\{ \| \mA \vv \| / \| \vv \| \colon 0<\| \vv \| \le \epsilon \land \vv \in \mathbb{R}^N \right\}$. Therefore, Eq.~\eqref{eq:set_eq} holds and thus:
\begin{equation}
    \sup\left\{ \frac{\| \mA \vv \|}{\| \vv \|} \colon 0<\| \vv \| \le \epsilon \land \vv \in \mathbb{R}^N \right\} = \sup\left\{ \frac{\| \mA \vv \|}{\| \vv \|} \colon \| \vv \| \neq 0 \land \vv \in \mathbb{R}^N \right\} = \| \mA \|.
\end{equation}
Let $\epsilon\rightarrow 0^+$, we finally prove that this lemma.

\end{proof}

We now start our derviation. Let $\vu_{\vth}$ denote the predictions of the neural network at the mesh locations: $\vu_{\vth}=( u_{\vth}(\vx^{(i)}) )_{i=1}^{N}$. From Definition~\ref{def:cond}, we have:
\begin{equation}\label{eq:22}
\begin{aligned}
    \mathrm{cond}(\mathcal{P}) &= \lim_{\epsilon\to 0^+} \sup_{{\substack{0<\| \delta f \| \le \epsilon\\ \vth\in \Theta}}} \frac{\| \delta u \| \big/ \| u \|}{\| \delta f \| \big/ \| f \|} \\
    &= \frac{\| f \|}{ \| u \|} \lim_{\epsilon\to 0^+} \sup_{{\substack{0<\| \mathcal{F}[u_{\vth}] - f \| \le \epsilon\\ \vth\in \Theta}}} \frac{\| u_{\vth} - u \|}{\| \mathcal{F}[u_{\vth}] - f \|}\\
    &\approx \frac{\| \vb \|}{ \| \vu \|} \lim_{\epsilon\to 0^+} \sup_{{\substack{0<\| \mA\vu_{\vth} - \vb \| \le \epsilon\\ \vth\in \Theta}}} \frac{\| \vu_{\vth} - \vu \|}{\| \mA\vu_{\vth} - \vb \|}\\
    &= \frac{\| \vb \|}{ \| \vu \|} \lim_{\epsilon\to 0^+} \sup_{{\substack{0<\| \mA(\vu_{\vth} - \vu) \| \le \epsilon\\ \vth\in \Theta}}} \frac{\| \vu_{\vth} - \vu \|}{\| \mA(\vu_{\vth} - \vu) \|},
\end{aligned}
\end{equation}
where the approximate equality holds because we discretize the BVP. Because of the assumption that the neural network has sufficient approximation capability (see Assumption~\ref{ass:7}) and the fact that $\| \mA \vv \| \le \| \mA \| \| \vv \|, \forall \vv \in \mathbb{R}^N$, Eq.~\eqref{eq:22} can be further rewritten as:
\begin{equation}\label{eq:23}
    \frac{\| \vb \|}{ \| \vu \|} \lim_{\epsilon\to 0^+} \sup_{{\substack{0<\| \vv \| \le \epsilon\\ \vv \in \mathbb{R}^N}}} \frac{\| \vv \|}{\| \mA\vv \|} = \frac{\| \vb \|}{ \| \vu \|} \| \mA^{-1} \|,
\end{equation}
where the equality holds according to Lemma~\ref{lemma:22}.

When we apply the precondition number $\mP$ satisfying that $\mP \approx \mA$ ($\mP^{-1} \approx \mA^{-1}$, also), the linear system transfers from $\mA \vu = \vb$ to $\mP^{-1}\mA \vu = \mP^{-1}\vb$. Equivalently, we have $\mA \rightarrow \mP^{-1}\mA$ and $\vb \rightarrow \mP^{-1}\vb$. Then, Eq.~\eqref{eq:23} becomes:
\begin{equation}
    \frac{\| \vb \|}{ \| \vu \|} \| \mA^{-1} \| \longrightarrow \frac{\| \mP^{-1}\vb \|}{\| \vu \|} \| \mA^{-1}\mP \| \approx \frac{\| \mA^{-1}\vb \|}{\| \vu \|} \| \mA^{-1}\mA \| = 1.
\end{equation}

\subsection{Enforcing Boundary Conditions via Discretized Losses}\label{app:algo:bc}

In this subsection, we will introduce how to enforce the boundary conditions (BCs) by our discretized loss function.

\paragraph{Dirichlet BCs.} We consider the following 1D Poisson equation:
\begin{equation}
\begin{aligned}
    \Delta u(x) &= 0, &&x\in \Omega=(0, 1),\\
    u(x) &= c, &&x\in \partial\Omega=\{0, 1\},
\end{aligned}
\end{equation}
where $u=u(x)$ is the unknown and $c\in \mathbb{R}$. We discretize the interval $[0, 1]$ into five points $\{ 0, 0.25, 0.5, 0.75, 1\}$ and construct the following discretized equation by the FDM:
\begin{equation}
    \frac{u(x+h) - 2u(x) + u(x-h)}{h^2} = 0, \quad x = \{ 0.25, 0.5, 0.75\},
\end{equation}
where $h=0.25$ and $u(0) = u(1) = c$. This can be reformulated as the following linear system:
\begin{equation}
    \begin{bmatrix}
    -2 & 1 & 0\\
    1 & -2 & 1\\
    0 & 1 & -2
    \end{bmatrix}
    \begin{bmatrix}
    u(0.75) \\
    u(0.5) \\
    u(0.25)
    \end{bmatrix} = 
    \begin{bmatrix}
    -c \\
    0 \\
    -c
    \end{bmatrix}.
\end{equation}
Now, we can see that the BC is enforced by substituting its values into the equation. Similar strategies can also be applied to other numerical schemes such as the FEM.

\paragraph{Neumann BCs and Robin BCs.}
Such types of BCs are typically enforced via the weak form of the PDEs. We consider the following Poisson equation with a Robin BC:
\begin{equation}
\begin{aligned}
    -\Delta u(\vx) &= f(\vx), &&\vx\in\Omega,\\
    \alpha u(\vx) + \beta \frac{\partial u}{\partial n}(\vx) &= g(\vx), &&\vx\in \partial\Omega,
\end{aligned}
\end{equation}
where $\alpha,\beta \in \mathbb{R}$, $\frac{\partial u}{\partial n}(\vx)$ is the normal derivative. The weak form is derived as:
\begin{equation}
    -\int_{\Omega} v\Delta u \diff{\vx} = \int_{\Omega}fv \diff{\vx},
\end{equation}
where $v\in H^1$ is the test function. Then, we perform integration by parts:
\begin{equation}
    \int_{\Omega} \nabla u \cdot \nabla v\diff{\vx} - \int_{\partial\Omega} \frac{\partial u}{\partial n}v\diff{\vx} = \int_{\Omega} fv\diff{\vx}.
\end{equation}
We plug in the Robin BC to obtain:
\begin{equation}
    \int_{\Omega} \nabla u \cdot \nabla v\diff{\vx} + \frac{\alpha}{\beta}\int_{\partial\Omega}uv\diff{\vx} = \int_{\Omega} fv\diff{\vx} + \frac{1}{\beta}\int_{\partial\Omega} gv\diff{\vx}.
\end{equation}
Finally, we assemble the above equation by the FEM and can obtain the loss that incorporates the BC. For other numerical schemes like FDM, we can plug in the finite difference formula of the derivative term to enforce the BC, which is similar to the cases of Dirichlet BCs.

\paragraph{Other BCs.}
For other forms of BCs, enforcement is usually implemented by substitution. For example, when dealing with left-right periodic BCs, we often substitute the values in the left boundary with the values in the right boundary. Or equivalently, we reduce the degrees of freedom of the left and right boundaries by half.

\begin{algorithm}[tb]
   \caption{Preconditoning PINNs for time-dependent problems (sequential)}
   \label{alg:tm}
\begin{algorithmic}[1]
   \STATE {\bfseries Input:} number of iterations $K$, mesh size $N$, learning rate $\eta$, time steps $\{ t_i \}_{i=1}^n$, initial condition $u_0(\vx)$, and initial parameters $\vth^{(0)}$
   \STATE {\bfseries Output:} solutions at each time steps $u_i(\vx), i=1,\dots, n$
   \FOR{$i=1,\dots, n$}
   \STATE Generate a mesh $\{ \vx^{(j)} \}_{j=1}^{N}$ for current time step
   \STATE Evaluate $u_{i-1}(\vx)$ on the mesh to obtain $\vu_{i-1}$
   \STATE Assemble the linear system $\mA'=(\mI + \mA(t_i)), \vb'= (\vb(t_i) + \vu_{i-1})$ according to Eq.~\eqref{eq:discr:tm}
   \STATE Compute the preconditioner for $\mA'$: $\mP = \widehat{\mL}\widehat{\mU}$ via ILU
   \FOR{$k=1,\dots,K$}
        \STATE Evaluate the neural network $u_{\vth^{(k-1)}}$ on mesh points: $\vu_{\vth^{(k-1)}}=( u_{\vth^{(k-1)}}(\vx^{(j)}) )_{j=1}^{N}$
        \STATE Compute the loss function $\mathcal{L}^\dagger(\vth^{(k-1)})$ by:
        
            \begin{equation}
            \mathcal{L}^\dagger(\vth)= \left\| \mP^{-1} (\mA' \vu_{\vth} - \vb')  \right\|^2
            \end{equation}
            
        \STATE Update the parameters via gradient descent: $\vth^{(k)} \leftarrow \vth^{(k-1)} - \eta \nabla_{\vth} \mathcal{L}^\dagger(\vth^{(k-1)})$
   \ENDFOR
   \STATE Let $u_i(\vx) \leftarrow u_{\vth^{(K)}}(\vx)$
   \STATE Let $\vth^{(0)} \leftarrow \vth^{(K)}$ (transfer learning)
   \ENDFOR

   {\bfseries Note:}
   \begin{enumerate}[label=(\alph*)]
       \item If the mesh $\{ \vx^{(j)} \}_{j=1}^{N}$, the matrix $\mA$, and the bias $\vb$ do not vary with time, we can only generate them once at the beginning instead of regeneration at each time step.
       \item We use transfer learning to migrate the neural network from the previous time step to the next time step since the solution varies little for most physical problems (if the number of time steps $n$ is sufficiently large). 
   \end{enumerate}
\end{algorithmic}
\end{algorithm}

\begin{algorithm}[tb]
   \caption{Preconditoning PINNs for time-dependent problems (parallelized)}
   \label{alg:tm:2}
\begin{algorithmic}[1]
   \STATE {\bfseries Input:} number of iterations $K$, mesh size $N$, learning rate $\eta$, time steps for $m$ sub-intervals $S_1 = \{ t_i^1 \}_{i=1}^{n}, \dots, S_m = \{ t_i^m \}_{i=1}^{n}$ (each sub-interval has $n$ steps), initial condition $u_0(\vx)$, and initial parameters $\vth^{(0)}_i, i=1,\dots,n$
   \STATE {\bfseries Output:} solutions at each time steps within each sub-interval $u^s_i(\vx), i=1,\dots, n, s = 1,\dots, m$
   \STATE {\bfseries Initialize:} $u^{1}_{0}(\vx) \leftarrow u_0(\vx)$
   \FOR{$s=1,\dots, m$}
   \STATE Generate a mesh $\{ \vx^{(j)} \}_{j=1}^{N}$ for current time step
   \STATE Evaluate $u^s_0(\vx)$ on the mesh to obtain $\vu_0^s$
   \STATE Assemble the matrix $\mA'_i=(\mI + \mA(t_i^s))$, $i=1,\dots,n$
   \STATE Compute the preconditioner for $\mA'_i$: $\mP_i = \widehat{\mL}_i\widehat{\mU}_i$ via ILU, $i=1,\dots,n$
   \FOR{$k=1,\dots,K$}
        \STATE Evaluate the neural network $u_{\vth^{(k-1)}_i}$ on mesh points: $\vu_{\vth^{(k-1)}_i}=( u_{\vth^{(k-1)}_i}(\vx^{(j)}) )_{j=1}^{N}$, $i=1,\dots,n$
        \STATE Assemble the bias $\vb'_1= (\vb(t_1^s) + \vu_0^s)$ and $\vb'_i= (\vb(t_i^s) + \vu_{\vth^{(k-1)}_{i-1}})$, where $i=2,\dots,n$
        \STATE Compute the loss function $\mathcal{L}^\dagger(\vth^{(k-1)}_1,\dots,\vth^{(k-1)}_n)$ by:
            \begin{equation}
            \mathcal{L}^\dagger(\vth_1,\dots, \vth_n)= \sum_{i=1}^{n}w_i\left\| \mP^{-1}_i (\mA'_i \vu_{\vth_i} - \vb'_i)  \right\|^2,
            \end{equation}
            where $w_i$ is the reweighting parameters of causality \citep{wang2022respecting}, satisfying that $\sum_{i=1}^n w_i = 1$
        \STATE Update the parameters via gradient descent: \\ $\vth^{(k)}_i \leftarrow \vth^{(k-1)}_i - \eta \nabla_{\vth_i} \mathcal{L}^\dagger(\vth^{(k-1)}_1, \dots, \vth^{(k-1)}_n)$, $i=1,\dots,n$
   \ENDFOR
   \STATE Let $u^s_i(\vx) \leftarrow u_{\vth^{(K)}_{i}}, i=1,\dots, n$
   \IF{$s < m$}
    \STATE Let $u^{s+1}_0(\vx) \leftarrow u^{s}_{n}(\vx)$
    \ENDIF
   \STATE Let $\vth^{(0)}_i \leftarrow \vth^{(K)}_i$ (transfer learning), $i=1,\dots,n$
   \ENDFOR

   {\bfseries Note:}
   \begin{enumerate}[label=(\alph*)]
       \item In our approach, we employ multiple neural networks, denoted as $u_{\vth_{i}}, i=1,\dots, n$, to predict the solution at each time step. During implementation, these networks share all their weights except for the final linear layer. This design choice ensures efficient memory usage without compromising the distinctiveness of each network's predictions.
   \end{enumerate}
\end{algorithmic}
\end{algorithm}

\subsection{Handling Time-Dependent \& Nonlinear Problems}\label{app:algo:tmnl}

We now introduce our strategies to handle time-dependent and nonlinear problems.

\paragraph{Time-Dependent Problems.}
For problems with time dependencies, one straightforward approach is to treat time as an additional spatial dimension, resulting in a unified spatial-temporal equation. For instance, supposing that we are dealing with a problem defined in a 2D square $[0,1]^2$ and a time interval $[0,1]$, we can consider it as a problem defined in a 3D cube $[0,1]^3$, where we build the mesh and assemble the equation system. However, this approach can necessitate extremely fine meshing to ensure adequate accuracy, particularly for problems with high temporal frequencies. 

An alternative approach involves discretizing the time dimension into specific time steps and subsequently solving the spatial equation iteratively for each step. For example, we consider the following abstraction of time-dependent PDEs:
\begin{equation}
    \frac{\partial u}{\partial t}(\vx, t) + \mathcal{F}[u](\vx, t) = f(\vx, t), \quad \forall\vx \in \Omega, t \in (0, T],
\end{equation}
with the initial condition of $\vu(\vx, 0) = h(\vx), \forall \vx \in \Omega$ and proper boundary conditions, where $t$ denotes the time coordinate, $T\in \mathbb{R}^+$, and $u$ is the unknown. We now discretize the time interval into $n$ time $t_0, t_1, \dots, t_n$ ($t_0 = 0, t_n = T$). Let $u_i(\vx)$ denote $u(\vx, t_i)$. Starting from $u_0(\vx) = h(\vx)$, we can construct the following iterative systems ($i=1,2,3,\dots,$):
\begin{equation}
     u_i(\vx) + (t_i - t_{i-1})\mathcal{F}[u_{i}](\vx, t_i) = (t_i - t_{i-1})f(\vx, t_i) + u_{i-1}(\vx), \quad \forall\vx \in \Omega.
\end{equation}
Then, we perform discretization in the spatial dimension with a mesh $\{ \vx^{(i)} \}_{i=1}^{N}$:
\begin{equation}\label{eq:discr:tm}
    (\mI + \mA(t_i) ) \vu_{i} = \vb(t_i) + \vu_{i-1},
\end{equation}
where $\mA(t_i), \vb(t_i)$ are matrices at time $t_i$ and $\vu_{i}=( u_{i}(\vx^{(j)}) )_{j=1}^{N}$. It is noted that the specific form of Eq.~\eqref{eq:discr:tm} depends on the numerical schemes employed. For example, when using the FEM, Eq.~\eqref{eq:discr:tm} should become:
\begin{equation}
    (\mK + \mA(t_i) ) \vu_{i} = \vb(t_i) + \mK \vu_{i-1},
\end{equation}
where $\mK$ is the mass matrix which simply integrates the trial and test functions.

Now, we can iteratively solve Eq.~\eqref{eq:discr:tm} with a PINN to obtain the solution at each time step. Specifically, we can sequentially solve each time step at one time, as described by Algorithm~\ref{alg:tm}, or divide the time interval into several sub-intervals and train in parallel within sub-intervals (see Algorithm~\ref{alg:tm:2}).

\begin{algorithm}[tb]
   \caption{Preconditoning PINNs for non-linear problems}
   \label{alg:nonlinear}
\begin{algorithmic}[1]
   \STATE {\bfseries Input:} number of iterations $K$, number of newton iteration $T$, mesh size $N$, learning rate $\eta$, initial guess $u_0(\vx)$, and initial parameters $\vth^{(0)}$
   \STATE {\bfseries Output:} solution $u_T(\vx)$
   \STATE Generate a mesh $\{ \vx^{(j)} \}_{j=1}^{N}$ for the problem domain $\Omega$
   \STATE Assemble the nonlinear system $\mF$
   \FOR{$i=1,\dots, T$}
   \STATE Evaluate $u_{i-1}(\vx)$ on the mesh to obtain $\vu_{i-1}$
   \STATE Compute the Jacobian matrix $J_{\mF}(\vu_{i-1})$
   \STATE Compute the preconditioner for $J_{\mF}(\vu_{i-1})$: $\mP = \widehat{\mL}\widehat{\mU}$ via ILU
   \FOR{$k=1,\dots,K$}
        \STATE Evaluate the neural network $u_{\vth^{(k-1)}}$ on mesh points: $\vu_{\vth^{(k-1)}}=( u_{\vth^{(k-1)}}(\vx^{(j)}) )_{j=1}^{N}$
        \STATE Compute the loss function $\mathcal{L}^\dagger(\vth^{(k-1)})$ by:
            \begin{equation}
            \mathcal{L}^\dagger(\vth)= \left\| \mP^{-1} (J_{\mF}(\vu_{i-1}) \vu_{\vth} - J_{\mF}(\vu_{i-1}) \vu_{i-1} + \mF(\vu_{i-1}))  \right\|^2
            \end{equation}
            
        \STATE Update the parameters via gradient descent: $\vth^{(k)} \leftarrow \vth^{(k-1)} - \eta \nabla_{\vth} \mathcal{L}^\dagger(\vth^{(k-1)})$
   \ENDFOR
   \STATE Let $u_i(\vx) \leftarrow u_{\vth^{(K)}}(\vx)$
   \STATE Let $\vth^{(0)} \leftarrow \vth^{(K)}$ (transfer learning)
   \ENDFOR

   {\bfseries Note:}
   \begin{enumerate}[label=(\alph*)]
       \item Here, we only present the vanilla Newton method, while a lot of advanced techniques could be applied, which include line search, relaxation, specific stopping criteria, and so on.
   \end{enumerate}
\end{algorithmic}
\end{algorithm}

\paragraph{Nonlinear Problems.}
 In the context of nonlinear problems, a strategy is to transfer the nonlinear components to the right-hand side and only precondition the linear portion. For example, we consider the following equation:
 \begin{equation}
     \Delta u(\vx) + \sin{u}(\vx) = f(\vx), \quad \forall \vx\in\Omega.
 \end{equation}
 We can simply move the nonlinear term $\sin{u}(\vx)$ to the right-hand-side and assemble:
 \begin{equation}
     \mA \vu = \vb - \sin{\vu}.
 \end{equation}
 Then, we can compute the preconditioner for the linear part $\mA$ and the loss function becomes $\mathcal{L}^\dagger(\vth) = \| \mP^{-1} (\mA \vu_{\vth} - \vb + \sin{\vu_{\vth}}) \|^2$. Nonetheless, this might lead to convergence issues in cases of highly nonlinearity. 
 
 To address this, we employ the Newton-Raphson method, allowing us to linearize the problem and then solve the associated linear tangent equation during each Newton iteration. Specifically, assembling a nonlinear problem results in a system of nonlinear equations:
 \begin{equation}
     \mF(\vu) = \bm{0},\quad \mF(\vu) = (F_1(\vu), \dots, F_m(\vu)),
 \end{equation}
 where $m$ is the number of nonlinear equations. The Newton-Raphson method solves the above equation with the following iterations ($i=1,2,3\dots,$):
 \begin{equation}
     \vu_{i} = \vu_{i-1} - J_{\mF}(\vu_{i-1})^{-1} \mF(\vu_{i-1}),
 \end{equation}
 where $J_{\mF}(\vu_{i-1})^{-1}$ the Jacobian matrix of $\mF$ at $\vu_{i-1}$. Now, we can use the neural network to solve the linear equation $J_{\mF}(\vu_{i-1}) \vu_{i} = J_{\mF}(\vu_{i-1}) \vu_{i-1} -\mF(\vu_{i-1})$ for $\vu_{i}$ and proceed the iteration. We provide a detailed description in Algorithm~\ref{alg:nonlinear}.

\section{Supplements for Section~\ref{sec:exp:cond}}\label{app:exp:cond}

\subsection{Environment and Global Settings}\label{app:exp:cond:env}

\paragraph{Environment.} We employ PyTorch \citep{paszke2019pytorch} as our deep-learning backend and base our physics-informed learning experiment on DeepXDE \citep{lu2021deepxde}. All models are trained on an NVIDIA TITAN Xp 12GB GPU in the operating system of Ubuntu 18.04.5 LTS. When analytical solutions are not available, we utilize the Finite Difference Method (FDM) to produce ground truth solutions for the PDEs.

\paragraph{Global Settings.} Unless otherwise stated, all the neural networks used are MLP of 5 hidden layers with 100 neurons in each layer. Besides, $\tanh$ is used for the activation function and Glorot normal \citep{glorot2010understanding} is used for trainable parameter initialization. The networks are all trained with an Adam optimizer \citep{kingma2014adam} (where the learning rate is $10^{-3}$ and $\beta_1 = \beta_2 = 0.99$) for 20000 iterations.

\subsection{Details of Wave, Burgers', and Helmholtz Equations}
\label{app:exp:cond:pde}

The specific definitions of the PDEs are shown below.

\paragraph{Wave Equation.} The governing PDE is:
\begin{equation}
    u_{tt} - C^2 u_{xx} = \left(\frac{\pi}{8}\right)^2(C^2 - 1)  \sin\left(\frac{\pi}8x\right)\cos\left(\frac{\pi}8t\right),
\end{equation}
with the boundary condition:
\begin{equation}
    u(0, t) = u(8, t) = 0,
\end{equation}
and initial condition:
\begin{align}
\begin{aligned}
    u(x,0) &= \sin \left(\frac{\pi}8x\right) + \frac12\sin\left(\frac{\pi}2x\right),\\
    u_t(x,0) &= 0,\\
\end{aligned}
\end{align}
defined on the domain $\Omega \times T = [0,8]\times [0,8]$, where $u = u(x,t)$ is the unknown.

The reference solution is:
\begin{equation}
    u(x, t) = \sin\left(\frac{\pi}8x\right)\cos\left(\frac{\pi}8t\right) + \frac12\sin\left(\frac\pi2x\right)\cos\left(\frac{C\pi}2t\right).
\end{equation}

In the experiment, we uniformly sample the value of parameter $C$ with a step of $0.1$ within the range $[1.1, 5]$.

\paragraph{Helmholtz Equation.} The governing PDE is:
\begin{equation}
    \Delta u + u = (1 - 2\pi^2A^2)\sin(A\pi x_1) \sin (A \pi x_2),
\end{equation}
with the boundary condition:
\begin{equation}
    u(x_1,0) = u(x_1, 1) = u(0, x_2) = u(1, x_2) = 0,
\end{equation}
defined on $\Omega = [0,1]^2$, where $u=u(\vx)=u(x_1,x_2)$ is the unknown.

The reference solution is:
\begin{equation}
    u(x, y) = \sin (A\pi x_1)\sin(A\pi x_2).
\end{equation}

In the experiment, we vary $A$ as integers between $1$ and $20$. 

\paragraph{Burgers' Equation.} The governing PDE on domain $\Omega \times T = [-1, 1] \times [0,1]$ is:
\begin{equation}
u_t + uu_x - \nu  u_{xx} = \sin(\pi x),
\end{equation}
with the boundary condition:
\begin{equation}
    u(-1, t) = u(1, t)= 0,
\end{equation}
and initial condition:
\begin{equation}
u(0, x) = -\sin (\pi x),
\end{equation}
where $u=u(x,t)$ is the unknown.

In the experiment, we uniformly sample 21 values of $\nu$ on a logarithmic scale (base 10) ranging from $10^{-2}$ to $1$. The reference solution is generated by the FDW with a mesh of $501 \times 21$, where the nonlinear algebra equation is solved by 10-step Newton iterations.

\subsection{Experimental Details}\label{app:exp:cond:detail}

\paragraph{Implementation Details.}
Firstly, we introduce how we numerically estimate the condition number:
\begin{enumerate}
    \item \textbf{FDM Approach:} We assemble the matrix $\mA$ with a specified uniform mesh. For linear PDEs, according to Eq.~\eqref{eq:original_cond}, we have that $\mathrm{cond}(\mathcal{P}) \approx \frac{\| \vb \|}{\| \vu \|} \| \mA^{-1} \|$. Therefore, we could approximate the condition number by calculating the norm of $\mA^{-1}$. For nonlinear PDEs, in light of Proposition~\ref{theo:prop}, we have $\mathrm{cond}(\mathcal{P}) = \frac{\| f \|}{\| u \|} \| D\mathcal{F}^{-1}[f] \|$ by assuming its Fréchet differentiablity. Then, we could approximate the condition number by the norm of the inverse of the Jacobian matrix of the discretized nonlinear equations.
    \item \textbf{Neural Network Approach:} According to the definition of the condition number, we can directly train a neural network to maximize:
    \begin{equation}
        \frac{\| \delta u \| \big/ \| u \|}{\| \delta f \| \big/ \| f \|}.
    \end{equation}
    where $\| \delta f \|$ are confined to a small value. For linear PDEs, we can simplify the problem to be computing this equation: $\|\mathcal F^{-1}\| = \sup_{\|f\|=1} \frac{\|\mathcal{F}^{-1}[f]\|}{\|f\|} = \sup_{\|f\|=1} \frac{\|u_{\vth}\|}{\|f\|}$. Since the operator is linear, we can further remove the constraint $\|f\|=1$ and optimize $\frac{\|u_{\vth} \|}{\|f\|}=\frac{\|u_{\vth} \|}{\|\mathcal F(u_{\vth})\|}$ over the parameter space to find the maximum, which will be minimizing its reciprocal or its opposite.
\end{enumerate}




\paragraph{Hyper-parameters.}
Secondly, we introduce the hyper-parameters of computing solution or the condition number for each problem:
\begin{itemize}
    \item \textbf{1D Poisson Equation:} We employ a mesh of the size $100$ for FDM. The hard-constraint ansatz for the PINN is: $x(2\pi / P - x) / (\pi / P)^2 u_{\vth}$. We use $2048$ collocation points and $128$ boundary points to train the PINN for $5000$ epochs to compute the condition number.
    \item \textbf{Wave Equation:} We employ a mesh of the size $50\times 50$ for FDM. The hard-constraint ansatz for the PINN is: $u_0 + x(8-x)/16 \cdot (t(12-t))^2 / 256 \cdot u_{\vth}$, where $t$ is time and $u_0$ is the initial condition. We use $8192$ collocation points and $2048$ boundary points to train the PINN with the learning rate of $10^{-4}$. 
    \item \textbf{Helmholtz Equation:} We employ a mesh of the size $50\times 50$ for FDM. The hard-constraint ansatz for the PINN is: $\alpha u_{\vth} + (1-\alpha)\sin(A \pi  x) \sin(A \pi  y)$, where $\alpha = 16x (1-x) y (1-y)$. We use $8192$ collocation points and $2048$ boundary points to train the PINN. 
    \item \textbf{Burgers' Equation:} We employ a mesh of the size $500\times 20$ for FDM. The hard-constraint ansatz for the PINN is: $\alpha (1-\beta) u_{\vth} - \beta \sin(\pi  x)$, where $\alpha = (1+x)(1-x), \beta= \exp{(-t)}$. We use $8192$ collocation points and $2048$ boundary points to train the PINN.
\end{itemize}


\paragraph{Nomralization of the Condition Number.}
For Burgers equation and Wave equation, we set:
\begin{equation}
    \mathrm{normalized}\ \mathrm{cond}(\mathcal{P}) = \mathrm{MinMax}(\log(\mathrm{cond}(\mathcal{P}) + c))
\end{equation}
where $c=0$ for Wave equation.
For the Helmholtz equation, we select 
\begin{equation}
    \mathrm{normalized}\ \mathrm{cond}(\mathcal{P}) = \mathrm{MinMax}(\sqrt{\mathrm{cond}(\mathcal{P})})
\end{equation}
as the normalizer. Here, $\mathrm{MinMax}(\cdot)$ denotes a min-max normalization for the given sequence to ensure the final values living in $[0, 1]$.

\subsection{Physical Interpretation for Correlation Between PINN Error and Condition Number}\label{app:exp:connection}

Figure \ref{fig:cond_convergence} unveils a robust linear association between the normalized condition number and the log-scaled L2 relative error (L2RE). This correlation can be expressed as:
\begin{equation*}\label{eq:app:phy}
    \log(\mathrm{L2RE}) \appropto \mathrm{normalized}\ \mathrm{cond}(\mathcal{P}),
\end{equation*}
where, for simplicity, we omit the bias term (similarly in subsequent derivations).

To demystify this pronounced correlation, we first investigate the spectral behaviors of PINNs in approximating functions. When a neural network mimics the solutions of PDEs, it might exhibit a spectral bias. This implies that networks are more adept at capturing low-frequency components than their high-frequency counterparts \citep{pmlr-v97-rahaman19a}. Recent studies have empirically demonstrated an exponential preference of neural networks towards frequency \cite{xu2019frequency}. This leads to the inference that the error could be exponentially influenced by the system's frequency. Hence, it is plausible to represent this relationship as:
\begin{equation*}
    \log(\mathrm{L2RE}) \appropto \mathrm{Frequency}.
\end{equation*}

In what follows, we explore how $\mathrm{Frequency}$ correlates with $\mathrm{cond}(\mathcal{P})$. Using $\mathrm{Frequency}$ as a bridge, we will model the relationship between $\log(\mathrm{L2RE})$ and $\mathrm{cond}(\mathcal{P})$.
\begin{itemize}
    \item \textbf{Helmholtz Equation:} Here, $\mathcal{F}^{-1}$ remains constant with the parameter $A$. This implies that $\mathrm{cond}(\mathcal{P}) \propto \frac{\| f \|}{\| u \|} = |1 - 2\pi^2 A^2|$. Given that $A$ determines the solution's frequency, we infer that $\sqrt{\mathrm{cond}(\mathcal{P})} \appropto \mathrm{Frequency}$. This leads to the conclusion that  $\log(\mathrm{L2RE}) \appropto \sqrt{\mathrm{cond}(\mathcal{P})}$, aligning with our experimental findings.
    \item \textbf{Wave \& Burgers’ Equation:} For these equations, the parameters $C$ and $\nu$ influence the frequency of both the solution and the operator $\mathcal{F}$. Given their similar roles, we use the wave equation to elucidate the relationship between the condition number and the parameter. This relationship is found to be \emph{at least exponential}. Based on Proposition~\ref{theo:prop}, we define $\mathcal{P}_1$ as:
    \begin{equation}
        u_{tt} - C^2 u_{xx} = 0,
    \end{equation}
    maintaining the initial and boundary conditions. Assuming $\mathcal{P}_1$ is well-posed, we introduce $\mathcal{G}[w] = \mathcal{F}^{-1}[w] - u_1$ for every $w$ in $S$, where $u_1$ is the solution to $\mathcal{P}_1$. Chossing a particular $f_0(x,t) = C^4 ( -e^{C^2 t} (1 + Kx) + e^{Cx} (1+C^2t) )$ with $K = \frac{e^{8C} - 1}{8}$, we derive $\mathcal{G}[f_0](x,t) = (e^{C^2 t} - 1 - C^2 t)(e^{Cx} - 1 - Kx)$. Consequently, we obtain:
    \begin{equation}
        \mathrm{cond}(\mathcal{P}) = \frac{\| f \|}{\| u \|} \left\|\mathcal{G}\right\| \ge \frac{\| f \|}{\| u \|} \frac{\| \mathcal{G}[f_0] \|}{\| f_0 \|} \appropto \frac{e^{kC}}{C^n},
    \end{equation}
    where $k, n$ are constants independent of $C$. In summary, we deduce $\log(\mathrm{cond}(\mathcal{P})) \appropto \mathrm{Frequency}$, leading to $\log(\mathrm{L2RE}) \appropto \log(\mathrm{cond}(\mathcal{P}))$.
\end{itemize}







\section{Supplements for Section~\ref{sec:exp:forward}}\label{app:exp:forward}

\subsection{Environment and Global Settings}\label{app:exp:env}

\paragraph{Environment.} The environment settings are basically consistent with that in Appendix~\ref{app:exp:cond:env}, except that:
\begin{itemize}
    \item The model in NS2d-CG is trained on an Tesla V100-PCIE 16GB GPU. If you want to in a GPU with lower memory, you can specify \texttt{Use Sparse Solver = True} in the configuration to save memory.
    \item The reference data are generated by the work of \cite{hao2023pinnacle}.
    \item We employ the finite element method (FEM) for discretization, utilizing FEniCS \citep{alnaes2015fenics} as the platform.
\end{itemize}

\paragraph{Global Settings.} Unless otherwise stated, we adopt the following settings:
\begin{itemize}
    \item For 2D problems (including the time dimension), we employ the MLP of 3 hidden layers with 64 neurons in each layer. For 3D problems (including the time dimension), we employ the MLP of 5 hidden layers with 128 neurons in each layer. Besides, \texttt{SiLU} is used for the activation function. The initialization method is the default one in PyTorch. And we employ 10-dimensional Fourier features, as detailed in \citep{tancik2020fourier}, uniformly sampled on a logarithmic scale (base 2) spanning $2\pi \times [2^{-5}, 2^{5}]$. 
    \item The networks are all trained with an Adam optimizer \citep{kingma2014adam} (where the learning rate is $10^{-3}$ and $\beta_1 = 0.9, \beta_2 = 0.99$) for 20000 iterations.
    \item The results of baselines are from the paper \citep{hao2023pinnacle}, except the computation time results, which are re-evaluated in the same environment as our method. 
\end{itemize}

\paragraph{Baselines Introduction.}
We redirect readers to the Section 3.3.1 in the paper \citep{hao2023pinnacle}.


\subsection{PDE Problems' Introduction and Implementation Details}\label{app:exp:detail}

In this section, we briefly describe PDE problems considered in PINNacle \cite{hao2023pinnacle} used in our experiment, as well as the implementation and hyper-parameters for our method. We refer to the original paper \citep{hao2023pinnacle} for the problem details such as initial conditions and boundary conditions.

\paragraph{Burgers1d-C.}
The equation is given by:
\begin{equation}
    \frac{\partial u}{\partial t} + u u_x = \nu u_{x x},
\end{equation}
define on $\Omega\times T =[- 1, 1] \times [0, 1]$, where $u=u(x,t)$ is the unknown, $\Omega$ is the spatial domain whereas $T$ is the temporal domain (the same below). In this and subsequent PDE problems, initial conditions and boundary conditions are omitted for clarity unless specified otherwise. Let $\Omega' = \Omega\times T, x' = (x,t)$. The weak form is expressed as:
\begin{equation}
     \int_{\Omega'} \frac{\partial u}{\partial t}\cdot v \diff{x'}  + \int_{\Omega'} (uu_x)\cdot v \diff{x'} + \nu \int_{\Omega'} u_x\cdot v_x \diff{x'}= 0,
\end{equation}
where $v$ is the test function. We employ the FEniCS to discretize the problem with a mesh of size $500\times 20$. Given that the matrix size remains within the memory constraints, we utilize a dense matrix implementation for faster matrix computations. The drop tolerance of the ILU is $10^{-4}$. We solve the problem with $10$-step Newton iterations (see Algorithm~\ref{alg:nonlinear}) and train the neural model for $2000$ iterations in each Newton step.

\paragraph{Burgers2d-C.}
The equation is given by:
\begin{equation}
    \frac{\partial\bm{u}}{\partial t}+\vu \cdot \nabla \vu-\nu \Delta \vu=0,
\end{equation}
defined on $\Omega\times T =[0, 4]^2 \times [0, 1]$, where $\vu=(u_1(\vx,t), u_2(\vx,t))$ is the unknown. We solve this problem by an (implicit) time-stepping scheme (see Algorithm~\ref{alg:tm:2}). The number of sub-time intervals is $50$, with each interval having $10$ steps. The weak form is expressed as:
\begin{equation}
     \int_{\Omega} \vu_1 \cdot \vv \diff{\vx}  + \delta t \nu \int_{\Omega} \nabla \vu_1 \cdot \nabla \vv \diff{\vx} + \delta t \int_{\Omega} (\vu_1 \cdot \nabla \vu_1 ) \cdot \vv \diff{\vx} = \int_{\Omega} \vu_0 \cdot \vv \diff{\vx},
\end{equation}
where $\vu_0 = \vu_0(\vx)$ is the solution at the previous time step, $\vu_1 = \vu_1(\vx)$ is the solution at current time step, $\vv = \vv(\vx)$ is the test function, and $\delta t = 1/500$ is the time step length. We employ the FEniCS to discretize the problem with an external mesh including $12657$ nodes generated by COMSOL Multiphysics (commercial software for FEM \citep{comsol}). It is noted that we do not employ a Newton method to solve the discretized nonlinear equations since the time overhead is too high. Instead, we only precondition the linear portion (see Appendix~\ref{app:algo:tmnl}) and let the neural model find the correct solution by gradient descent. Besides, we utilize a sparse matrix implementation since the matrix size exceeds the memory constraint. The drop tolerance of the ILU is $10^{-1}$. We train the model for $2000$ iterations in each sub-time interval while $40000$ iterations in the first interval (i.e., cold-start training). Finally, in this problem, we employ an MLP of $5$ layers with $128$ neurons in each layer as our neural model.

\paragraph{Poisson2d-C.}
The equation is given by:
\begin{equation}
    -\Delta u=0,
\end{equation}
defined on a 2D irregular domain $\Omega$, a rectangular domain $[-0.5, 0.5]^2$ with four circular voids of the same size, where $u=u(\vx)$ is the unknown. The weak form is expressed as:
\begin{equation}
    \int_{\Omega} \nabla u \cdot \nabla v \diff{\vx} = 0,
\end{equation}
where $v$ is the test function. We employ the FEniCS to discretize the problem with an external mesh including $10602$ nodes generated by the Gmsh \citep{geuzaine2009gmsh}. Given that the matrix size remains within the memory constraints, we utilize a dense matrix implementation for faster matrix computations. The drop tolerance of the ILU is $10^{-3}$.

\paragraph{Poisson2d-CG.}
The equation is given by:
\begin{equation}
    -\Delta u + k^2 u=f,
\end{equation}
defined on a 2D irregular domain $\Omega$, a rectangular domain $[-1, 1]^2$ with four circular voids of different sizes, where $u=u(\vx)$ is the unknown, $k=8$, and $f=f(\vx)$ is given. The weak form is expressed as:
\begin{equation}
    \int_{\Omega} \nabla u \cdot \nabla v \diff{\vx} + k^2 \int_{\Omega} u \cdot v \diff{\vx} = \int_{\Omega} f \cdot v \diff{\vx},
\end{equation}
where $v$ is the test function. We employ the FEniCS to discretize the problem with an external mesh including $9382$ nodes generated by the Gmsh. Given that the matrix size remains within the memory constraints, we utilize a dense matrix implementation for faster matrix computations. The drop tolerance of the ILU is $10^{-3}$.

\paragraph{Poisson3d-CG.}
The equation is given by:
\begin{equation}
    -\mu_i \Delta u + k_i^2 u=f \quad\text{in } \Omega_i,\quad i=1,2,
\end{equation}
defined on a 3D irregular domain $\Omega$, a cubic domain $[0, 1]^3$ with four spherical voids of different sizes, where $u=u(\vx)$ is the unknown, $\Omega_1 = \Omega \cap \{ \vx = (x_1, x_2, x_3) \mid x_3 < 0.5 \}$, $\Omega_2 = \Omega \cap \{ \vx = (x_1, x_2, x_3) \mid x_3 \ge 0.5 \}$, $\mu_1 = \mu_2 =1$, $k_1 = 8, k_2 =10$, and $f=f(\vx)$ is given. The weak form is expressed as:
\begin{equation}
    \mu_1 \int_{\Omega_1} \nabla u \cdot \nabla v \diff{\vx} + k_1^2 \int_{\Omega_1} u \cdot v \diff{\vx} + \mu_2 \int_{\Omega_2} \nabla u \cdot \nabla v \diff{\vx} + k_2^2 \int_{\Omega_2} u \cdot v \diff{\vx}= \int_{\Omega} f \cdot v \diff{\vx},
\end{equation}
where $v$ is the test function. We employ the FEniCS to discretize the problem with an external mesh including $13680$ nodes generated by the Gmsh. Given that the matrix size remains within the memory constraints, we utilize a dense matrix implementation for faster matrix computations. The drop tolerance of the ILU is $10^{-3}$.

\paragraph{Poisson2d-MS.}
The equation is given by:
\begin{equation}
\begin{aligned}
    -\nabla( a \nabla u) &= f  &&\text{in } \Omega,\\
    \frac{\partial u}{\partial n} + u &= 0  &&\text{in } \partial\Omega,
\end{aligned}
\end{equation}
defined on $\Omega = [-10, 10]^2$, where $u=u(\vx)$ is the unknown and $a=a(\vx)$ denotes a predefined function. Notably, $\Omega$ is partitioned into a $5\times 5$ grid of uniform cells. Within each cell, $a$ takes a piecewise linear form, introducing discontinuities at the cell boundaries. We define the weak form to be:
\begin{equation}
    \int_{\Omega} a(\nabla u \cdot \nabla v) \diff{\vx} + \int_{\partial\Omega} a (u \cdot v) \diff{\vx}= \int_{\Omega} f \cdot v \diff{\vx},
\end{equation}
where $v$ is the test function. We employ the FEniCS to discretize the problem with a mesh of size $100\times 100$. Given that the matrix size remains within the memory constraints, we utilize a dense matrix implementation for faster matrix computations. The drop tolerance of the ILU is $10^{-3}$. Finally, in this problem, we employ a Fourier MLP of $5$ layers with $128$ neurons in each layer as our neural model, where the Fourier features have a dimension of 128 and are sampled in $\mathcal{N}(0, \pi)$.

\paragraph{Heat2d-VC.}
The equation is given by:
\begin{equation}
    \frac{\partial u}{\partial t}-\nabla( a \nabla u) = f,
\end{equation}
define on $\Omega\times T =[0, 1]^2 \times [0, 5]$, where $u=u(\vx,t)$ is the unknown and $a=a(\vx)$ denotes a predefined function with multi-scale frequencies. Let $\Omega' = \Omega\times T, \vx' = (\vx,t)$. We define the weak form to be:
\begin{equation}
    \int_{\Omega'} \frac{\partial u}{\partial t} \cdot v \diff{\vx'} + \int_{\Omega'} a(\nabla u \cdot \nabla v) \diff{\vx'} = \int_{\Omega'} f \cdot v \diff{\vx'},
\end{equation}
where $v$ is the test function. We employ the FEniCS to discretize the problem with a mesh of size $20 \times 100\times 100$. Besides, we utilize a sparse matrix implementation since the matrix size exceeds the memory constraint. The drop tolerance of the ILU is $10^{-1}$. Finally, in this problem, we employ a Fourier MLP of $5$ layers with $128$ neurons in each layer as our neural model, where the Fourier features have a dimension of 128 and are sampled in $\mathcal{N}(0, \pi)$.

\paragraph{Heat2d-MS.}
The equation is given by:
\begin{equation}
    \frac{\partial u}{\partial t}-\nabla \cdot \left(\left(\frac{1}{(500 \pi)^2}, \frac{1}{(\pi)^2}\right) \odot \nabla u \right) = 0,
\end{equation}
define on $\Omega\times T =[0, 1]^2 \times [0, 5]$, where $u=u(\vx,t)$ is the unknown and $\odot$ denotes an element-wise multiplication. Let $\Omega' = \Omega\times T, \vx' = (\vx,t)$. We define the weak form to be:
\begin{equation}
    \int_{\Omega'} \frac{\partial u}{\partial t} \cdot v \diff{\vx'} + \int_{\Omega'} \left(\left(\frac{1}{(500 \pi)^2}, \frac{1}{(\pi)^2}\right) \odot \nabla u \right)\cdot \nabla v \diff{\vx'} = 0,
\end{equation}
where $v$ is the test function. We employ the FEniCS to discretize the problem with a mesh of size $500 \times 20\times 20$. Besides, we utilize a sparse matrix implementation since the matrix size exceeds the memory constraint. The drop tolerance of the ILU is $10^{-1}$. Finally, in this problem, we employ an MLP of $5$ layers with $128$ neurons in each layer as our neural model. The model is trained for $50000$ iterations.

\paragraph{Heat2d-CG.}
The equation is given by:
\begin{equation}
\begin{aligned}
    \frac{\partial u}{\partial t}-\Delta u &= 0 &&\text{in } \Omega\times T,\\
    \frac{\partial u}{\partial n} &= 5 - u &&\text{in } \partial\Omega_{\mathrm{large}}\times T,\\
    \frac{\partial u}{\partial n} &= 1 - u &&\text{in } \partial\Omega_{\mathrm{small}}\times T,\\
    \frac{\partial u}{\partial n} &= 0.1 - u &&\text{in } \partial\Omega_{\mathrm{outer}}\times T,
\end{aligned}
\end{equation}
define on $\Omega\times T$, where $T = [0, 3]$, $\Omega$ is a rectangular domain $[-8, 8]\times [-12, 12]$ with eleven large circular voids and six small circular voids, and $u=u(\vx,t)$ is the unknown. Here, $\partial\Omega_{\mathrm{large}}$ denotes the inner large circular boundary, $\partial\Omega_{\mathrm{small}}$ the inner small circular boundary, $\partial\Omega_{\mathrm{outer}}$ the outer rectangular boundary, and $\partial\Omega_{\mathrm{large}} \cup \partial\Omega_{\mathrm{small}} \cup \partial\Omega_{\mathrm{outer}} = \partial \Omega$. We let:
\begin{equation}
\begin{aligned}
    \Omega' &= \Omega\times T,\\
    \partial\Omega_{\mathrm{large}}' &= \partial\Omega_{\mathrm{large}}\times T,\\
    \partial\Omega_{\mathrm{small}}' &= \partial\Omega_{\mathrm{small}}\times T,\\
    \partial\Omega_{\mathrm{outer}}' &= \partial\Omega_{\mathrm{outer}}\times T,
\end{aligned}
\end{equation}
and $\vx' = (\vx,t)$.
We define the weak form to be:
\begin{equation}
\begin{aligned}
    \int_{\Omega'} \frac{\partial u}{\partial t} \cdot v \diff{\vx'} + \int_{\Omega'} \nabla u \cdot \nabla v \diff{\vx'}
    - \int_{\partial\Omega_{\mathrm{large}}'} (5-u) \cdot  v \diff{\vx'} &\\
    - \int_{\partial\Omega_{\mathrm{small}}'} (1-u) \cdot  v \diff{\vx'} - \int_{\partial\Omega_{\mathrm{outer}}'} (0.1-u) \cdot  v \diff{\vx'} &= 0,
\end{aligned}
\end{equation}
where $v$ is the test function. We employ the FEniCS to discretize the problem with an external mesh including $255946$ nodes generated by the Gmsh. Besides, we utilize a sparse matrix implementation since the matrix size exceeds the memory constraint. The drop tolerance of the ILU is $10^{-1}$.

\paragraph{Heat2d-LT.}
The equation is given by:
\begin{equation}
    \frac{\partial u}{\partial t} = 0.001\Delta u + 5\sin{(u^2)}f,
\end{equation}
define on $\Omega\times T =[0, 1]^2 \times [0, 100]$, where $u=u(\vx,t)$ is the unknown and $f=f(\vx, t)$ is given. We solve this problem by an (implicit) time-stepping scheme (see Algorithm~\ref{alg:tm:2}). The number of sub-time intervals is $2000$, with each interval having $1$ step. We define the weak form to be:
\begin{equation}
    \int_{\Omega} u_1 \cdot v \diff{\vx} + 0.001\delta t\int_{\Omega} \nabla u_1 \cdot \nabla v \diff{\vx} - \delta t \int_{\Omega} \left( 5\sin{(u_1^2)}f \right) \cdot v \diff{\vx'} = \int_{\Omega} u_0 \cdot v \diff{\vx},
\end{equation}
where $u_0 = u_0(\vx)$ is the solution at the previous time step, $u_1 = u_1(\vx)$ is the solution at current time step, $v = v(\vx)$ is the test function, and $\delta t = 1/2000$ is the time step length. We employ the FEniCS to discretize the problem with a mesh of size $20 \times 20$. It is noted that we do not employ a Newton method to solve the discretized nonlinear equations since the time overhead is too high. Instead, we only precondition the linear portion (see Appendix~\ref{app:algo:tmnl}) and let the neural model find the correct solution by gradient descent. Given that the matrix size remains within the memory constraints, we utilize a dense matrix implementation for faster matrix computations. The drop tolerance of the ILU is $10^{-4}$. We train the model for $1000$ iterations in each sub-time interval while $100000$ iterations in the first interval (i.e., cold-start training). Finally, in this problem, we employ an MLP of $5$ layers with $128$ neurons in each layer as our neural model.

\paragraph{NS2d-C.}
The equation is given by:
\begin{equation}
\begin{aligned}
     \bm{u} \cdot \nabla \boldsymbol{u}+\nabla p-\frac{1}{R e} \Delta \boldsymbol{u}&=0,\\
\nabla \cdot \boldsymbol{u}&=0,
\end{aligned}
\end{equation}
defined on $\Omega =[0, 1]^2$, where $\vu=(u_1(\vx), u_2(\vx))$ and $p$ are the unknown velocity and pressure, respectively, and $Re$ is the Reynolds number. The weak form is expressed as:
\begin{equation}
    \frac{1}{R e} \int_{\Omega} \nabla \vu \cdot \nabla \vv \diff{\vx} + \int_{\Omega} (\vu \cdot \nabla \vu ) \cdot \vv \diff{\vx} - \int_{\Omega} p \nabla \vv \diff{\vx} - \int_{\Omega} q \nabla \vu \diff{\vx} = 0,
\end{equation}
where $\vv = \vv(\vx)$ and $q = q(\vx)$ are, respectively, the test functions corresponding to $\vu$ and $p$. We employ the FEniCS to discretize the problem with a mesh of size $50 \times 50$. Given that the matrix size remains within the memory constraints, we utilize a dense matrix implementation for faster matrix computations. The drop tolerance of the ILU is $10^{-4}$. We solve the problem with $20$-step Newton iterations (see Algorithm~\ref{alg:nonlinear}) and train the neural model for $1000$ iterations in each Newton step.

\paragraph{NS2d-CG.}
The equation is given by:
\begin{equation}
\begin{aligned}
     \bm{u} \cdot \nabla \boldsymbol{u}+\nabla p-\frac{1}{R e} \Delta \boldsymbol{u}&=0,\\
\nabla \cdot \boldsymbol{u}&=0,
\end{aligned}
\end{equation}
defined on $\Omega =[0, 4]\times [0,2] \setminus ([0,2] \times [1,2])$, where $\vu=(u_1(\vx), u_2(\vx))$ and $p$ are the unknown velocity and pressure, respectively, and $Re$ is the Reynolds number. The weak form is expressed as:
\begin{equation}
    \frac{1}{R e} \int_{\Omega} \nabla \vu \cdot \nabla \vv \diff{\vx} + \int_{\Omega} (\vu \cdot \nabla \vu ) \cdot \vv \diff{\vx} - \int_{\Omega} p \nabla \vv \diff{\vx} - \int_{\Omega} q \nabla \vu \diff{\vx} = 0,
\end{equation}
where $\vv = \vv(\vx)$ and $q = q(\vx)$ are, respectively, the test functions corresponding to $\vu$ and $p$. We employ the FEniCS to discretize the problem with an external mesh including $2907$ nodes generated by the Gmsh. Given that the matrix size remains within the memory constraints, we utilize a dense matrix implementation for faster matrix computations. The drop tolerance of the ILU is $10^{-4}$. We solve the problem with $20$-step Newton iterations (see Algorithm~\ref{alg:nonlinear}) and train the neural model for $1000$ iterations in each Newton step.

\paragraph{NS2d-LT.}
The equation is given by:
\begin{equation}
\begin{aligned}
    \frac{\partial\bm{u}}{\partial t} + \bm{u} \cdot \nabla \boldsymbol{u}+\nabla p-\frac{1}{R e} \Delta \boldsymbol{u}&=f,\\
\nabla \cdot \boldsymbol{u}&=0, 
\end{aligned}
\end{equation}
defined on $\Omega\times T =([0, 2]\times[0,1]) \times [0, 5]$, where $\vu=(u_1(\vx), u_2(\vx))$ and $p$ are the unknown velocity and pressure, respectively, $Re$ is the Reynolds number, and $f = f(\vx, t)$ is predefined. We solve this problem by an (implicit) time-stepping scheme (see Algorithm~\ref{alg:tm:2}). The number of sub-time intervals is $50$, with each interval having $1$ step. The weak form is expressed as:
\begin{equation}
\begin{aligned}
     \int_{\Omega} \vu_1 \cdot \vv \diff{\vx}  + \delta t \frac{1}{R e} \int_{\Omega} \nabla \vu_1 \cdot \nabla \vv \diff{\vx} + \delta t \int_{\Omega} (\vu_1 \cdot \nabla \vu_1 ) \cdot \vv \diff{\vx} &\\
     - \delta t \int_{\Omega} p_1 \nabla \vv \diff{\vx} - \delta t \int_{\Omega} q \nabla \vu_1 \diff{\vx} &= \int_{\Omega} \vu_0 \cdot \vv \diff{\vx},
\end{aligned}
\end{equation}
where $\vu_0 = \vu_0(\vx)$ is the velocity at the previous time step, $\vu_1 = \vu_1(\vx)$ and $p_1 = p_1(\vx)$ are the velocity and pressure at current time step, $\vv = \vv(\vx), q=q(\vx)$ are the test functions corresponding to velocity and pressure, and $\delta t = 1/50$ is the time step length. We employ the FEniCS to discretize the problem with a mesh of size $60 \times 30$. It is noted that we do not employ a Newton method to solve the discretized nonlinear equations since the time overhead is too high. Instead, we only precondition the linear portion (see Appendix~\ref{app:algo:tmnl}) and let the neural model find the correct solution by gradient descent. Given that the matrix size remains within the memory constraints, we utilize a dense matrix implementation for faster matrix computations. The drop tolerance of the ILU is $10^{-4}$. We train the model for $1000$ iterations in each sub-time interval while $100000$ iterations in the first interval (i.e., cold-start training).

\paragraph{Wave1d-C.}
The equation is given by:
\begin{equation}
    \frac{\partial^2 u}{\partial t^2}-4\frac{\partial^2 u}{\partial x^2}= 0,
\end{equation}
defined on $\Omega\times T =[0,1] \times [0, 1]$, where $u=u(x, t)$ is the unknown. Let $\Omega' = \Omega\times T, x' = (x,t)$. The weak form is expressed as:
\begin{equation}
    -\int_{\Omega'} \frac{\partial u}{\partial t} \cdot \frac{\partial v}{\partial t} \diff{x'} + 4\int_{\Omega'} \frac{\partial u}{\partial x} \cdot \frac{\partial v}{\partial x} \diff{x'}= 0,
\end{equation}
where $v$ is the test function. We employ the FEniCS to discretize the problem with a mesh of size $100 \times 100$. Given that the matrix size remains within the memory constraints, we utilize a dense matrix implementation for faster matrix computations. The drop tolerance of the ILU is $10^{-3}$.

\paragraph{Wave2d-CG.}
The equation is given by:
\begin{equation}
    \frac{1}{c}\frac{\partial^2 u}{\partial t^2} - \Delta u = 0,
\end{equation}
define on $\Omega\times T =[-1, 1]^2 \times [0, 5]$, where $u=u(\vx,t)$ is the unknown and $c=c(\vx)$ is a parameter function with high frequencies, generated by the Gaussian random field. We solve this problem by an (implicit) time-stepping scheme (see Algorithm~\ref{alg:tm:2}). The number of sub-time intervals is $50$, with each interval having $5$ steps. We define the weak form to be:
\begin{equation}
    \int_{\Omega} u_1 \cdot v \diff{\vx} + \delta t^2\int_{\Omega} c\left(\nabla u_1 \cdot \nabla v \right) \diff{\vx} = \int_{\Omega} (2u_0 - u_{-1}) \cdot v \diff{\vx},
\end{equation}
where $u_{-1} = u_{-1}(\vx)$ is the solution at the time step before the previous time step, $u_0 = u_0(\vx)$ is the solution at the previous time step, $u_1 = u_1(\vx)$ is the solution at current time step, $v = v(\vx)$ is the test function, and $\delta t = 1/250$ is the time step length. We employ the FEniCS to discretize the problem with a mesh of size $40 \times 40$. Given that the matrix size remains within the memory constraints, we utilize a dense matrix implementation for faster matrix computations. The drop tolerance of the ILU is $10^{-4}$. We train the model for $1000$ iterations in each sub-time interval while $500000$ iterations in the first interval (i.e., cold-start training). 

\paragraph{Wave2d-MS.}
The equation is given by:
\begin{equation}
    \frac{\partial^2 u}{\partial t^2}+\nabla \cdot \left(\left(1, a^2\right) \odot \nabla u \right)= 0,
\end{equation}
defined on $\Omega\times T =[0,1]^2 \times [0, 100]$, where $u=u(\vx, t)$ is the unknown and $a$ is a given parameter. Let $\Omega' = \Omega\times T, \vx' = (\vx,t)$. The weak form is expressed as:
\begin{equation}
    \int_{\Omega'} \frac{\partial u}{\partial t} \cdot \frac{\partial v}{\partial t} \diff{\vx'} + \int_{\Omega'} \left(\left(1, a^2 \right) \odot \nabla u \right)\cdot \nabla v \diff{\vx'} = 0,
\end{equation}
where $v$ is the test function. We employ the FEniCS to discretize the problem with a mesh of size $10\times 10\times 1000$. Besides, we utilize a sparse matrix implementation since the matrix size exceeds the memory constraint. The drop tolerance of the ILU is $10^{-1}$. Finally, in this problem, we employ a Fourier MLP of $5$ layers with $128$ neurons in each layer as our neural model, where the Fourier features have a dimension of 128 and are sampled in $\mathcal{N}(0, \pi)$.

\paragraph{GS.}
The equation is given by:
\begin{equation}
\begin{aligned}
\frac{\partial u_1}{\partial t} & =  \varepsilon_1 \Delta u_1 + b (1 - u_1) - u_1 u_2^2, \\
\frac{\partial u_2}{\partial t} & =  \varepsilon_2 \Delta u_2 - d u_2 + u_1 u_2^2,
\end{aligned}
\end{equation}
defined on $\Omega\times T =[-1, 1]^2 \times [0, 200]$, where $\vu=(u_1(\vx,t), u_2(\vx,t))$ is the unknown and $b,d,\epsilon_1, \epsilon_2$ are given. We solve this problem by an (implicit) time-stepping scheme (see Algorithm~\ref{alg:tm:2}). The number of sub-time intervals is $200$, with each interval having $1$ step. The weak form is expressed as:
\begin{equation}
\begin{aligned}
     \int_{\Omega} \vu_1 \cdot \vv \diff{\vx}  + \delta t \int_{\Omega} \left( \epsilon_1 \nabla u_{1,1} \cdot \nabla v_1 + \epsilon_2 \nabla u_{1,2} \cdot \nabla v_2 \right) \diff{\vx}& \\
     + \delta t \int_{\Omega} \left( (u_{1,1}  u_{1,2}^2) \cdot v_1 -( u_{1,1} u_{1,2}^2)\cdot v_2)\right) \diff{\vx} & \\
     + \delta t \int_{\Omega} \left( -b (1 - u_{1,1}) \cdot v_1 + d u_{1,2} \cdot v_2)\right) \diff{\vx}
      &= \int_{\Omega} \vu_0 \cdot \vv \diff{\vx},
\end{aligned}
\end{equation}
where $\vu_0 = \vu_0(\vx)$ is the solution at the previous time step, $\vu_1 = \vu_1(\vx) = (u_{1,1}(\vx), u_{1,2}(\vx))$ is the solution at current time step, $\vv = \vv(\vx)$ is the test function, and $\delta t = 1/200$ is the time step length. We employ the FEniCS to discretize the problem with a mesh of size $128\times 128$. It is noted that we do not employ a Newton method to solve the discretized nonlinear equations since the time overhead is too high. Instead, we only precondition the linear portion (see Appendix~\ref{app:algo:tmnl}) and let the neural model find the correct solution by gradient descent. Besides, we utilize a sparse matrix implementation since the matrix size exceeds the memory constraint. The drop tolerance of the ILU is $10^{-1}$. We train the model for $1000$ iterations in each sub-time interval while $20000$ iterations in the first interval (i.e., cold-start training). Finally, in this problem, we employ an MLP of $5$ layers with $128$ neurons in each layer as our neural model.

\paragraph{KS.}
The equation is given by:
\begin{equation}
    \frac{\partial u}{\partial t} + \alpha u \frac{\partial u}{\partial x} + \beta \frac{\partial^2 u}{\partial x^2} + \gamma \frac{\partial^4 u}{\partial x^4} = 0,
\end{equation}
define on $\Omega\times T =[0, 2\pi] \times [0, 1]$, where $u=u(x,t)$ is the unknown and $\alpha,\beta,\gamma$ are multi-scale co-efficients. We solve this problem by an (implicit) time-stepping scheme (see Algorithm~\ref{alg:tm:2}). The number of sub-time intervals is $1$, with each interval having $250$ steps. We define the weak form to be:
\begin{equation}
    \int_{\Omega} u_1 v \diff{x} + \alpha \delta t\int_{\Omega} u_1\frac{\partial u_1}{\partial x} v \diff{x} - \beta \delta t\int_{\Omega} \frac{\partial u_1}{\partial x} \frac{\partial v}{\partial x} \diff{x} - \gamma \delta t\int_{\Omega} \frac{\partial^3 u_1}{\partial x^3} \frac{\partial v}{\partial x} \diff{x} = \int_{\Omega} u_0 v \diff{x},
\end{equation}
where $u_0 = u_0(\vx)$ is the solution at the previous time step, $u_1 = u_1(\vx)$ is the solution at current time step, $v = v(\vx)$ is the test function, and $\delta t = 1/250$ is the time step length. We employ the FEniCS to discretize the problem with a mesh of size $500$. It is noted that we do not employ a Newton method to solve the discretized nonlinear equations since the time overhead is too high. Instead, we only precondition the linear portion (see Appendix~\ref{app:algo:tmnl}) and let the neural model find the correct solution by gradient descent. Given that the matrix size remains within the memory constraints, we utilize a dense matrix implementation for faster matrix computations. The drop tolerance of the ILU is $10^{-4}$. We train the model for $15000$ iterations in each sub-time interval. Finally, in this problem, we employ an MLP of $5$ layers with $128$ neurons in each layer as our neural model.

\paragraph{Poisson Inverse Problem (PInv).}
The equation is given by:
\begin{equation}
    -\nabla( a \nabla u) = f,
\end{equation}
define on $\Omega=[0, 1]^2$, where $u=u(\vx)$ is the unknown solution, $a=a(\vx)$ denotes the unknown parameter function, and $f=f(\vx)$ is predefined. Given $2500$ uniformly distributed samples $\{ u(\vx^{(i)}) \}$ with Gaussian noise of $\mathcal{N}(0, 0.1)$, our target is to reconstruct the unknown solution $u$ and infer the unknown parameter function $a$. We define the weak form to be:
\begin{equation}
    \int_{\Omega} a(\nabla u \cdot \nabla v) \diff{\vx} = \int_{\Omega} f \cdot v \diff{\vx},
\end{equation}
where $v$ is the test function.  We employ the FEniCS to discretize the problem with a mesh of size $100\times 100$. Besides, we utilize a sparse matrix implementation. For fast speed, we employ the Jacobi preconditioner since the preconditioner needs updating every iteration. Finally, in this problem, we employ an MLP of $3$ layers with $64$ neurons in each layer for $u$ and an MLP of $5$ layers with $128$ neurons in each layer for $a$. The models are trained for $11000$ iterations, where $10000$ iterations are warm-up iterations. In warm-up iterations, only data loss is involved while physics loss is included in the rest of iterations. 

\paragraph{Heat Inverse Problem (HInv).}
The equation is given by:
\begin{equation}
    \frac{\partial u}{\partial t}-\nabla( a \nabla u) = f,
\end{equation}
define on $\Omega\times T =[-1, 1]^2 \times [0, 1]$, where $u=u(\vx, t)$ is the unknown solution, $a=a(\vx)$ denotes the unknown parameter function, and $f=f(\vx, t)$ is predefined. Given $2500$ uniformly distributed samples $\{ u(\vx^{(i)}, t^{(i)}) \}$ with Gaussian noise of $\mathcal{N}(0, 0.1)$, our target is to reconstruct the unknown solution $u$ and infer the unknown parameter function $a$. Let $\Omega' = \Omega\times T, \vx' = (\vx,t)$. We define the weak form to be:
\begin{equation}
    \int_{\Omega'} \frac{\partial u}{\partial t} \cdot v \diff{\vx'} + \int_{\Omega'} a(\nabla u \cdot \nabla v) \diff{\vx'} = \int_{\Omega'} f \cdot v \diff{\vx'},
\end{equation}
where $v$ is the test function.  We employ the FEniCS to discretize the problem with a mesh of size $40\times 40 \times 10$. Besides, we utilize a sparse matrix implementation. For fast speed, we employ the Jacobi preconditioner since the preconditioner needs updating every iteration. Finally, in this problem, we employ an MLP of $3$ layers with $64$ neurons in each layer for $u$ and an MLP of $3$ layers with $64$ neurons in each layer for $a$. The models are trained for $5000$ iterations, where $4000$ iterations are warm-up iterations. In warm-up iterations, only data loss is involved while physics loss is included in the rest of iterations.

\subsection{Experimental Results of Varying Preconditioner Precision}
\label{app:exp:forward:abla}
We provide the comprehensive results of the four Poisson problems in this subsection. Table~\ref{tab:abla} presents the convergence results of L2RE as well as some metrics to measure the precision of the preconditioner for different cases. For example, ``$\mP^{-1}f$ Error'' measures the L2RE between the $\mP^{-1}f$ and the $\mA^{-1}f$. Besides, Figure~\ref{fig:exp_relationship_2} shows the convergence history of different cases. We can find that although preconditioning (ILU) cannot ensure that the condition number decreases, it can often promote convergence.

\begin{table}[t]
\centering
\renewcommand{\arraystretch}{1.2}
\caption{Comprehensive results of varying preconditioner precisions.}
\vspace{0.1in}
\resizebox{0.8\linewidth}{!}{
\begin{scriptsize}
\begin{tabular}{ll|llll|l}
\hline
\multicolumn{2}{c|}{Poisson}      & \multicolumn{4}{c|}{Drop Tolerance}         & \multirow{2}{*}{No Preconditioner} \\ \cline{3-6}
\multicolumn{2}{c|}{}             & \textbf{1.00e-4} & \textbf{1.00e-3} & \textbf{1.00e-2} & \textbf{1.00e-1} &                                  \\ \hline
\multirow{3}{*}{2d-C}  & L2RE     & 1.70e-3 & 2.74e-3 & 4.07e-3 & 2.18e-3 & 3.54e-2                          \\
                       & Cond     & 1.10e+0 & 2.82e+0 & 1.52e+1 & 6.03e+1 & 1.13e+2                          \\
                       & $\mP^{-1}f$ Error & 2.04e-2 & 2.08e-1 & 5.51e-1 & 7.67e-1 &   --                              \\ \hline
\multirow{3}{*}{2d-CG} & L2RE     & 5.38e-3 & 7.87e-3 & 4.27e-3 & 4.36e-3 & 3.86e-3                          \\
                       & Cond     & 1.01e+0 & 1.19e+0 & 2.55e+0 & 7.22e+0 & 1.27e+1                          \\
                       & $\mP^{-1}f$ Error & 2.84e-3 & 4.05e-2 & 3.50e-1 & 7.00e-1 &  --                                \\ \hline
\multirow{3}{*}{3d-CG} & L2RE     & 4.18e-2 & 4.11e-2 & 4.11e-2 & 4.23e-2 & 4.19e-2                          \\
                       & Cond     & 6.77e+0 & 1.17e+0 & 1.38e+0 & 1.77e+0 & 2.20e+0                          \\
                       & $\mP^{-1}f$ Error & 4.63e-1 & 2.05e-1 & 5.84e-1 & 8.73e-1 & --                                 \\ \hline
\multirow{3}{*}{2d-MS} & L2RE     & 6.48e-2 & 6.38e-2 & 6.37e-1 & 7.06e-1 & 8.55e-1                          \\
                       & Cond     & 3.23e+0 & 3.25e+1 & 2.47e+2 & 3.42e+2 & 3.39e+0                          \\
                       & $\mP^{-1}f$ Error & 3.74e-1 & 6.42e-1 & 8.13e-1 & 9.58e-1 &  --                                \\ \hline
\end{tabular}
\end{scriptsize}
}
\label{tab:abla}
\end{table}

\begin{figure}[bt]
     \centering
     \begin{subfigure}[b]{0.32\textwidth}
         \centering
         \includegraphics[height=0.7\textwidth]{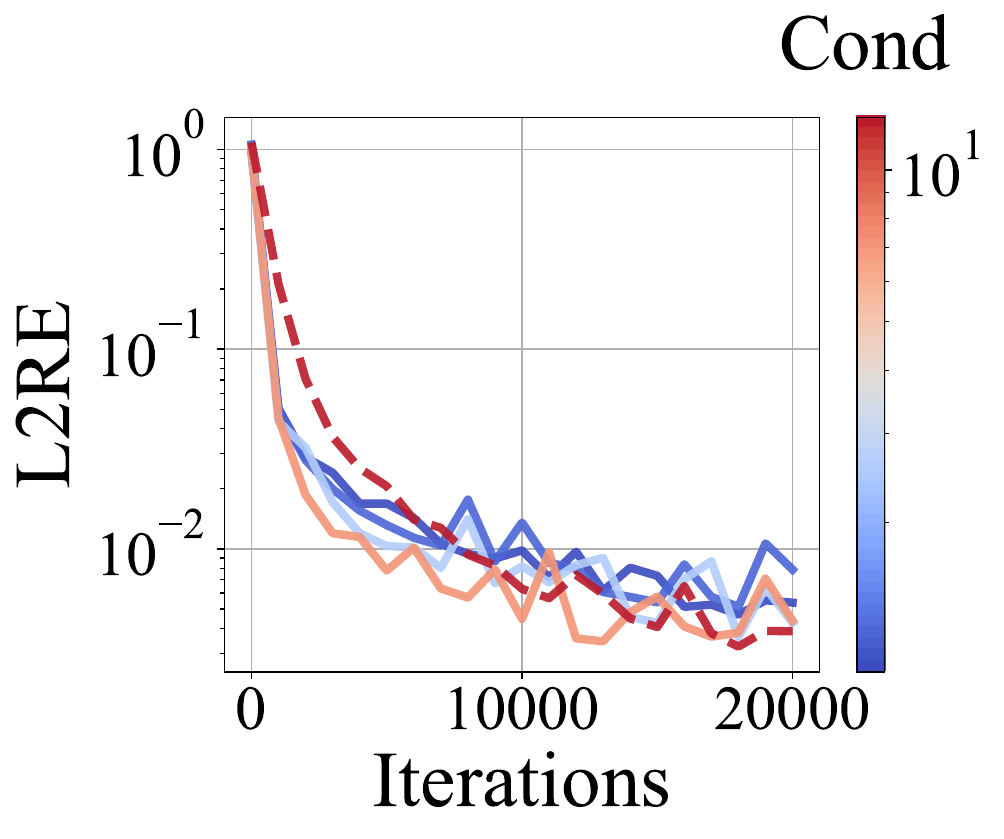}
         \caption{Poisson2d-CG}
     \end{subfigure}
     \begin{subfigure}[b]{0.32\textwidth}
         \centering
         \includegraphics[height=0.7\textwidth]{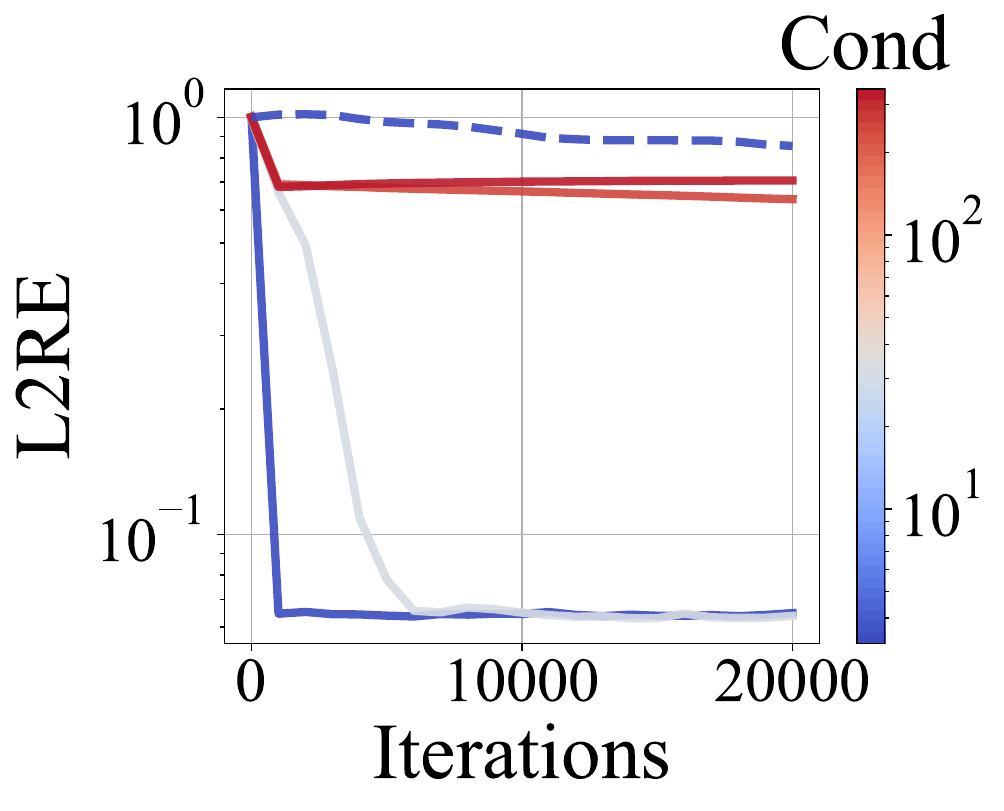}
         \caption{Poisson2d-MS}
     \end{subfigure}
     \begin{subfigure}[b]{0.32\textwidth}
         \centering
         \includegraphics[height=0.7\textwidth]{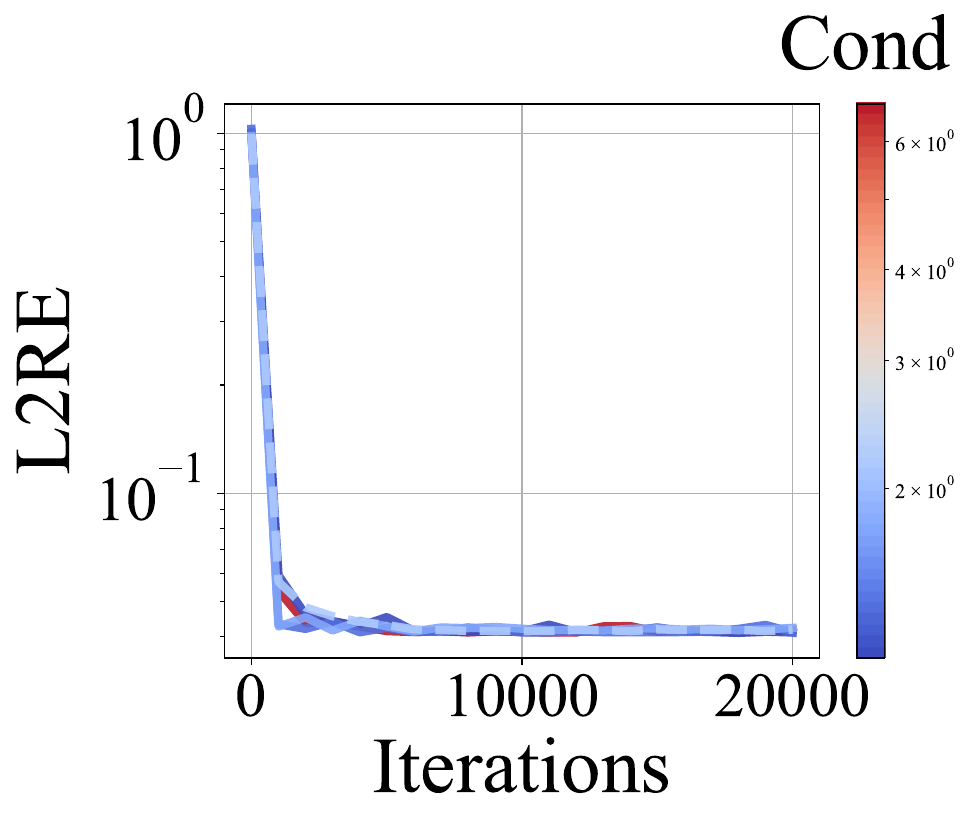}
         \caption{Poisson3d-CG}
     \end{subfigure}
\caption{The training L2 relative error (L2RE) in ablation study. The dashed line marks the trajectory corresponding to the one without the preconditioner.}
\label{fig:exp_relationship_2}
\end{figure}

\subsection{Ablation Study}
\label{app:exp:real:abla}
We perform extensive ablation studies for the forward benchmark problems.

\paragraph{More Random Trials.}
In Table~\ref{tab:4}, we have re-evaluated all experiments of the forward problems using 10 random trials. To succinctly demonstrate the consistency and reliability of our findings, we compared the outcomes of the 5-trial (our choice for main results) and 10-trial experiments. Our findings show that the results from the 10-trial evaluations align closely with those from the original 5-trial tests, indicating that our initial conclusions are consistent and reliable. Moreover, the comparison with the state-of-the-art (SOTA) baseline methods remains unchanged, affirming the robustness of our approach.

\paragraph{Different Preconditioning Methods.}
In Table~\ref{tab:5}, we have tested other matrix preconditioning methods on two selected problems, Poisson2d-MS and Wave2d-MS over three random trials. The results indicate that the ILU preconditioning method, which we employ in our approach, demonstrates greater stability and effectiveness in comparison to the Row Balancing and Diagonal methods. This evidence supports our choice of ILU as a superior option for the problems we address. 

\paragraph{Initialization Methods and Network Hyperparameters.}
In Table~\ref{tab:6}, \ref{tab:7}, \ref{tab:8}, \ref{tab:9}, and \ref{tab:10},  we have conducted additional studies on the impact of various initialization schemes and hyperparameters. These additional analyses strengthen our confidence in the robustness and reliability of our proposed method. The sensitivity to initialization schemes and hyperparameters is minimal, indicating that our approach is adaptable and stable across different settings. This aspect is critical for the practical application of our method in diverse problem contexts.

\begin{table}[t]
\centering
\renewcommand{\arraystretch}{1.2}
\caption{Results for 10 random trials.}
\label{tab:4}
\vspace{0.1in}
\resizebox{0.8\linewidth}{!}{
\begin{scriptsize}
\begin{tabular}{l|lll}
\hline
L2RE (mean ± std) & 5 Random Samples  & 10 Random Samples & Best Baseline     \\ \hline
Burgers1d-C       & 1.42e-2 ± 1.62e-4 & 1.41e-2 ± 2.16e-4 & 1.43e-2 ± 1.44e-3 \\
Burgers2d-C       & 5.23e-1 ± 7.52e-2 & 4.90e-1 ± 2.94e-2 & 2.60e-1 ± 5.78e-3 \\
Poisson2d-C       & 3.98e-3 ± 3.70e-3 & 1.84e-3 ± 9.18e-4 & 1.23e-2 ± 7.37e-3 \\
Poisson2d-CG      & 5.07e-3 ± 1.93e-3 & 5.04e-3 ± 1.53e-3 & 1.43e-2 ± 4.31e-3 \\
Poisson3d-CG      & 4.16e-2 ± 7.53e-4 & 4.13e-2 ± 5.08e-4 & 1.02e-1 ± 3.16e-2 \\
Poisson2d-MS      & 6.40e-2 ± 1.12e-3 & 6.42e-2 ± 7.62e-4 & 5.90e-1 ± 4.06e-2 \\
Heat2d-VC         & 3.11e-2 ± 6.17e-3 & 2.61e-2 ± 3.74e-3 & 2.12e-1 ± 8.61e-4 \\
Heat2d-MS         & 2.84e-2 ± 1.30e-2 & 2.07e-2 ± 6.52e-3 & 4.40e-2 ± 4.81e-3 \\
Heat2d-CG         & 1.50e-2 ± 1.17e-4 & 1.55e-2 ± 5.37e-4 & 2.39e-2 ± 1.39e-3 \\
Heat2d-LT         & 2.11e-1 ± 1.00e-2 & 1.87e-1 ± 8.41e-3 & 9.99e-1 ± 1.05e-5 \\
NS2d-C            & 1.28e-2 ± 2.44e-3 & 1.21e-2 ± 2.53e-3 & 3.60e-2 ± 3.87e-3 \\
NS2d-CG           & 6.62e-2 ± 1.26e-3 & 6.36e-2 ± 2.21e-3 & 8.24e-2 ± 8.21e-3 \\
NS2d-LT           & 9.09e-1 ± 4.00e-4 & 9.09e-1 ± 9.00e-4 & 9.95e-1 ± 7.19e-4 \\
Wave1d-C          & 1.28e-2 ± 1.20e-4 & 1.28e-2 ± 1.55e-4 & 9.79e-2 ± 7.72e-3 \\
Wave2d-CG         & 5.85e-1 ± 9.05e-3 & 5.48e-1 ± 8.69e-3 & 7.94e-1 ± 9.33e-3 \\
Wave2d-MS         & 5.71e-2 ± 5.68e-3 & 6.07e-2 ± 8.20e-3 & 9.82e-1 ± 1.23e-3 \\
GS                & 1.44e-2 ± 2.53e-3 & 1.44e-2 ± 3.10e-3 & 7.99e-2 ± 1.69e-2 \\
KS                & 9.52e-1 ± 2.94e-3 & 9.52e-1 ± 3.03e-3 & 9.57e-1 ± 2.85e-3 \\
\hline
\end{tabular}
\end{scriptsize}
}
\end{table}

\begin{table}[t]
\centering
\renewcommand{\arraystretch}{1.2}
\caption{Different matrix preconditioning methods, 3 random trials.}
\label{tab:5}
\vspace{0.1in}
\resizebox{0.8\linewidth}{!}{
\begin{scriptsize}
\begin{tabular}{l|lll}
\hline
L2RE (mean ± std) & Row Balancing     & Diagonal          & ILU               \\ \hline
Poisson2d-MS      & 6.27e-1 ± 7.23e-2 & 6.27e-1 ± 7.23e-2 & 6.34e-2 ± 1.63e-4 \\
Wave2d-MS         & 6.12e-2 ± 8.16e-4 & 6.12e-2 ± 8.16e-4 & 5.76e-2 ± 1.06e-3 \\
\hline
\end{tabular}
\end{scriptsize}
}
\end{table}

\begin{table}[t]
\centering
\renewcommand{\arraystretch}{1.2}
\caption{Different initialization methods, 3 random trials.}
\vspace{0.1in}
\label{tab:6}
\resizebox{0.8\linewidth}{!}{
\begin{scriptsize}
\begin{tabular}{l|llll}
\hline
L2RE (mean ± std) & Glorot Uniform    & Glorot Normal     & He Normal         & He Uniform        \\ \hline
Poisson2d-MS      & 6.37e-2 ± 4.71e-5 & 6.38e-2 ± 1.63e-4 & 6.38e-2 ± 1.25e-4 & 6.39e-2 ± 1.25e-4 \\
NS2d-C            & 1.35e-2 ± 1.33e-3 & 1.36e-2 ± 2.73e-3 & 1.63e-2 ± 2.15e-3 & 1.78e-2 ± 5.90e-3 \\
Wave2d-MS         & 5.71e-2 ± 1.77e-3 & 6.03e-2 ± 3.04e-3 & 5.58e-2 ± 2.92e-3 & 5.43e-2 ± 5.11e-3 \\
\hline
\end{tabular}
\end{scriptsize}
}
\end{table}

\begin{table}[t]
\centering
\renewcommand{\arraystretch}{1.2}
\caption{Different learning rates (Adam optimizer: $\beta_1=0.9, \beta_2=0.999$), the problem is poisson2d-MS, 3 random trials.}
\label{tab:7}
\vspace{0.1in}
\resizebox{0.8\linewidth}{!}{
\begin{scriptsize}
\begin{tabular}{l|llll}
\hline
Metric (mean ± std) & $\eta=1\times 10^{-4}$ & $\eta=3\times 10^{-4}$ & $\eta=1\times 10^{-3}$ & $\eta=3\times 10^{-3}$ \\ \hline
MAE                 & 8.37e-2 ± 5.89e-4      & 8.40e-2 ± 8.52e-4      & 8.57e-2 ± 3.28e-3      & 8.56e-2 ± 4.66e-3      \\
MSE                 & 2.71e-2 ± 2.36e-4      & 2.72e-2 ± 2.05e-4      & 2.75e-2 ± 1.36e-3      & 2.75e-2 ± 1.11e-3      \\
L1RE                & 4.72e-2 ± 3.40e-4      & 4.74e-2 ± 4.97e-4      & 4.83e-2 ± 1.89e-3      & 4.83e-2 ± 2.65e-3      \\
L2RE                & 6.34e-2 ± 2.83e-4      & 6.36e-2 ± 2.49e-4      & 6.39e-2 ± 1.53e-3      & 6.39e-2 ± 1.28e-3    \\
\hline
\end{tabular}
\end{scriptsize}
}
\end{table}

\begin{table}[t]
\centering
\renewcommand{\arraystretch}{1.2}
\caption{Different Adam betas (Adam optimizer, $\eta=1\times 10^{-3}$), the problem is Poisson2d-MS, 3 random trials.}
\label{tab:8}
\vspace{0.1in}
\resizebox{1\linewidth}{!}{
\begin{scriptsize}
\begin{tabular}{l|lllll}
\hline
Metric (mean ± std) & (0.9,0.9)         & (0.9,0.99)        & (0.9,0.999)       & (0.99,0.99)       & (0.99,0.999)      \\ \hline
MAE                 & 8.45e-2 ± 8.18e-4 & 8.49e-2 ± 1.25e-3 & 8.57e-2 ± 3.28e-3 & 8.34e-2 ± 2.87e-4 & 8.39e-2 ± 3.86e-4 \\
MSE                 & 2.74e-2 ± 4.64e-4 & 2.76e-2 ± 5.25e-4 & 2.75e-2 ± 1.36e-3 & 2.75e-2 ± 8.16e-5 & 2.77e-2 ± 9.43e-5 \\
L1RE                & 4.76e-2 ± 4.50e-4 & 4.79e-2 ± 7.26e-4 & 4.83e-2 ± 1.89e-3 & 4.71e-2 ± 1.63e-4 & 4.73e-2 ± 2.16e-4 \\
L2RE                & 6.37e-2 ± 5.56e-4 & 6.39e-2 ± 6.18e-4 & 6.39e-2 ± 1.53e-3 & 6.39e-2 ± 1.25e-4 & 6.41e-2 ± 9.43e-5 \\
\hline
\end{tabular}
\end{scriptsize}
}
\end{table}

\begin{table}[t]
\centering
\renewcommand{\arraystretch}{1.2}
\caption{Different number of hidden neural neurons in each layer (the number of hidden layers is 5), the problem is Poisson2d-MS, 3 random trails.}
\label{tab:9}
\vspace{0.1in}
\resizebox{1\linewidth}{!}{
\begin{scriptsize}
\begin{tabular}{l|lllll}
\hline
Metric (mean ± std) & 32                & 64                & 128               & 256               & 512               \\ \hline
MAE                 & 8.42e-2 ± 3.77e-4 & 8.38e-2 ± 2.36e-4 & 8.60e-2 ± 3.07e-3 & 8.84e-2 ± 2.05e-3 & 8.49e-2 ± 8.01e-4 \\
MSE                 & 2.72e-2 ± 1.89e-4 & 2.73e-2 ± 2.94e-4 & 2.80e-2 ± 1.01e-3 & 2.90e-2 ± 8.38e-4 & 2.75e-2 ± 1.89e-4 \\
L1RE                & 4.75e-2 ± 2.16e-4 & 4.73e-2 ± 1.41e-4 & 4.85e-2 ± 1.75e-3 & 4.99e-2 ± 1.13e-3 & 4.79e-2 ± 4.50e-4 \\
L2RE                & 6.36e-2 ± 2.36e-4 & 6.36e-2 ± 3.30e-4 & 6.44e-2 ± 1.16e-3 & 6.56e-2 ± 9.63e-4 & 6.38e-2 ± 2.36e-4 \\
\hline
\end{tabular}
\end{scriptsize}
}
\end{table}

\begin{table}[t]
\centering
\renewcommand{\arraystretch}{1.2}
\caption{Different number of hidden layers (the number of hidden neural neurons in each layer is 128), the problem is Poisson2d-MS, 3 random trails.}
\label{tab:10}
\vspace{0.1in}
\resizebox{1\linewidth}{!}{
\begin{scriptsize}
\begin{tabular}{l|lllll}
\hline
Metric (mean ± std) & 3                 & 4                 & 5                 & 6                 & 7                 \\ \hline
MAE                 & 8.39e-2 ± 6.55e-4 & 8.37e-2 ± 8.29e-4 & 8.84e-2 ± 2.05e-3 & 8.21e-2 ± 4.64e-4 & 8.43e-2 ± 4.50e-4 \\
MSE                 & 2.72e-2 ± 1.41e-4 & 2.70e-2 ± 2.87e-4 & 2.90e-2 ± 8.38e-4 & 2.56e-2 ± 2.36e-4 & 2.73e-2 ± 4.71e-5 \\
L1RE                & 4.74e-2 ± 3.68e-4 & 4.72e-2 ± 4.64e-4 & 4.99e-2 ± 1.13e-3 & 4.63e-2 ± 2.49e-4 & 4.75e-2 ± 2.49e-4 \\
L2RE                & 6.35e-2 ± 1.41e-4 & 6.33e-2 ± 2.87e-4 & 6.56e-2 ± 9.63e-4 & 6.17e-2 ± 3.30e-4 & 6.36e-2 ± 9.43e-5 \\
\hline
\end{tabular}
\end{scriptsize}
}
\end{table}

\subsection{Benchmark of Inverse Problems}\label{app:exp:inverse}
Here, we consider two inverse problems, the Poisson Inverse Problem (PInv) and Heat Inverse Problem (HInv), from the benchmark \cite{hao2022physics}. In such problems, our target is to reconstruct the unknown solution from $2500$ noisy samples and infer the unknown parameter function. We compare our method with the SOTA PINN baseline in \citet{hao2022physics} and the traditional adjoint method designed for PDE-constrained optimization. We report the results in Table~\ref{tb:inv}.

From the results, we can conclude that our method achieves state-of-the-art performance in both accuracy and running time. Although the adjoint method converges very fast, it fails to approach the correct solution. This is because the numerical method does not impose any continuous prior on the ansatz and can overfit the noise in the solution samples.

\begin{table}[t]
\centering
\caption{Comparison between our method, SOTA PINN baseline, and the adjoint method over 5 trials. The best results are in \textbf{bold}.}\label{tb:inv}
\vspace{0.1in}
\begin{tabular}{l|lll|lll}
\hline
\multirow{2}{*}{Problem} & \multicolumn{3}{c|}{L2RE (mean ± std)}                                                                  & \multicolumn{3}{c}{Average Running Time (s)}                          \\ \cline{2-7} 
                         & \multicolumn{1}{l|}{Ours}                  & \multicolumn{1}{l|}{SOTA}              & Adjoint           & \multicolumn{1}{l|}{Ours}    & \multicolumn{1}{l|}{SOTA}    & Adjoint \\ \hline
PInv                     & \multicolumn{1}{l|}{\textbf{1.80e-2 ± 9.30e-3}} & \multicolumn{1}{l|}{2.45e-2 ± 1.03e-2} & 7.82e+2 ± 0.00e+0 & \multicolumn{1}{l|}{1.87e+2} & \multicolumn{1}{l|}{4.90e+2} & 1.40e+0 \\ 
HInv                     & \multicolumn{1}{l|}{\textbf{9.04e-3 ± 2.34e-3}}     & \multicolumn{1}{l|}{5.09e-2 ± 4.34e-3} & 1.50e+3 ± 0.00e+0 & \multicolumn{1}{l|}{3.21e+2} & \multicolumn{1}{l|}{3.39e+3} & 1.07e+1 \\ \hline
\end{tabular}
\end{table}

\section{Supplementary Experimental Results}\label{app:exp:forward:res}

In Table~\ref{tab:12}, \ref{tab:13}, and \ref{tab:14}, we display the detailed experiment results in different metrics, including L2RE, L1RE, MSE, and the standard deviation of these metrics over 5 runs.

\clearpage
\eject \pdfpagewidth=45cm \pdfpageheight=22cm
\thispagestyle{empty}

\begin{table}[!ht]
\centering
\begin{minipage}{0.85\pdfpagewidth}
\tiny
\renewcommand{\arraystretch}{1.7}
\caption{Mean (std) of L2RE for main experiments.}
\label{tab:12}
  \resizebox{\linewidth}{!}{
\begin{tabular}{cc|cccccccccccc}
\hline
L2RE & Name & \multicolumn{1}{c|}{\multirow{2}{*}{Ours}} & \multicolumn{2}{c|}{Vanilla} & \multicolumn{3}{c|}{Loss Reweighting/Sampling} & \multicolumn{1}{c|}{Optimizer} & \multicolumn{2}{c|}{Loss functions} & \multicolumn{3}{c}{Architecture} \\ \cline{4-14} 
-- &  & \multicolumn{1}{c|}{} & PINN & \multicolumn{1}{c|}{PINN-w} & LRA & NTK & \multicolumn{1}{c|}{RAR} & \multicolumn{1}{c|}{MultiAdam} & gPINN & \multicolumn{1}{c|}{vPINN} & LAAF & GAAF & FBPINN \\ \hline
\multirow{2}{*}{Burgers} & 1d-C & \textbf{1.42E-2(1.62E-4)} & 1.45E-2(1.59E-3) & 2.63E-2(4.68E-3) & 2.61E-2(1.18E-2) & 1.84E-2(3.66E-3) & 3.32E-2(2.14E-2) & 4.85E-2(1.61E-2) & 2.16E-1(3.34E-2) & 3.47E-1(3.49E-2) & \textbf{1.43E-2(1.44E-3)} & 5.20E-2(2.08E-2) & 2.32E-1(9.14E-2) \\
 & 2d-C & 5.23E-1(7.52E-2) & 3.24E-1(7.54E-4) & 2.70E-1(3.93E-3) & \textbf{2.60E-1(5.78E-3)} & 2.75E-1(4.78E-3) & 3.45E-1(4.56E-5) & 3.33E-1(8.65E-3) & 3.27E-1(1.25E-4) & 6.38E-1(1.47E-2) & 2.77E-1(1.39E-2) & 2.95E-1(1.17E-2) & -- \\ \hline
\multirow{4}{*}{Poisson} & 2d-C & \textbf{3.98E-3(3.70E-3)} & 6.94E-1(8.78E-3) & 3.49E-2(6.91E-3) & 1.17E-1(1.26E-1) & 1.23E-2(7.37E-3) & 6.99E-1(7.46E-3) & 2.63E-2(6.57E-3) & 6.87E-1(1.87E-2) & 4.91E-1(1.55E-2) & 7.68E-1(4.70E-2) & 6.04E-1(7.52E-2) & 4.49E-2(7.91E-3) \\
 & 2d-CG & \textbf{5.07E-3(1.93E-3)} & 6.36E-1(2.57E-3) & 6.08E-2(4.88E-3) & 4.34E-2(7.95E-3) & 1.43E-2(4.31E-3) & 6.48E-1(7.87E-3) & 2.76E-1(1.03E-1) & 7.92E-1(4.56E-3) & 2.86E-1(2.00E-3) & 4.80E-1(1.43E-2) & 8.71E-1(2.67E-1) & 2.90E-2(3.92E-3) \\
 & 3d-CG & \textbf{4.16E-2(7.53E-4)} & 5.60E-1(2.84E-2) & 3.74E-1(3.23E-2)            & 1.02E-1(3.16E-2)                           & 9.47E-1(4.94E-4)                           & 5.76E-1(5.40E-2)         & 3.63E-1(7.81E-2)                           & 4.85E-1(5.70E-2) & 7.38E-1(6.47E-4)                           & 5.79E-1(2.65E-2)                           & 5.02E-1(7.47E-2)                           & 7.39E-1(7.24E-2)                           \\
 & 2d-MS & \textbf{6.40E-2(1.12E-3)} & 6.30E-1(1.07E-2) & 7.60E-1(6.96E-3) & 7.94E-1(6.51E-2) & 7.48E-1(9.94E-3) & 6.44E-1(2.13E-2) & 5.90E-1(4.06E-2) & 6.16E-1(1.74E-2) & 9.72E-1(2.23E-2) & 5.93E-1(1.18E-1) & 9.31E-1(7.12E-2) & 1.04E+0(6.13E-5) \\ \hline
Heat & 2d-VC & \textbf{3.11E-2(6.17E-3)} & 1.01E+0(6.34E-2) & 2.35E-1(1.70E-2) & 2.12E-1(8.61E-4) & 2.14E-1(5.82E-3) & 9.66E-1(1.86E-2) & 4.75E-1(8.44E-2) & 2.12E+0(5.51E-1) & 9.40E-1(1.73E-1) & 6.42E-1(6.32E-2) & 8.49E-1(1.06E-1) & 9.52E-1(2.29E-3) \\
 & 2d-MS & \textbf{2.84E-2(1.30E-2)} & 6.21E-2(1.38E-2) & 2.42E-1(2.67E-2) & 8.79E-2(2.56E-2) & 4.40E-2(4.81E-3) & 7.49E-2(1.05E-2) & 2.18E-1(9.26E-2) & 1.13E-1(3.08E-3) & 9.30E-1(2.06E-2) & 7.40E-2(1.92E-2) & 9.85E-1(1.04E-1) & 8.20E-2(4.87E-3) \\
 & 2d-CG & \textbf{1.50E-2(1.17E-4)}& 3.64E-2(8.82E-3) & 1.45E-1(4.77E-3) & 1.25E-1(4.30E-3) & 1.16E-1(1.21E-2) & 2.72E-2(3.22E-3) & 7.12E-2(1.30E-2) & 9.38E-2(1.45E-2) & 1.67E+0(3.62E-3) & 2.39E-2(1.39E-3) & 4.61E-1(2.63E-1) & 9.16E-2(3.29E-2) \\
 & 2d-LT & \textbf{2.11E-1(1.00E-2)} & 9.99E-1(1.05E-5) & 9.99E-1(8.01E-5) & 9.99E-1(7.37E-5) & 1.00E+0(2.82E-4) & 9.99E-1(1.56E-4) & 1.00E+0(3.85E-5) & 1.00E+0(9.82E-5) & 1.00E+0(0.00E+0) & 9.99E-1(4.49E-4) & 9.99E-1(2.20E-4) & 1.01E+0(1.23E-4) \\ \hline
NS & 2d-C & \textbf{1.28E-2(2.44E-3)} & 4.70E-2(1.12E-3) & 1.45E-1(1.21E-2) & NA & 1.98E-1(2.60E-2) & 4.69E-1(1.16E-2) & 7.27E-1(1.95E-1) & 7.70E-2(2.99E-3) & 2.92E-1(8.24E-2) & 3.60E-2(3.87E-3) & 3.79E-2(4.32E-3) & 8.45E-2(2.26E-2) \\
 & 2d-CG & \textbf{6.62E-2(1.26E-3)} & 1.19E-1(5.46E-3) & 3.26E-1(7.69E-3) & 3.32E-1(7.60E-3) & 2.93E-1(2.02E-2) & 3.34E-1(6.52E-4) & 4.31E-1(6.95E-2) & 1.54E-1(5.89E-3) & 9.94E-1(3.80E-3) & 8.24E-2(8.21E-3) & 1.74E-1(7.00E-2) & 8.27E+0(3.68E-5) \\
 & 2d-LT & \textbf{9.09E-1(4.00E-4)} & 9.96E-1(1.19E-3) & 1.00E+0(3.34E-4) & 1.00E+0(4.05E-4) & 9.99E-1(6.04E-4) & 1.00E+0(3.35E-4) & 1.00E+0(2.19E-4) & 9.95E-1(7.19E-4) & 1.73E+0(1.00E-5) & 9.98E-1(3.42E-3) & 9.99E-1(1.10E-3) & 1.00E+0(2.07E-3) \\ \hline
Wave & 1d-C & \textbf{1.28E-2(1.20E-4)}& 5.88E-1(9.63E-2) & 2.85E-1(8.97E-3) & 3.61E-1(1.95E-2) & 9.79E-2(7.72E-3) & 5.39E-1(1.77E-2) & 1.21E-1(1.76E-2) & 5.56E-1(1.67E-2) & 8.39E-1(5.94E-2) & 4.54E-1(1.08E-2) & 6.77E-1(1.05E-1) & 5.91E-1(4.74E-2) \\
 & 2d-CG & \textbf{5.85E-1(9.05E-3)} & 1.84E+0(3.40E-1) & 1.66E+0(7.39E-2) & 1.48E+0(1.03E-1) & 2.16E+0(1.01E-1) & 1.15E+0(1.06E-1) & 1.09E+0(1.24E-1) & 8.14E-1(1.18E-2) & 7.99E-1(4.31E-2) & 8.19E-1(2.67E-2) & 7.94E-1(9.33E-3) & 1.06E+0(7.54E-2) \\
 & 2d-MS & \textbf{5.71E-2(5.68E-3)} &  1.34E+0(2.34E-1) & 1.02E+0(1.16E-2)            & 1.02E+0(1.36E-2)                           & 1.04E+0(3.11E-2)                           & 1.35E+0(2.43E-1)         & 1.01E+0(5.64E-3)                           & 1.02E+0(4.00E-3) & 9.82E-1(1.23E-3)                           & 1.06E+0(1.71E-2)                           & 1.06E+0(5.35E-2)                           & 1.03E+0(6.68E-3)                           \\ \hline
Chaotic & GS & \textbf{1.44E-2(2.53E-3)}& 3.19E-1(3.18E-1) & 1.58E-1(9.10E-2) & 9.37E-2(4.42E-5) & 2.16E-1(7.73E-2) & 9.46E-2(9.46E-4) & 9.37E-2(1.21E-5) & 2.48E-1(1.10E-1) & 1.16E+0(1.43E-1) & 9.47E-2(7.07E-5) & 9.46E-2(1.15E-4) & 7.99E-2(1.69E-2) \\
 & KS & \textbf{9.52E-1(2.94E-3)} & 1.01E+0(1.28E-3) & 9.86E-1(2.24E-2) & 9.57E-1(2.85E-3) & 9.64E-1(4.94E-3) & 1.01E+0(8.63E-4) & 9.61E-1(4.77E-3) & 9.94E-1(3.83E-3) & 9.72E-1(5.80E-4) & 1.01E+0(2.12E-3) & 1.00E+0(1.24E-2) & 1.02E+0(2.31E-2) \\ \hline
\end{tabular}
}

\label{main-l2re}
\end{minipage}
\end{table}

\clearpage

\eject \pdfpagewidth=8.5in \pdfpageheight=11in

\clearpage
\eject \pdfpagewidth=45cm \pdfpageheight=22cm
\thispagestyle{empty}

\begin{table}[!ht]
\centering
\begin{minipage}{0.85\pdfpagewidth}
\tiny
\renewcommand{\arraystretch}{1.7}
\caption{Mean (std) of L1RE for main experiments.}
\label{tab:13}
  \resizebox{\linewidth}{!}{
\begin{tabular}{cc|cccccccccccc}
\hline
L1RE & Name & \multicolumn{1}{c|}{\multirow{2}{*}{Ours}} & \multicolumn{2}{c|}{Vanilla} & \multicolumn{3}{c|}{Loss Reweighting/Sampling} & \multicolumn{1}{c|}{Optimizer} & \multicolumn{2}{c|}{Loss functions} & \multicolumn{3}{c}{Architecture} \\ \cline{4-14} 
-- &  & \multicolumn{1}{c|}{} & PINN & \multicolumn{1}{c|}{PINN-w} & LRA & NTK & \multicolumn{1}{c|}{RAR} & \multicolumn{1}{c|}{MultiAdam} & gPINN & \multicolumn{1}{c|}{vPINN} & LAAF & GAAF & FBPINN \\ \hline
\multirow{2}{*}{Burgers} & 1d-C & \textbf{9.05E-3(1.45E-4)} & 9.55E-3(6.42E-4) & 1.88E-2(4.05E-3) & 1.35E-2(2.57E-3) & 1.30E-2(1.73E-3) & 1.35E-2(4.66E-3) & 2.64E-2(5.69E-3) & 1.42E-1(1.98E-2) & 4.02E-2(6.41E-3) & 1.40E-2(3.68E-3) & 1.95E-2(8.30E-3) & 3.75E-2(9.70E-3) \\
 & 2d-C & 4.14E-1(2.24E-2) & 2.96E-1(7.40E-4) & 2.43E-1(2.98E-3) & \textbf{2.31E-1(7.16E-3)} & 2.48E-1(5.33E-3) & 3.27E-1(3.73E-5) & 3.12E-1(1.15E-2) & 3.01E-1(3.55E-4) & 6.56E-1(3.01E-2) & 2.57E-1(2.06E-2) & 2.67E-1(1.22E-2) & -- \\ \hline
\multirow{4}{*}{Poisson} & 2d-C & \textbf{4.43E-3(4.69E-3)} & 7.40E-1(5.49E-3) & 3.08E-2(5.13E-3) & 7.82E-2(7.47E-2) & 1.30E-2(8.23E-3) & 7.48E-1(1.01E-2) & 2.47E-2(6.38E-3) & 7.35E-1(2.08E-2) & 4.60E-1(1.39E-2) & 7.67E-1(1.36E-2) & 6.57E-1(3.99E-2) & 5.01E-2(4.71E-3) \\
 & 2d-CG & \textbf{4.76E-3(1.92E-3)} & 5.45E-1(4.71E-3) & 4.54E-2(6.42E-3) & 2.63E-2(5.50E-3) & 1.33E-2(4.96E-3) & 5.60E-1(8.19E-3) & 2.46E-1(1.07E-1) & 7.31E-1(2.77E-3) & 2.45E-1(5.14E-3) & 4.04E-1(1.03E-2) & 7.09E-1(2.12E-1) & 3.21E-2(6.23E-3) \\
 & 3d-CG & \textbf{3.82E-2(1.26E-3)}& 4.51E-1(3.35E-2) & 3.33E-1(2.64E-2) & 7.76E-2(1.63E-2) & 9.93E-1(2.91E-4) & 4.61E-1(4.46E-2) & 3.55E-1(7.75E-2) & 4.57E-1(5.07E-2)) & 7.96E-1(3.57E-4) & 4.60E-1(1.13E-2) & 3.82E-1(4.89E-2) & 6.91E-1(7.52E-2)\\
 & 2d-MS & \textbf{4.84E-2(1.52E-3)} & 7.60E-1(1.06E-2) & 7.49E-1(1.12E-2) & 7.93E-1(7.62E-2) & 7.26E-1(1.46E-2) & 7.84E-1(2.42E-2) & 6.94E-1(5.61E-2) & 7.41E-1(2.01E-2) & 9.61E-1(5.67E-2) & 6.31E-1(5.42E-2) & 9.04E-1(1.01E-1) & 9.94E-1(9.67E-5) \\ \hline
Heat & 2d-VC & \textbf{2.81E-2(6.46E-3)} & 1.12E+0(5.79E-2) & 2.41E-1(1.73E-2) & 2.07E-1(1.04E-3) & 2.03E-1(1.12E-2) & 1.06E+0(5.13E-2) & 5.45E-1(1.07E-1) & 2.41E+0(5.27E-1) & 8.79E-1(2.57E-1) & 7.49E-1(8.54E-2) & 9.91E-1(1.37E-1) & 9.44E-1(1.75E-3) \\
 & 2d-MS & \textbf{3.22E-2(1.42E-2)} & 9.30E-2(2.27E-2) & 2.90E-1(2.43E-2) & 1.13E-1(3.57E-2) & 6.69E-2(8.24E-3) & 1.19E-1(2.16E-2) & 3.00E-1(1.14E-1) & 1.80E-1(1.12E-2) & 9.25E-1(3.90E-2) & 1.14E-1(4.98E-2) & 1.08E+0(2.02E-1) & 5.33E-2(3.92E-3) \\
 & 2d-CG & \textbf{8.42E-3(2.71E-4)} & 3.05E-2(8.47E-3) & 1.37E-1(7.70E-3) & 1.12E-1(2.57E-3) & 1.07E-1(1.44E-2) & 2.21E-2(3.42E-3) & 5.88E-2(1.02E-2) & 8.20E-2(1.32E-2) & 3.09E+0(1.86E-2) & 1.94E-2(1.98E-3) & 3.77E-1(2.17E-1) & 6.77E-1(3.93E-2) \\
 & 2d-LT & \textbf{1.36E-1(4.34E-3)} & 9.98E-1(6.00E-5) & 9.98E-1(1.42E-4) & 9.98E-1(1.47E-4) & 9.99E-1(1.01E-3) & 9.98E-1(2.28E-4) & 9.99E-1(5.69E-5) & 9.98E-1(8.62E-4) & 9.98E-1(0.00E+0) & 9.98E-1(1.27E-4) & 9.98E-1(8.58E-5) & 1.01E+0(7.75E-4) \\ \hline
NS & 2d-C & \textbf{6.90E-3(7.17E-4)} & 5.08E-2(3.06E-3) & 1.84E-1(1.52E-2) & NA & 2.44E-1(3.05E-2) & 5.54E-1(1.24E-2) & 9.86E-1(3.16E-1) & 9.43E-2(3.24E-3) & 1.98E-1(7.81E-2) & 4.42E-2(7.38E-3) & 3.78E-2(8.71E-3) & 1.18E-1(3.10E-2) \\
 & 2d-CG & \textbf{9.62E-2(1.06E-3)} & 1.77E-1(1.00E-2) & 4.22E-1(8.72E-3) & 4.12E-1(6.93E-3) & 3.69E-1(2.46E-2) & 4.65E-1(4.44E-3) & 6.23E-1(8.86E-2) & 2.36E-1(1.15E-2) & 9.95E-1(3.50E-4) & 1.25E-1(1.42E-2) & 2.40E-1(8.01E-2) & 5.92E+0(5.65E-4) \\
 & 2d-LT & \textbf{8.51E-1(8.00E-4)} & 9.88E-1(1.86E-3) & 9.98E-1(4.68E-4) & 9.97E-1(3.64E-4) & 9.95E-1(6.66E-4) & 1.00E+0(2.46E-4) & 9.99E-1(9.27E-4) & 9.90E-1(3.60E-4) & 1.00E+0(1.40E-4) & 9.90E-1(3.78E-3) & 9.96E-1(2.68E-3) & 1.00E+0(1.38E-3) \\ \hline
Wave & 1d-C & \textbf{1.11E-2(2.87E-4)} & 5.87E-1(9.20E-2) & 2.78E-1(8.86E-3) & 3.49E-1(2.02E-2) & 9.42E-2(9.13E-3) & 5.40E-1(1.74E-2) & 1.15E-1(1.91E-2) & 5.60E-1(1.69E-2) & 1.41E+0(1.30E-1) & 4.38E-1(1.40E-2) & 6.82E-1(1.08E-1) & 6.55E-1(4.86E-2) \\
 & 2d-CG & \textbf{4.95E-1(1.23E-2)} & 1.96E+0(3.83E-1) & 1.78E+0(8.89E-2) & 1.58E+0(1.15E-1) & 2.34E+0(1.14E-1) & 1.16E+0(1.16E-1) & 1.09E+0(1.54E-1) & 7.22E-1(1.63E-2) & 1.08E+0(1.25E-1) & 7.45E-1(2.15E-2) & 7.08E-1(9.13E-3) & 1.15E+0(1.03E-1) \\
 & 2d-MS & \textbf{7.46E-2(8.35E-3)} & 2.04E+0(7.38E-1) & 1.10E+0(4.25E-2) & 1.08E+0(6.01E-2) & 1.13E+0(4.91E-2) & 2.08E+0(7.45E-1) & 1.07E+0(1.40E-2) & 1.11E+0(1.91E-2) & 1.05E+0(1.00E-2) & 1.17E+0(4.66E-2) & 1.12E+0(8.62E-2) & 1.29E+0(2.81E-2) \\ \hline
Chaotic & GS & \textbf{4.18E-3(6.93E-4)} & 3.45E-1(4.57E-1) & 1.29E-1(1.54E-1) & 2.01E-2(5.99E-5) & 1.11E-1(4.79E-2) & 2.98E-2(6.44E-3) & 2.00E-2(6.12E-5) & 2.72E-1(1.79E-1) & 1.04E+0(3.04E-1) & 2.07E-2(9.19E-4) & 1.16E-1(1.31E-1) & 5.06E-2(1.87E-2) \\
 & KS & 8.70E-1(8.52E-3) & 9.44E-1(8.57E-4) & 8.95E-1(2.99E-2) & \textbf{8.60E-1(3.48E-3)} & 8.64E-1(3.31E-3) & 9.42E-1(8.75E-4) & 8.73E-1(8.40E-3) & 9.36E-1(6.12E-3) & 8.88E-1(9.92E-3) & 9.39E-1(3.25E-3) & 9.44E-1(9.86E-3) & 9.85E-1(3.35E-2) \\ \hline
\end{tabular}
}
\label{main-l1re}
\end{minipage}
\end{table}

\clearpage
\eject \pdfpagewidth=45cm \pdfpageheight=22cm
\thispagestyle{empty}

\begin{table}[!ht]
\centering
\begin{minipage}{0.85\pdfpagewidth}
\tiny
\renewcommand{\arraystretch}{1.7}
\caption{Mean (std) of MSE for main experiments.}
\label{tab:14}
  \resizebox{\linewidth}{!}{
\begin{tabular}{cc|cccccccccccc}
\hline
MSE & Name & \multicolumn{1}{c|}{\multirow{2}{*}{Ours}} & \multicolumn{2}{c|}{Vanilla} & \multicolumn{3}{c|}{Loss Reweighting/Sampling} & \multicolumn{1}{c|}{Optimizer} & \multicolumn{2}{c|}{Loss functions} & \multicolumn{3}{c}{Architecture} \\ \cline{4-14} 
-- &  & \multicolumn{1}{c|}{} & PINN & \multicolumn{1}{c|}{PINN-w} & LRA & NTK & \multicolumn{1}{c|}{RAR} & \multicolumn{1}{c|}{MultiAdam} & gPINN & \multicolumn{1}{c|}{vPINN} & LAAF & GAAF & FBPINN \\ \hline
\multirow{2}{*}{Burgers} & 1d-C & \textbf{7.52E-5(1.53E-6)} & 7.90E-5(1.78E-5) & 2.64E-4(8.69E-5) & 3.03E-4(2.62E-4) & 1.30E-4(5.19E-5) & 5.78E-4(6.31E-4) & 9.68E-4(5.51E-4) & 1.77E-2(5.58E-3) & 5.13E-3(1.90E-3) & 1.80E-4(1.35E-4) & 3.00E-4(1.56E-4) & 1.53E-2(1.03E-2) \\
 & 2d-C & 2.31E-1(7.11E-2) & 1.69E-1(7.86E-4) & 1.17E-1(3.41E-3) & \textbf{1.09E-1(4.84E-3)} & 1.22E-1(4.22E-3) & 1.92E-1(5.07E-5) & 1.79E-1(9.36E-3) & 1.72E-1(1.31E-4) & 7.08E-1(5.16E-2) & 1.26E-1(1.54E-2) & 1.41E-1(1.12E-2) & -- \\ \hline
\multirow{4}{*}{Poisson} & 2d-C & \textbf{7.22E-6(1.03E-5)} & 1.17E-1(2.98E-3) & 3.09E-4(1.25E-4) & 7.24E-3(9.95E-3) & 5.00E-5(5.33E-5) & 1.19E-1(2.55E-3) & 1.79E-4(8.84E-5) & 1.15E-1(6.22E-3) & 4.86E-2(4.43E-3) & 1.39E-1(5.67E-3) & 9.38E-2(1.91E-2) & 7.89E-4(2.17E-4) \\
 & 2d-CG & \textbf{9.29E-6(7.92E-6)} & 1.28E-1(1.03E-3) & 1.17E-3(1.83E-4) & 6.13E-4(2.31E-4) & 6.99E-5(3.50E-5) & 1.32E-1(3.23E-3) & 2.73E-2(1.92E-2) & 1.98E-1(2.28E-3) & 2.50E-2(3.80E-4) & 7.67E-2(2.73E-3) & 1.77E-1(8.70E-2) & 4.84E-4(9.87E-5) \\
 & 3d-CG & \textbf{1.46E-4(5.29E-6)} & 2.64E-2(2.67E-3) & 1.18E-2(1.97E-3) & 9.51E-4(6.51E-4) & 7.54E-2(7.86E-5) & 2.81E-2(5.15E-3) & 1.16E-2(4.42E-3) & 2.01E-2(4.93E-3) & 4.58E-2(8.04E-5) & 2.82E-2(2.62E-3) & 2.16E-2(5.87E-3) & 4.63E-2(9.28E-3) \\
 & 2d-MS & \textbf{2.75E-2(9.75E-4)} & 2.67E+0(9.04E-2) & 3.90E+0(7.16E-2) & 4.28E+0(6.83E-1) & 3.77E+0(9.98E-2) & 2.80E+0(1.87E-1) & 2.36E+0(3.15E-1) & 2.56E+0(1.43E-1) & 6.09E+0(5.46E-1) & 1.83E+0(3.00E-1) & 5.87E+0(8.72E-1) & 6.68E+0(8.23E-4) \\ \hline
Heat & 2d-VC & \textbf{3.95E-5(1.54E-5)} & 4.00E-2(4.94E-3) & 2.19E-3(3.21E-4) & 1.76E-3(1.43E-5) & 1.79E-3(9.80E-5) & 3.67E-2(1.42E-3) & 9.14E-3(3.13E-3) & 1.89E-1(9.44E-2) & 3.23E-2(2.26E-2) & 1.74E-2(4.35E-3) & 2.93E-2(7.12E-3) & 3.56E-2(1.71E-4) \\
 & 2d-MS & \textbf{2.59E-5(1.80E-5)} & 1.09E-4(4.94E-5) & 1.60E-3(3.35E-4) & 2.25E-4(1.22E-4) & 5.27E-5(1.18E-5) & 1.54E-4(4.17E-5) & 1.51E-3(1.25E-3) & 3.43E-4(1.87E-5) & 2.57E-2(2.22E-3) & 1.57E-4(8.06E-5) & 3.10E-2(1.15E-2) & 2.17E-4(2.47E-5) \\
 & 2d-CG & \textbf{3.34E-4(5.02E-6)} & 2.09E-3(9.69E-4) & 3.15E-2(2.08E-3) & 2.32E-2(1.59E-3) & 2.02E-2(4.15E-3) & 1.12E-3(2.65E-4) & 7.79E-3(2.63E-3) & 1.34E-2(4.13E-3) & 1.16E+1(9.04E-2) & 8.53E-4(9.74E-5) & 3.94E-1(2.71E-1) & 5.61E-1(5.96E-2) \\
 & 2d-LT & \textbf{5.09E-2(4.88E-3)} & 1.14E+0(2.38E-5) & 1.13E+0(1.82E-4) & 1.14E+0(1.67E-4) & 1.14E+0(6.41E-4) & 1.14E+0(3.55E-4) & 1.14E+0(8.74E-5) & 1.14E+0(2.23E-4) & 1.14E+0(0.00E+0) & 1.14E+0(2.20E-4) & 1.14E+0(3.27E-4) & 1.16E+0(2.83E-4) \\ \hline
NS & 2d-C & \textbf{3.22E-6(1.23E-6)}  & 4.19E-5(2.00E-6) & 4.03E-4(6.45E-5) & NA & 7.56E-4(1.90E-4) & 4.18E-3(2.05E-4) & 1.07E-2(5.67E-3) & 1.13E-4(8.77E-6) & 5.30E-4(3.50E-4) & 2.33E-5(4.71E-6) & 2.67E-5(4.71E-6) & 1.37E-4(7.24E-5) \\
 & 2d-CG & \textbf{2.15E-4(8.21E-6)} & 6.94E-4(6.45E-5) & 5.19E-3(2.43E-4) & 5.40E-3(2.49E-4) & 4.22E-3(5.82E-4) & 5.45E-3(2.13E-5) & 9.32E-3(3.09E-3) & 1.16E-3(8.97E-5) & 1.06E+0(1.61E-2) & 3.37E-4(6.60E-5) & 1.72E-3(1.33E-3) & 3.34E+0(2.97E-5) \\
 & 2d-LT & \textbf{4.30E+2(4.00E-1)} & 5.06E+2(1.21E+0) & 5.10E+2(3.40E-1) & 5.10E+2(4.13E-1) & 5.09E+2(6.15E-1) & 5.10E+2(3.42E-1) & 5.10E+2(2.23E-1) & 5.05E+2(7.30E-1) & 5.11E+2(1.76E-2) & 5.06E+2(1.82E+0) & 5.11E+2(2.99E+0) & 5.15E+2(1.77E+0) \\ \hline
Wave & 1d-C & \textbf{5.08E-5(1.16E-6)} & 1.11E-1(3.66E-2) & 2.54E-2(1.61E-3) & 4.08E-2(4.31E-3) & 3.01E-3(4.82E-4) & 9.07E-2(6.02E-3) & 4.68E-3(1.28E-3) & 9.66E-2(5.85E-3) & 6.17E-1(1.19E-1) & 6.03E-2(2.87E-3) & 1.48E-1(4.44E-2) & 1.39E-1(1.97E-2) \\
 & 2d-CG & \textbf{1.59E-2(5.16E-4)} & 1.64E-1(6.13E-2) & 1.28E-1(1.13E-2) & 1.03E-1(1.46E-2) & 2.17E-1(2.05E-2) & 6.25E-2(1.17E-2) & 5.59E-2(1.29E-2) & 3.09E-2(8.98E-4) & 5.24E-2(9.01E-3) & 3.49E-2(3.38E-3) & 2.99E-2(4.68E-4) & 5.78E-2(7.99E-3) \\
 & 2d-MS & \textbf{2.20E+3(4.38E+2)} & 1.30E+5(4.25E+4) & 7.35E+4(1.68E+3) & 7.34E+4(1.97E+3) & 7.69E+4(4.55E+3) & 1.33E+5(4.47E+4) & 7.15E+4(8.04E+2) & 7.27E+4(5.47E+2) & 1.13E+2(1.46E+2) & 7.91E+4(2.55E+3) & 7.98E+4(8.00E+3) & 8.95E+5(1.15E+4) \\ \hline
Chaotic & GS & \textbf{1.04E-4(3.69E-5)} & 1.00E-1(1.35E-1) & 1.64E-2(1.70E-2) & 4.32E-3(4.07E-6) & 2.59E-2(1.44E-2) & 4.40E-3(8.83E-5) & 4.32E-3(1.11E-6) & 3.62E-2(2.28E-2) & 4.00E-1(2.33E-1) & 4.32E-3(4.71E-6) & 1.69E-2(1.79E-2) & 5.16E-3(1.64E-3) \\
 & KS & \textbf{1.03E+0(4.00E-3)} & 1.16E+0(2.95E-3) & 1.11E+0(5.07E-2) & 1.04E+0(6.20E-3) & 1.06E+0(1.09E-2) & 1.16E+0(1.98E-3) & 1.05E+0(1.04E-2) & 1.12E+0(8.67E-3) & 1.05E+0(2.50E-3) & 1.16E+0(4.50E-3) & 1.14E+0(2.33E-2) & 1.16E+0(5.28E-2) \\ \hline
\end{tabular}
}
\label{main-mse}

\end{minipage}
\end{table}

\end{document}